\lstdefinelanguage{json}{
    basicstyle=\ttfamily\small,
    numbers=left,
    numberstyle=\tiny,
    stepnumber=1,
    numbersep=5pt,
    showstringspaces=false,
    breaklines=true,
    frame=lines,
    backgroundcolor=\color{gray!10},
    literate=
     *{0}{{{\color{blue}0}}}{1}
      {1}{{{\color{blue}1}}}{1}
      {2}{{{\color{blue}2}}}{1}
      {3}{{{\color{blue}3}}}{1}
      {4}{{{\color{blue}4}}}{1}
      {5}{{{\color{blue}5}}}{1}
      {6}{{{\color{blue}6}}}{1}
      {7}{{{\color{blue}7}}}{1}
      {8}{{{\color{blue}8}}}{1}
      {9}{{{\color{blue}9}}}{1}
      {:}{{{\color{red}:}}}{1}
      {,}{{{\color{red},}}}{1}
      {"}{{{\color{orange}"}}}{1}
      {[}{{{\color{green}[}}}{1}
      {]}{{{\color{green}]}}}{1}
      {\{}{{{\color{green}\{}}}{1}
      {\}}{{{\color{green}\}}}}{1}
}
\newtcolorbox{alprompt}[1]{
        boxrule = 1pt,
        fontupper = \small\tt,
        fonttitle = \bf\color{black},
        arc = 2pt,
        rounded corners,
        colframe = black,
        colbacktitle = white!97!yellow,
        colback = white!97!yellow,
        title = #1,
}
\newtcolorbox{keyfindingbox}[1]{%
  enhanced,
  colback=gray!5,
  colframe=gray!60!black,
  colbacktitle=gray!40!black,
  coltitle=white,
  fonttitle=\bfseries,
  boxed title style={sharp corners},
  sharp corners,
  title=#1
}
\definecolor{darkgreen}{rgb}{0.0, 0.5, 0.0}
\definecolor{darkgray}{gray}{0.4}
\definecolor{maroon}{rgb}{0.5, 0.0, 0.0}
\definecolor{navy}{rgb}{0.0, 0.0, 0.5}
\definecolor{teal}{rgb}{0.0, 0.5, 0.5}
\definecolor{deepblue}{RGB}{41, 128, 185}
\definecolor{mylightgreen}{RGB}{144,238,144}
\definecolor{mylightblue}{RGB}{173,216,230}
\definecolor{outerboxcolor}{gray}{0.90} 
\definecolor{innerboxcolor}{rgb}{1,1,1}
\definecolor{nred}{RGB}{196, 38, 11}
\definecolor{ngreen}{RGB}{18, 141, 21}
\definecolor{nblue}{RGB}{41, 52, 190}
\theoremstyle{plain}
\newtheorem{theorem}{Theorem}[section]
\crefname{theorem}{Theorem}{Theorems}
\Crefname{theorem}{Theorem}{Theorems}
\newaliascnt{proposition}{theorem}
\crefname{proposition}{Proposition}{Propositions}
\Crefname{proposition}{Proposition}{Propositions}
\newaliascnt{lemma}{theorem}
\newtheorem{lemma}[lemma]{Lemma}
\crefname{lemma}{Lemma}{Lemmas}
\Crefname{lemma}{Lemma}{Lemmas}
\newaliascnt{corollary}{theorem}
\newtheorem{corollary}[corollary]{Corollary}
\crefname{corollary}{Corollary}{Corollaries}
\Crefname{corollary}{Corollary}{Corollaries}
\theoremstyle{definition}
\newaliascnt{definition}{theorem}
\crefname{definition}{Definition}{Definitions}
\Crefname{definition}{Definition}{Definitions}
\newaliascnt{assumption}{theorem}
\newtheorem{assumption}[assumption]{Assumption}
\crefname{assumption}{Assumption}{Assumptions}
\Crefname{assumption}{Assumption}{Assumptions}
\theoremstyle{remark}
\newaliascnt{remark}{theorem}
\newtheorem{remark}[remark]{Remark}
\crefname{remark}{Remark}{Remarks}
\Crefname{remark}{Remark}{Remarks}
\DeclareMathOperator*{\argmin}{arg\,min}
\newcommand{\eg}{\textit{e.g.}}
\renewcommand{\@fnsymbol}[1]{\ensuremath{\ifcase#1\or \dag\or \ddag\or \S\or \P\or \|\or **\fi}}
\title{
  \begin{minipage}{1.0\textwidth}
Group Distributionally Robust Optimization-Driven Reinforcement Learning for LLM Reasoning   
\end{minipage}
}
\author{
Kishan Panaganti, Zhenwen Liang, Wenhao Yu,
Haitao Mi, Dong Yu\\ 
\vspace{-1em}
Tencent AI Lab in Bellevue, WA, USA\\
Correspondence to: \texttt{kpb@global.tencent.com}}
\begin{document}

\maketitle
\vspace{-2em}
\begin{abstract}
Recent progress in Large Language Model (LLM) reasoning is increasingly driven by the refinement of post-training loss functions and alignment strategies\footnotemark. However, standard Reinforcement Learning (RL) paradigms like Group Relative Policy Optimization (GRPO) remain constrained by \emph{static uniformity}: uniform prompt sampling and a fixed number of rollouts per prompt. For heterogeneous, heavy-tailed reasoning data, this creates structural inefficiencies that waste compute on already-solved patterns while under-training the long tail of hard problems. To address this, we propose \textbf{Multi-Adversary Group Distributionally Robust Optimization (GDRO)}, an optimization-first framework that moves beyond uniform reasoning models by dynamically adapting the training distribution.

We introduce an \emph{Online Difficulty Classifier} that partitions prompts into \emph{dynamic pass@k difficulty groups}. We then propose \textbf{two independent GDRO games} for post-training: (1) \textbf{Prompt-GDRO}, which employs an EMA-debiased multiplicative-weights bandit sampler to target the \textit{intensive} difficulty margin and upweight persistently hard groups without frequency bias; and (2) \textbf{Rollout-GDRO}, which uses a shadow-price controller to reallocate rollouts across groups, maximizing gradient variance reduction on hard tasks under a fixed \emph{mean} budget (compute-neutral). We provide no-regret guarantees for Prompt-GDRO (via an entropy-regularized GDRO surrogate) and a variance-proxy analysis motivating a square-root optimal rollout allocation for Rollout-GDRO. We validate our framework on the DAPO 14.1k dataset using Qwen3-Base models. Prompt-GDRO and Rollout-GDRO achieve average relative gains of \textbf{+10.6\%} and \textbf{+10.1\%}, respectively, in pass@8 accuracy across 1.7B, 4B, and 8B scales compared to the GRPO baseline. Qualitative analysis shows an emergent curriculum: the adversaries shift resources to the evolving reasoning frontier, enhancing the reasoning model's performance.
\end{abstract}

\footnotetext{\scriptsize Adam Marblestone (paraphrased) on Dwarkesh's podcast (\href{https://www.youtube.com/watch?v=_9V_Hbe-N1A}{YouTube}): ``The brain's secret sauce is its loss functions, not its architecture.'' }

\begin{figure*}[h]
    \centering
    \vspace{-2em}
    \includegraphics[width=\textwidth]{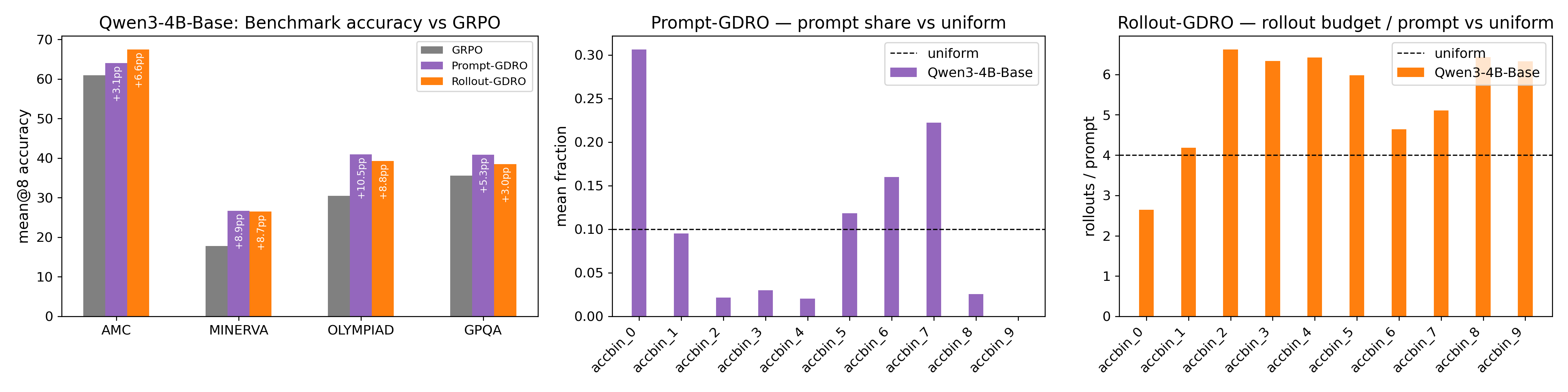}
    \vspace{-2em}
    \caption{\textbf{Beyond Uniform Reasoning—A Multi-Adversary Post-Training Framework.} 
    Plots on the right represent training steps tail averages ($\ge$60th percentile) capturing the curriculum.
    (Left) Our framework significantly outperforms the standard GRPO baseline across mathematical reasoning benchmarks via dynamic adaptation. 
    (Center) \textbf{Prompt-GDRO:} The adversary learns a non-uniform curriculum. Instead of uniform sampling (dashed line), probability mass (purple bars) shifts to the ``reasoning frontier'' (bins 6--8), targeting the specific difficulty level where learning is most efficient. 
    (Right) \textbf{Rollout-GDRO:} The adversary optimizes compute utility. Under a fixed global budget (dashed line), it reallocates rollouts (orange bars) from solved tasks (bin 0) to high-variance tasks, scaling exploration with difficulty. Note: Bars represent rollout count per prompt (policy intensity).}
    \label{fig:frontpage_results}
\end{figure*}

\section{Introduction}
\label{sec:intro}

\begin{figure}[t]
\centering
\vspace{-1em}
\includegraphics[width=1\linewidth]{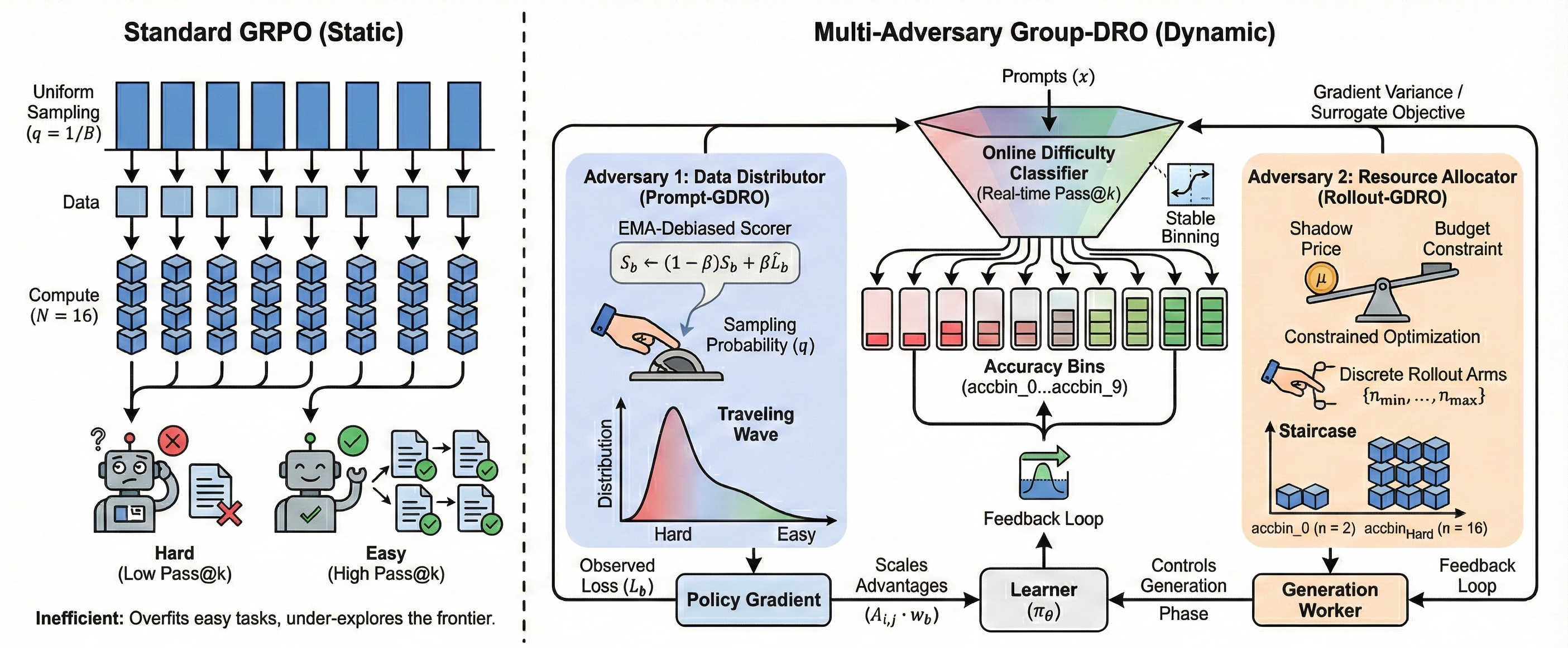}
\vspace{-2em}
\caption{\textbf{Conceptual Illustration: Static Uniformity vs. Multi-Adversary GDRO (Dynamic).} (Left) Standard GRPO samples prompts uniformly ($q=1/B$) and assigns a fixed number of rollouts (schematically $N=16$), causing it to overfit easy tasks while under-exploring the frontier. (Right) Our framework employs an \textbf{Online Difficulty Classifier} to dynamically partition prompts based on real-time pass@k. It introduces two \emph{independent} adversarial feedback loops (not coupled): (1) \textbf{Prompt-GDRO} (Data Distributor) uses an EMA-debiased scorer to shift the sampling distribution toward hard bins, creating a ``traveling wave'' of difficulty; (2) \textbf{Rollout-GDRO} (Resource Allocator) uses a shadow price $\mu$ to solve a constrained optimization problem, allocating discrete rollout arms ($n_{\min} \dots n_{\max}$) to maximize gradient variance reduction on high-uncertainty tasks.}
\vspace{-1.5em}
\label{fig:pull_figure}
\end{figure}

The capabilities of Large Language Models (LLMs) in complex reasoning are increasingly shaped not only by architectural scaling, but by the design of post-training objectives and alignment pipelines. Reinforcement Learning (RL), particularly methods like Proximal Policy Optimization (PPO) \citep{schulman2017proximal} and Group Relative Policy Optimization (GRPO) \citep{shao2024deepseekmath}, has emerged as a standard approach for aligning models with rigorous logical constraints. By optimizing for sparse verifiable rewards or process-based supervision, these methods have enabled significant breakthroughs in mathematical problem solving \citep{wang2023mathshepherd} and code generation.

At a high level, our perspective is that reasoning post-training exposes \emph{two} distinct sources of non-uniformity:
(i) \emph{which prompts} remain unsolved as the model improves (a shifting difficulty landscape), and
(ii) \emph{how much exploration} different prompts require to yield low-variance learning signals.
Standard pipelines treat both knobs as static---uniform prompt sampling and fixed rollouts---which we argue is mismatched to the heavy-tailed structure of reasoning.

Recent discussions in the broader ML community further motivate moving beyond uniformity from a \emph{data value} perspective. Finzi et al.~\citep{finzi2026epiplexity} propose \emph{epiplexity}---a notion of information that aims to quantify the structural content that a computationally bounded learner can extract from data (distinct from time-bounded entropy/noise). From this viewpoint, the ``value'' of a prompt is not determined by its frequency, but by whether it still contains learnable structure \emph{for the current policy} under a fixed compute budget. This perspective is aligned with our Prompt-GDRO mechanism: by steering sampling toward the evolving reasoning frontier, we concentrate updates on prompts that remain high-value for learning, rather than repeatedly training on already-solved, low-value examples.

Concurrently, empirical analyses of RL with verifiable rewards suggest that learning progress can be dominated by a small fraction of high-entropy ``decision'' points, challenging the implicit assumption that uniform averaging over all tokens/prompts is the most compute-efficient choice \citep{wang2025beyond8020}. Related work on ``thinking'' and the accuracy--compute Pareto frontier emphasizes that additional computation is most beneficial when applied selectively, and can be inefficient when applied uniformly across instances \citep{madaan2025rethinking_thinking}. Taken together, these perspectives echo a community intuition that \emph{both} data selection and compute allocation should be adaptive \citep{finzi2026epiplexity_thread}---precisely the two control knobs instantiated by our Prompt-GDRO and Rollout-GDRO adversaries.

However, prevalent RL pipelines rely on a fundamental assumption of \emph{static uniformity}: they sample prompts
uniformly from the training distribution and allocate a fixed computational budget (number of rollouts) to every prompt.
We argue that this rigidity creates structural inefficiencies. As formalized in our analysis (Section
\ref{sec:analysis}), reasoning datasets are inherently heterogeneous, composed of disjoint sub-domains (e.g., elementary
algebra vs. Olympiad number theory) with vastly different difficulty profiles. Under uniform sampling, optimization is
dominated by the most frequent, often easier patterns, so learning signal concentrates on the ``easy core'' while errors
persist in a long, difficult tail. A long line of supervised learning work has addressed this asymmetry by making
training explicitly \emph{difficulty-aware}---from curriculum and self-paced learning that schedule examples from easy to
hard \citep{bengio2009curriculum,kumar2010selfpaced}, to boosting and hard-example mining that upweight misclassified or
high-loss instances \citep{freund1997decision,shrivastava2016training}, and focal losses that downweight well-classified
examples \citep{lin2017focal}. This motivates viewing robustness through \emph{difficulty-defined} groups and optimizing
worst-bin performance in the spirit of GDRO \citep{sagawa2020distributionally}. Furthermore, the value of computational
exploration is non-uniform. In reasoning tasks, ``solved'' prompts yield low-variance gradients, while high-entropy
``frontier'' prompts require massive exploration to reduce gradient variance \citep{setlur2024learning}. A static budget
allocation fails to capture this dynamic, wasting resources on redundant verification while under-exploring critical
failure modes.

To address these limitations, we propose a \textbf{Multi-Adversary Group Distributionally Robust Optimization (GDRO)} framework. Motivated by the biological hypothesis that intelligent systems distinguish between a core representation learning module and a specialized ``steering subsystem'' that optimizes cost functions \citep{marblestone2016toward}, we implement a dynamic, data-agnostic grouping mechanism. Specifically, we replace static uniformity with an \emph{Online Difficulty Classifier} that partitions data based on real-time empirical error rates (pass@k), effectively allowing the optimization process to ``steer'' itself. We then formulate post-training as a zero-sum game and instantiate two complementary adversaries via the \textbf{GDRO-EXP3P} algorithm \citep{soma2022optimal}. Importantly, \emph{these adversaries are designed as independent modules}: Prompt-GDRO plays a GDRO reweighting game against the learner, while Rollout-GDRO plays a separate constrained compute-allocation game. In this work we analyze and evaluate the two games in isolation (no coupling); jointly coupling both adversaries into a single multi-time-scale system is left to future work.

Our contributions are summarized as follows:
\begin{enumerate}[nosep,leftmargin=*]
    \item \textbf{Prompt-GDRO (A Data Adversary):} We employ an adversarial reweighting rule that targets the \textit{intensive} difficulty margin (mean loss) instead of uniform sampling. We introduce an \textit{EMA-Debiased} scoring rule to prevent the adversary from succumbing to frequency bias, ensuring that rare, high-difficulty groups are upweighted effectively. This acts as a regularizer against over-optimizing the easy core, improving worst-bin robustness as the difficulty frontier shifts. \emph{Theory:} In Section~\ref{sec:analysis} (proofs in Appendix~B), we show that exponential-weights Prompt-GDRO corresponds to optimizing an entropy-regularized GDRO surrogate (a log-sum-exp ``soft worst-group'' objective) and admits a no-regret game interpretation.
    \item \textbf{Rollout-GDRO (A Compute Adversary):} We challenge the convention of fixed rollout budgets (e.g., $n=4$). We formulate rollout allocation as a constrained resource allocation game where a second adversary dynamically assigns rollout counts to maximize gradient variance reduction on hard tasks, subject to a global \emph{mean-rollout} compute constraint. This enables the model to efficiently explore the solution space of complex problems without increasing the total training budget. \emph{Theory:} In Section~\ref{sec:analysis} (proofs in Appendix~B), we derive a variance proxy for GRPO rollouts and show that the variance-optimal compute-neutral allocation obeys a square-root law, motivating the shadow-price controller used by Rollout-GDRO.
    \item \textbf{Empirical results.} On the DAPO 14.1k reasoning dataset with Qwen3-Base models (1.7B/4B/8B), Prompt-GDRO improves pass@8 by \textbf{+9.74\%}, \textbf{+13.13\%}, and \textbf{+8.96\%}, and Rollout-GDRO by \textbf{+10.64\%}, \textbf{+10.59\%}, and \textbf{+9.20\%} over GRPO. Qualitative analyses reveal an emergent curriculum that shifts sampling weight and rollout budget toward the evolving reasoning frontier.
\end{enumerate}

\section{Preliminaries}
\label{sec:preli}

\subsection{Reinforcement Learning for Reasoning}
\label{sec:preli_rl}

We formalize post-training for reasoning as Reinforcement Learning (RL) over an autoregressive language policy.
Let $x \sim \mathcal{D}$ denote a prompt and let $y=(y_1,\dots,y_L)$ denote a response sampled from a policy
$\pi_\theta(\cdot\mid x)$ parameterized by $\theta$. The conditional sequence probability factorizes as
\begin{equation}
    \pi_\theta(y\mid x) \;=\; \prod_{t=1}^{L} \pi_\theta\!\left(y_t \mid x, y_{<t}\right).
\end{equation}
The RL objective is to maximize expected reward
\begin{equation}
    J(\theta) \;=\; \mathbb{E}_{x \sim \mathcal{D},\, y \sim \pi_\theta(\cdot\mid x)}\!\left[r(x,y)\right],
\end{equation}
where $r(x,y)$ is a task-dependent, generally non-differentiable reward. In mathematical reasoning,
$r(x,y)$ is typically sparse (e.g., binary correctness) or semi-sparse (e.g., verifier-based signals).

\subsection{Group-Relative Policy Optimization (GRPO)}
\label{sec:preli_grpo}

Group-Relative Policy Optimization (GRPO) \citep{shao2024deepseekmath} is a computationally efficient alternative
to Proximal Policy Optimization (PPO) \citep{schulman2017proximal}. GRPO eliminates a learned value critic by
constructing a baseline from a within-prompt group of rollouts.

\begin{remark}[Two notions of ``group'']
Throughout the paper, the term ``group'' can refer to two different objects:
(i) a \emph{GRPO rollout group} (multiple rollouts for a fixed prompt), and
(ii) a \emph{GDRO group/bin} (a subset of prompts induced by a grouping rule, e.g., an online difficulty bin).
We explicitly index GRPO rollouts by $(i,j)$ and GDRO bins by $b \in \{1,\dots,B\}$.
\end{remark}

\begin{remark}[Notation: $n$ vs.\ $k$]
We use $n$ for the GRPO rollout-group size (train-time rollouts per prompt), and $k$ for ``best-of-$k$'' statistics such as pass@k and mean@k. In our experiments, we use $n=4$ and $k=8$.
\end{remark}

\noindent\textbf{Group sampling and advantage estimation.}
For each prompt $x_i$, we sample $n$ responses $\{y_{i,j}\}_{j=1}^n$ from a behavior policy $\pi_{\theta_{\text{old}}}$
(the policy used to generate rollouts). Each response receives a scalar reward
$r_{i,j} \triangleq r(x_i,y_{i,j})$. GRPO computes a group-relative advantage by standardizing rewards within
the rollout group:
\begin{equation}
    A_{i,j} \;=\; \frac{r_{i,j} - \mu_i}{\sigma_i + \varepsilon},
    \label{eq:grpo_advantage}
\end{equation}
where $\mu_i \triangleq \frac{1}{n}\sum_{j=1}^n r_{i,j}$, $\sigma_i \triangleq \sqrt{\frac{1}{n}\sum_{j=1}^n (r_{i,j}-\mu_i)^2}$,
and $\varepsilon>0$ is a small constant for numerical stability. The scalar advantage $A_{i,j}$ is applied
token-wise to all tokens in $y_{i,j}$ in the surrogate objective below.

\noindent\textbf{The clipped surrogate objective.}
Let
\begin{equation}
    \rho_{i,j,t}(\theta) \;\triangleq\; \frac{\pi_\theta\!\left(y_{i,j,t} \mid x_i, y_{i,j,<t}\right)}
    {\pi_{\theta_{\text{old}}}\!\left(y_{i,j,t} \mid x_i, y_{i,j,<t}\right)}
\end{equation}
denote the token-level importance ratio. The GRPO/PPO-style clipped surrogate for response $y_{i,j}$ is
\begin{equation}
    \mathcal{J}^{\text{CLIP}}_{i,j}(\theta)
    \;=\; \frac{1}{L_{i,j}} \sum_{t=1}^{L_{i,j}}
    \min\!\Big(
        \rho_{i,j,t}(\theta)\, A_{i,j},\;
        \text{clip}\!\big(\rho_{i,j,t}(\theta),\, 1-\epsilon,\, 1+\epsilon\big)\, A_{i,j}
    \Big),
    \label{eq:l_clip}
\end{equation}
where $\epsilon>0$ is the PPO clipping parameter.

\noindent\textbf{Observable loss signal.}
For our robust optimization controllers, we require a scalar ``loss-like'' signal per generated response.
We define the per-response loss as the negative KL-regularized surrogate:
\begin{equation}
    \ell_{i,j}(\theta)
    \;=\;
    -\mathcal{J}^{\text{CLIP}}_{i,j}(\theta)
    \;+\;
    \beta_{\text{KL}}\, D_{\text{KL}}\!\left(\pi_\theta(\cdot\mid x_i) \parallel \pi_{\text{ref}}(\cdot\mid x_i)\right),
    \label{eq:grpo_loss_signal}
\end{equation}
where $\beta_{\text{KL}}>0$ controls the KL penalty to a fixed reference policy $\pi_{\text{ref}}$.
In our experiments we operate in a \emph{zero-SFT} setting, so $\pi_{\text{ref}}$ is simply the \emph{initial base checkpoint}
(i.e., the same Qwen3-\{1.7B,4B,8B\}-Base model we start RL from) and is held frozen during training.
In practice, $D_{\text{KL}}(\pi_\theta(\cdot\mid x_i)\,\|\,\pi_{\text{ref}}(\cdot\mid x_i))$ is evaluated on the sampled responses
using token-level log-probabilities under $\pi_\theta$ and $\pi_{\text{ref}}$.
Finally, we define the \emph{prompt-level} loss as the mean over rollouts:
\begin{equation}
    \ell(x_i;\theta) \;\triangleq\; \frac{1}{n} \sum_{j=1}^{n} \ell_{i,j}(\theta).
\end{equation}
This prompt-level signal will be aggregated by bins and fed to the GDRO adversaries.

\subsection{Group Distributionally Robust Optimization}
\label{sec:preli_gdro}

Standard Empirical Risk Minimization (ERM) minimizes average loss under the empirical mixture, which can be dominated by
high-frequency and/or easy instances. A long line of supervised learning work addresses this imbalance by making training
explicitly \emph{difficulty-aware}---from curriculum and self-paced learning that schedule examples from easy to hard
\citep{bengio2009curriculum,kumar2010selfpaced}, to boosting and hard-example mining that upweight misclassified or
high-loss instances \citep{freund1997decision,shrivastava2016training}, and focal losses that downweight well-classified
examples \citep{lin2017focal}. GDRO provides a complementary, principled objective: treat subpopulations (here, difficulty
bins) as groups and optimize worst-group risk.

We adopt \textbf{Group Distributionally Robust Optimization} (GDRO) \citep{sagawa2020distributionally}. Assume
prompts are partitioned into $B$ disjoint groups (domains) $\{\mathcal{D}_1,\dots,\mathcal{D}_B\}$. Let
\begin{equation}
    L_b(\theta) \;\triangleq\; \mathbb{E}_{x\sim\mathcal{D}_b}\!\left[\ell(x;\theta)\right]
\end{equation}
denote the expected prompt-level loss for group $b$. GDRO optimizes worst-group performance by solving
\begin{equation}
    \min_{\theta}\;\max_{q\in\Delta_B}\;\sum_{b=1}^{B} q_b\, L_b(\theta),
    \label{eq:gdro_general}
\end{equation}
where $\Delta_B$ is the probability simplex. Following \citet{soma2022optimal}, \eqref{eq:gdro_general} admits a
zero-sum game interpretation between a learner ($\theta$) and an adversary ($q$) and can be optimized via no-regret
online learning. Our method instantiates this perspective with multiple adversarial ``levers'' on top of the GRPO loss
signal in \eqref{eq:grpo_loss_signal}.

\section{Method}\label{sec:method}

Standard instruction tuning paradigms are characterized by static uniformity: a uniform distribution over prompts and a fixed computational budget per prompt. We argue that this rigidity creates structural inefficiencies in learning, particularly for heterogeneous reasoning tasks where difficulty varies significantly across domains.

To address this, we propose a \textbf{Multi-Adversary Framework} that decomposes the training process into dynamic distributional levers. Our method operates in a shared environment where prompts are dynamically categorized by difficulty (Section \ref{sec:method_grouping}), serving as the basis for distinct adversarial processes:
\begin{enumerate}[nosep,leftmargin=*]
    \item \textbf{Adversarial Sampler ($q_{\text{prompt}}$):} Optimizes the probability of sampling prompts from different difficulty bins to expose model weaknesses (Section \ref{sec:method_reweighting}).
    \item \textbf{Adversarial Budgeter ($n_{\text{rollout}}$):} Optimizes the allocation of rollout counts $n_b$ to different bins to maximize gradient information under a global compute constraint (Section \ref{sec:method_budget}).
\end{enumerate}

\subsection{Dynamic Grouping via Online Pass@k}
\label{sec:method_grouping}

A critical prerequisite for GDRO is the definition of domains (groups) $\mathcal{D}_b$. Relying on static dataset metadata (e.g., ``Level 5'') is suboptimal because theoretical difficulty often diverges from empirical model capability. Furthermore, reliance on explicit labels limits applicability to datasets with rich metadata.

\subsubsection{Data-Agnostic Difficulty Estimation}
Our method removes the dependency on static annotations by defining groups solely through \textbf{empirical interaction}. We employ an \textbf{Online Difficulty Classifier} to construct dynamic groups based on the model's real-time performance. By utilizing the policy's own error rate as the grouping criterion, the framework becomes \textit{data-agnostic}. Whether the underlying data is math, code, or creative writing, the ``difficulty'' is emergent, allowing the pipeline to automatically discover and upweight the subsets of data that are currently challenging for the policy.

\subsubsection{Implementation Details}
We assign each prompt a unique identifier (UID) and track an online pass@$k$ statistic, where $k$ is a fixed hyperparameter that defines the difficulty scale (e.g., $k=8$ in our experiments).
At training step $t$, we sample $k$ rollouts $\{y_j\}_{j=1}^{k}$ for prompt $x$ and define an any-of-$k$ correctness indicator $c_t(x)\triangleq \mathbb{I}\{\exists j\in\{1,\dots,k\}: r(x,y_j)=1\}$, which equals $1$ iff at least one of the $k$ rollouts is correct.
We then maintain a moving estimate of pass@$k$ using a sliding window of length $H$,
\begin{equation}
    \widehat{\mathrm{pass@}k}_t(x) \;\triangleq\; \frac{1}{H} \sum_{s=t-H+1}^{t} c_s(x),
    \label{eq:passk_window}
\end{equation}
with the convention that the sum is taken over available history for newly seen UIDs.

We map prompts to discrete accuracy bins (e.g., \texttt{accbin\_0} for $[0,0.1)$, \texttt{accbin\_1} for $[0.1,0.2)$, etc.).
Let $0=a_0<a_1<\cdots<a_{B}=1$ denote bin edges. This induces a partition of the input space
$\mathcal{X} = \bigcup_{b=1}^B \mathcal{D}_b$, where
\begin{equation}
    \mathcal{D}_b \;\triangleq\; \big\{x\in\mathcal{X}: \widehat{\mathrm{pass@}k}_t(x) \in [a_{b-1},a_b)\big\}.
\end{equation}
We define the (time-varying) grouping map $g_t: \mathcal{X} \to \{1,\dots,B\}$ by $g_t(x)=b$ if $x\in\mathcal{D}_b$.

\begin{remark}[Bins as dynamic groups]
We use \emph{bin} and \emph{group} interchangeably: the bin index $b$ corresponds to the GDRO group
$\mathcal{D}_b$ induced by the online pass@$k$ estimate \eqref{eq:passk_window}. Within one training step, we treat
$g_t$ as fixed; across steps, the partition evolves as the policy improves.
\end{remark}

To ensure stability in the optimization landscape, we implement \textbf{hysteresis}: a prompt is only reassigned to a new bin if its moving average accuracy crosses the bin boundary by a margin $\delta$. This prevents prompts from oscillating between groups due to stochastic noise, ensuring that the adversaries target stable difficulty tiers rather than transient fluctuations.

\subsection{Adversarial Prompt Reweighting (Prompt-GDRO)}
\label{sec:method_reweighting}

The goal of the prompt reweighting adversary is to construct a distribution $q_t \in \Delta_B$ over the dynamic groups defined above. The adversary seeks to maximize the expected loss, thereby forcing the policy to improve on the ``Pareto frontier'' of difficulty. In summary, we realize Prompt-GDRO by bin-wise reweighting of GRPO updates (via per-sample weights applied to advantages), rather than by physically resampling prompts.

\subsubsection{EMA-Debiased EXP3P Optimization}
We solve the inner maximization problem using the \textbf{GDRO-EXP3P} algorithm \citep{soma2022optimal}.
Intuitively, the adversary maintains a distribution $q_t \in \Delta_B$ over bins and increases pressure on bins with
high recent loss. A key subtlety is \textbf{frequency bias}: if the adversary were to optimize \emph{cumulative}
(extensive) loss, then naturally frequent bins would dominate the score even if they are easy.
Our design instead targets the \emph{intensive} difficulty margin (mean loss), which is the quantity that indicates
how much a typical prompt in the bin is currently challenging.

To correct this, we propose an \textbf{EMA-Debiased} scoring rule. For each bin $b \in \{1, \dots, B\}$, we maintain a
difficulty score $S_t(b)$ updated via an Exponential Moving Average (EMA) with decay $\beta$:
\begin{equation}
    S_t(b) \leftarrow (1 - \beta) S_{t-1}(b) + \beta \cdot \bar{\ell}_{t}(b),
    \label{eq:ema_score}
\end{equation}
where $\bar{\ell}_{t}(b)$ is the empirical mean prompt-level loss for bin $b$ at step $t$. Let $\mathcal{B}_t$ be the
batch of prompts at step $t$, and let $\mathcal{B}_{b,t} = \{x \in \mathcal{B}_t \mid g_t(x) = b\}$ denote the
subset in bin $b$. We compute
\begin{equation}
    \bar{\ell}_{t}(b) = \frac{1}{|\mathcal{B}_{b,t}|} \sum_{x \in \mathcal{B}_{b,t}} \ell(x; \theta).
\end{equation}

Let $\hat q_t(b) \triangleq |\mathcal{B}_{b,t}|/|\mathcal{B}_t|$ denote the (realized) prompt share of bin $b$ in the
current batch. In practice we optionally normalize the update by $\hat q_t(b)$ (with a small floor) to ensure that
rare but consistently high-loss bins can compete with common bins in the EXP3P update.

Crucially, $S_t(b)$ tracks \textit{intensive} difficulty (mean loss) rather than extensive loss, which decouples the
adversarial signal from static dataset frequency.

The adversarial (unnormalized) bin weights $\omega_t(b)$ are derived by exponentiating these scores:
\begin{equation}
    \omega_t(b) = \exp\left(\eta_q \cdot \text{clip}(S_t(b), -C, C)\right),
    \label{eq:exp_update}
\end{equation}
where $\eta_q$ is the learning rate (sharpness). The final sampling probability includes a uniform mixing term $\gamma$ to guarantee exploration:
\begin{equation}
    q_t(b) = (1 - \gamma) \frac{\omega_t(b)}{\sum_{j=1}^B \omega_t(j)} + \frac{\gamma}{B}.
    \label{eq:final_prob}
\end{equation}
Rather than physically resampling the dataset, we realize $q_t$ by reweighting the GRPO gradient contribution of each
prompt. Concretely, for a prompt $x_i$ we scale its rollout advantages as
\begin{equation}
    A_{i,j} \leftarrow A_{i,j} \cdot \min\{\omega_t(g_t(x_i)),\, \omega_{\max}\},
\end{equation}
where $\omega_{\max}$ is a cap for numerical stability. Note that using the unnormalized score $\omega_t(b)$ (rather than the normalized probability $q_t(b)$) preserves the same relative weighting across bins; the omitted normalization constant is a common scalar factor that is absorbed into the effective step size of the GRPO update. Intuitively, this increases the effective step size on bins the
adversary considers difficult, which is equivalent (in expectation) to optimizing the GDRO objective with
adversarial weights.

\subsection{Adversarial Rollout Budgeting (Rollout-GDRO)}
\label{sec:method_budget}

In standard GRPO, the number of rollouts $n$ is fixed (e.g., $n=4$). However, ``easy'' prompts yield diminishing
returns from extra rollouts, while ``hard'' prompts require more exploration to reduce gradient variance and to
stabilize group statistics (\eg, the mean and standard deviation in \eqref{eq:grpo_advantage}).
We formulate the choice of rollout counts as a second GDRO-style resource allocation problem over the same
dynamic bins induced by $g_t$.

\subsubsection{Constrained Maximization Formulation}
We treat the number of rollouts for bin $b$ as a discrete variable $n_b \in \{n_{\min}, \dots, n_{\max}\}$.
Let $\hat q_t(b)$ denote the realized bin share in the prompt batch at step $t$.

\noindent\textbf{Bin-level utility from $n_b$ rollouts.}
Recall from Section~\ref{sec:preli} that GRPO provides a per-response loss signal $\ell_{i,j}(\theta)$
(Eq.~\eqref{eq:grpo_loss_signal}) and aggregates it into a prompt-level loss by averaging over $n$ rollouts,
$\ell(x_i;\theta)=\frac{1}{n}\sum_{j=1}^{n}\ell_{i,j}(\theta)$.
When using a bin-specific rollout count $n_b$, we compute the same quantity with $n_b$ samples:
\begin{equation}
    \ell(x_i;\theta, n_b) \;\triangleq\; \frac{1}{n_b} \sum_{j=1}^{n_b} \ell_{i,j}(\theta).
\end{equation}
Given the current batch $\mathcal{B}_t$ and the induced bin partition $\{\mathcal{B}_{b,t}\}_{b=1}^B$, we define the
empirical bin-level \emph{utility} (negative loss) under $n_b$ rollouts as
\begin{equation}\label{eq:analysis_budget_obj}
    \hat{J}_b(\theta; n_b)
    \;\triangleq\;
    -\frac{1}{|\mathcal{B}_{b,t}|} \sum_{x_i \in \mathcal{B}_{b,t}} \ell(x_i;\theta, n_b),
\end{equation}
where the hat emphasizes that this is a finite-sample estimate computed on the current batch.

\noindent\textbf{Mean-rollout constraint.}
The budgeter chooses $\{n_b\}$ to concentrate rollouts on bins where they are most valuable, subject to a strict global
budget on the \emph{mean} rollouts per prompt $\bar{n}$:
\begin{equation}\label{eq:analysis_budget_lagrangian}
    \max_{\{n_b\}} \sum_{b=1}^B \hat q_t(b) \, \hat{J}_b(\theta; n_b)
    \quad \text{s.t.} \quad \sum_{b=1}^B \hat q_t(b)\, n_b = \bar{n}.
\end{equation}
This formulation incentivizes allocating more compute to bins where additional rollouts improve gradient quality the
most: more rollouts both (i) reduce Monte Carlo noise and (ii) increase the number of informative samples
contributing to the update.
The global budget $\bar{n}$ (typically set to the baseline rollout count, e.g., $\bar{n}=4$) ensures the total
computational cost matches the uniform baseline.

\subsubsection{Dual Ascent Solver}
We solve the constrained maximization above via a Lagrangian relaxation with a single multiplier $\mu$ for the mean-rollout
constraint. For a candidate rollout count $n$ in bin $b$, we define the corresponding penalized \emph{bandit loss} as:
\begin{equation}
    L_b(n) \;=\; -\hat{J}_b(\theta; n) + \mu\, n.
    \label{eq:rollout_arm_loss}
\end{equation}
We maintain a separate \textbf{EXP3P} instance \citep{soma2022optimal} where the ``arms'' are the discrete integers in $[n_{\min}, n_{\max}]$. At each step, we select the configuration $\{n_b\}_{b=1}^B$ that maximizes the joint probability while strictly satisfying the global budget equality constraint $\sum_{b=1}^{B} \hat q_t(b) n_b = \bar{n}$. This exact matching is implemented via a dynamic programming selection step over the active bins.

After the batch is processed and the actual rollout consumption $\hat{n}_{\text{realized}}$ is observed, the dual variable $\mu$ is updated:
\begin{equation}\label{eq:budget_constraint}
    \mu \leftarrow \mu + \alpha_\mu (\hat{n}_{\text{realized}} - \bar{n}),
\end{equation}
where $\alpha_\mu$ is the dual learning rate. This mechanism ensures the method remains compute-neutral compared to the baseline, dynamically shifting resources from easy to hard groups based on the shadow price of compute $\mu$.

\section{Analysis}
\label{sec:analysis}

This section provides a first-principles interpretation of why \emph{static uniformity} can be structurally inefficient
in RL post-training for reasoning, and how our two controllers implement targeted robustness improvements.
We intentionally avoid re-defining Prompt-GDRO and Rollout-GDRO (Section~\ref{sec:method}); here we connect the method to
robust objectives and state the core theoretical messages.
Detailed derivations and proofs are deferred to Appendix~\ref{sec:main_theory}.

\subsection{Motivation: Why uniformity can fail in reasoning post-training}

Reasoning datasets are heterogeneous: prompts belong to latent sub-domains (topics, formats, reasoning styles) with
widely varying difficulty. Uniform sampling and a fixed rollout budget implicitly assume that (a) all prompts are equally
informative to train on and (b) the same amount of exploration is warranted everywhere.
Both assumptions are brittle in practice. In particular, once a large fraction of prompts become ``nearly solved,'' their
rollouts yield low-variance gradients and diminishing marginal learning signal, while the remaining frontier prompts
continue to exhibit high uncertainty.
Under uniform sampling, optimization is dominated by the most frequent, often easier patterns, so learning signal
concentrates on the ``easy core'' while errors persist in a long, difficult tail \citep{bengio2009curriculum,sagawa2020distributionally}.

\paragraph{GDRO lens.}
We use GDRO (Section~\ref{sec:preli_gdro}, Eq.~\eqref{eq:gdro_general}) as a compact way to reason about
worst-bin robustness. In our setting the ``groups'' are the online difficulty bins induced by $g_t$
(Section~\ref{sec:method_grouping}); within a training step we treat $g_t$ as fixed and interpret the group losses
$L_b(\theta)$ in Eq.~\eqref{eq:gdro_general} as bin-conditional GRPO prompt losses.
Under this lens, Prompt-GDRO is an online approximation to the inner adversary over $q$, while Rollout-GDRO is a second
adversary that reallocates rollout compute across the same bins to improve the signal-to-noise ratio of the update under a
compute-neutral budget.

\subsection{Adversary I (Prompt-GDRO): Robustness via EMA-debiased difficulty pressure}
\label{sec:analysis_adv1}

\paragraph{From GDRO to a bandit adversary.}
If bins were fixed and losses were observed noiselessly, the inner maximization in
Eq.~\eqref{eq:gdro_general} could be solved by concentrating $q$ on the current worst-loss bin.
In post-training, however, (i) losses are stochastic Monte Carlo estimates, and (ii) the grouping $g_t$ is online and
non-stationary as the model improves.
Following \citet{soma2022optimal}, we view the adversary as an online learner that updates a distribution over bins
using no-regret bandit-style updates.

\paragraph{Why ``EMA-debiased''? Frequency bias vs.\ intensive difficulty.}
Methodologically, Prompt-GDRO is fully specified in Section~\ref{sec:method_reweighting}.
The key point for the analysis is that the adversary is driven by an \emph{intensive} statistic (mean prompt loss per bin)
rather than an extensive cumulative loss, and that this statistic is smoothed over time.
Concretely, Prompt-GDRO maintains an EMA difficulty score $S_t(b)$ (Eq.~\eqref{eq:ema_score}), exponentiates the (clipped)
scores to obtain unnormalized weights $\omega_t(b)$ (Eq.~\eqref{eq:exp_update}), and forms the adversarial bin distribution
$q_t$ with an exploration mixture (Eq.~\eqref{eq:final_prob}).
Optionally normalizing by the realized bin share $\hat q_t(b)$ mitigates frequency bias so that persistently hard but rare
bins can remain competitive.

\paragraph{How $q_t$ acts on GRPO updates (intuition).}
Although $q_t$ can be interpreted as a sampling distribution over bins, we realize it in a compute-neutral way by
reweighting gradient contributions (Section~\ref{sec:method_reweighting}).
Scaling a prompt's rollout advantages by a bin-dependent multiplier increases its effective learning rate, which
implements the same ``pay more attention to hard bins'' principle as Eq.~\eqref{eq:gdro_general}.

\subsubsection{Entropic GDRO view: log-sum-exp surrogate and softmax best response}
\label{sec:analysis_entropic_gdro}

Prompt-GDRO implements the adversary via exponential weights (Eqs.~\eqref{eq:exp_update}--\eqref{eq:final_prob}),
which corresponds to \emph{entropic mirror ascent} on the simplex.
A key consequence is that the hard inner maximum in Eq.~\eqref{eq:gdro_general} is implicitly replaced by an
entropy-regularized one, yielding a smooth ``soft worst-group'' objective.

To keep notation uncluttered, for the remainder of this subsection we treat $g_t$ as fixed within a step and omit the
subscript $t$.

\begin{lemma}[Entropic GDRO surrogate and softmax best response]
\label{lem:analysis_entropic_surrogate}
For any $\eta>0$, define the entropy-regularized inner problem
\begin{equation}
\label{eq:analysis_entropic_risk}
\mathcal{R}_{\eta}(\theta)
\;\triangleq\;
\max_{q\in\Delta_B}\left\{\sum_{b=1}^{B} q(b)L_b(\theta)+\frac{1}{\eta}H(q)\right\},
\qquad
H(q)\triangleq -\sum_{b=1}^{B} q(b)\log q(b).
\end{equation}
Then
\begin{equation}
\label{eq:analysis_entropic_risk_lse}
\mathcal{R}_{\eta}(\theta)
=
\frac{1}{\eta}\log\!\left(\sum_{b=1}^{B} e^{\eta L_b(\theta)}\right),
\end{equation}
and the (unique) maximizer is the softmax distribution
\begin{equation}
\label{eq:analysis_q_eta_def}
q_{\eta}(b;\theta)
\;=\;
\frac{\exp(\eta L_b(\theta))}{\sum_{j=1}^{B} \exp(\eta L_j(\theta))}.
\end{equation}
Moreover, $\mathcal{R}_\eta$ approximates the hard worst-group loss up to $\log B/\eta$:
\begin{equation}
\label{eq:analysis_entropic_surrogate_bounds}
\max_{b\in[B]} L_b(\theta)
\;\le\;
\mathcal{R}_\eta(\theta)
\;\le\;
\max_{b\in[B]} L_b(\theta) + \frac{\log B}{\eta}.
\end{equation}
\end{lemma}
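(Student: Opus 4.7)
The plan is to prove the three claims of the lemma by first solving the entropy-regularized inner maximization in closed form via a Lagrangian/KKT analysis, then substituting the optimizer back to obtain the log-sum-exp identity, and finally deducing the sandwich bounds from elementary inequalities between the max and the log-sum-exp.

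First, I would observe that the objective $F(q)\triangleq \sum_{b=1}^{B} q(b)L_b(\theta)+(1/\eta)H(q)$ is strictly concave on the open simplex: the linear term is concave, and each $-q(b)\log q(b)$ is strictly concave in its argument. Moreover, the entropy term acts as a barrier at the boundary of $\Delta_B$, since $\partial_{q(b)}(-q(b)\log q(b)) = -(1+\log q(b)) \to +\infty$ as $q(b)\to 0^+$. Hence any maximizer of $F$ on $\Delta_B$ lies in the relative interior, is unique, and is fully characterized by first-order stationarity under the single equality constraint $\sum_b q(b) = 1$.

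Second, I would form the Lagrangian $\mathcal{L}(q,\lambda)=F(q)-\lambda(\sum_b q(b)-1)$ and set $\partial_{q(b)}\mathcal{L}=L_b(\theta)-(1/\eta)(1+\log q(b))-\lambda=0$, which gives $q(b)\propto \exp(\eta L_b(\theta))$. Normalizing on the simplex recovers $q_\eta(b;\theta)$ as claimed. Substituting back, each log-weight simplifies via $\log q_\eta(b;\theta) = \eta L_b(\theta) - \log Z$ with $Z=\sum_j \exp(\eta L_j(\theta))$, so that $\sum_b q_\eta(b;\theta)[L_b(\theta) - (1/\eta)\log q_\eta(b;\theta)] = (1/\eta)\log Z$, yielding the log-sum-exp identity.

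Third, the sandwich bounds follow from elementary log-sum-exp estimates: since $\max_b e^{\eta L_b(\theta)} \le \sum_b e^{\eta L_b(\theta)} \le B \max_b e^{\eta L_b(\theta)}$, applying $(1/\eta)\log$ to both sides gives $\max_b L_b(\theta) \le \mathcal{R}_\eta(\theta) \le \max_b L_b(\theta) + (\log B)/\eta$. The main obstacle is very mild: essentially ensuring the KKT analysis is rigorous at the boundary. The barrier behavior of the entropy obviates any complementary-slackness bookkeeping on the non-negativity constraints, so the only real check is strict concavity, which guarantees that the derived softmax point is the unique global maximizer rather than merely a stationary point.
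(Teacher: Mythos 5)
Your proposal is correct and follows essentially the same route as the paper: a Lagrangian/stationarity argument on the entropy-regularized objective yielding the softmax maximizer, back-substitution to obtain the log-sum-exp value, and the elementary bounds $e^{\eta \max_b L_b}\le\sum_b e^{\eta L_b}\le B\,e^{\eta \max_b L_b}$ for the sandwich inequality. Your explicit remark on the entropy term acting as a boundary barrier is a slightly more careful justification of interiority than the paper's brief parenthetical, but the argument is otherwise the same.
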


\noindent\emph{Proof.} Appendix~\ref{sec:theory_lse}.

\paragraph{Gradient interpretation.}
Differentiating Eq.~\eqref{eq:analysis_entropic_risk_lse} yields a weighted mixture of group gradients,
\begin{equation}
\label{eq:analysis_entropic_grad}
\nabla_\theta \mathcal{R}_\eta(\theta)
=
\sum_{b=1}^{B} q_{\eta}(b;\theta)\,\nabla_\theta L_b(\theta),
\end{equation}
so applying larger weights to higher-loss bins can be read as taking a gradient step on a smooth approximation to the
worst-group objective.

\paragraph{Connection to Prompt-GDRO.}
Eq.~\eqref{eq:analysis_q_eta_def} is the entropy-regularized best response of the adversary to $\theta$.
In an idealized full-information setting where the adversary observes $L_b(\theta_t)$ directly, exponential weights
tracks this softmax best response online.
In our implementation, the EMA score $S_t(b)$ (Eq.~\eqref{eq:ema_score}) is a smoothed (and optionally debiased) estimator
of the current bin losses, so the bin distribution $q_t$ (Eq.~\eqref{eq:final_prob}) can be interpreted as tracking
$q_{\eta_q}(\cdot;\theta_t)$ up to bandit noise and exploration.

\subsubsection{No-regret interpretation: why this game structure is sensible}
\label{sec:analysis_noregret_prompt}

The entropic GDRO view explains \emph{what} objective the exponential-weights adversary is tracking.
A complementary (standard) lens explains \emph{why} coupling this adversary with gradient-based policy updates is sensible:
in a convex--concave game, if both players have sublinear regret, time-averaged iterates converge to an approximate saddle
point.
Concretely, letting $f(\theta,q)\triangleq \sum_{b=1}^{B} q(b)L_b(\theta)$ and writing $\bar\theta=\frac{1}{T}\sum_{t=1}^{T}\theta_t$
and $\bar q=\frac{1}{T}\sum_{t=1}^{T}q_t$, one obtains the generic bound
\begin{equation}
\label{eq:analysis_saddle_gap_regret}
\max_{q\in\Delta_B} f(\bar\theta,q)\;-\;\min_{\theta\in\Theta} f(\theta,\bar q)
\;\le\;
\frac{\mathrm{Regret}_\theta(T)+\mathrm{Regret}_q(T)}{T},
\end{equation}
where $\mathrm{Regret}_\theta(T)$ and $\mathrm{Regret}_q(T)$ are the cumulative regrets of the learner and adversary,
respectively.
Appendix~\ref{sec:theory_noregret} states a precise version (with step sizes) for the convex bounded regime.
In deep RL, we read Eq.~\eqref{eq:analysis_saddle_gap_regret} as an \emph{optimization principle}: if (i) the GRPO updates
behave like a low-regret learner and (ii) the Prompt-GDRO adversary behaves like a low-regret reweighting rule, then the
training dynamics push down the robust (worst-group) objective over time.

\subsubsection{Toy validation on MATH: entropy and worst-group robustness}
\label{sec:analysis_toy_math}

The analysis above predicts that a well-behaved Prompt-GDRO adversary should (i) avoid collapsing onto a tiny set of
bins due to noisy estimates and (ii) improve worst-group performance.
We validate these qualitative predictions on the MATH benchmark as a \emph{toy} study (Prompt-GDRO only).

The standard GRPO baseline achieves a Worst-Group Pass@1 of roughly $33.92\%$.
While a class-based GDRO baseline improves this to $37.7\%$, it exhibits instability in the adversary entropy,
collapsing to a narrow effective support of $\sim 12$ groups.
In contrast, our EMA-debiased formulation achieves the highest robustness with Worst-Group Pass@1 of
\textbf{$39.6\%$}, and maintains a broader active support of $\sim 24$ groups throughout training.
This broader support is consistent with the ``intensive difficulty'' objective in Eq.~\eqref{eq:ema_score}:
instead of chasing a few high-variance bins, the adversary sustains a diversified portfolio of challenging bins, forcing
the policy to improve along a wider Pareto frontier of hard prompts.

\subsection{Adversary II (Rollout-GDRO): Compute allocation as an economic control problem}
\label{sec:analysis_adv2}

Prompt reweighting changes \emph{which bins receive learning pressure}; rollout budgeting changes \emph{how much
exploration} is used to estimate and optimize that pressure.
The key empirical fact is that rollouts are not equally useful everywhere: once a bin is nearly solved, additional
rollouts mostly repeat correct solutions and contribute low-variance gradients; on frontier bins, additional rollouts
reduce Monte Carlo noise and improve both reward estimation and GRPO's within-prompt normalization.

\paragraph{Recap: compute-neutral rollout budgeting.}
Rollout-GDRO is defined in Section~\ref{sec:method_budget}.
At each step, the controller selects a bin-specific rollout count $n_b\in\{n_{\min},\dots,n_{\max}\}$ under a strict
mean-rollout budget $\bar n$.
Operationally, it uses a Lagrangian relaxation with multiplier $\mu$ (a shadow price of compute): the penalized rollout
``arm loss'' $L_b(n)$ is defined in Eq.~\eqref{eq:rollout_arm_loss}, and $\mu$ is updated by dual ascent
(Eq.~\eqref{eq:budget_constraint}).
This closed-loop control is what produces the ``budget frontier'' and ``multiplier effect'' patterns in
Section~\ref{sec:rollout_analysis}.

\subsubsection{A variance proxy for why allocating more rollouts helps}
\label{sec:analysis_variance_proxy}

The rollout allocator is most useful when extra rollouts reduce the Monte Carlo noise of the GRPO update.
A simple proxy is to focus on the stochastic variance of the per-prompt gradient estimate.
Even though GRPO normalizes advantages within a rollout group (coupling the rollouts for a prompt), a mild bounded-differences
condition on the resulting prompt-level gradient estimator implies the conditional variance still contracts at rate $O(1/n_b)$ with
the number of rollouts.
Abstractly, we can write a bin-dependent ``intrinsic variance'' parameter $v_b(\theta)$ so that the per-prompt conditional variance
obeys $\mathrm{Var}[\hat g(x;\theta,n_b)\mid g_t(x)=b]\le v_b(\theta)/n_b$, and approximate the batch-level noise by
\begin{equation}
\label{eq:analysis_var_proxy}
\mathrm{VarProxy}(\mathbf n;\theta)
\;\triangleq\;
\sum_{b=1}^{B} \bar q_b\,\frac{v_b(\theta)}{n_b},
\qquad
\bar q_b \triangleq \hat q_t(b),
\end{equation}
where $\bar q_b$ are the realized bin fractions in the current batch.
Appendix~\ref{sec:theory_variance_proxy} makes this precise (via a bounded-differences / Efron--Stein argument)
under i.i.d.\ rollout sampling, and notes that the condition accommodates GRPO's within-prompt normalization.

\subsubsection{The variance-optimal allocation obeys a square-root law}
\label{sec:analysis_sqrt_law}

Motivated by Eq.~\eqref{eq:analysis_var_proxy}, consider the continuous relaxation of a variance-aware compute-neutral
allocation:
\begin{equation}
\label{eq:analysis_var_opt_problem}
\min_{\mathbf n\in\mathbb{R}_+^{B}}\;\sum_{b=1}^{B} \bar q_b\,\frac{v_b(\theta)}{n_b}
\qquad\text{s.t.}\qquad
\sum_{b=1}^{B} \bar q_b\,n_b = \bar n.
\end{equation}

\begin{theorem}[Square-root law for variance-optimal allocation]
\label{thm:analysis_sqrt_allocation}
Let $A\triangleq\{b\in[B]: \bar q_b>0\}$ denote the set of bins present in the batch.
If $v_b(\theta)>0$ for all $b\in A$, then the minimizer of Eq.~\eqref{eq:analysis_var_opt_problem} is unique on the active
coordinates and equals
\begin{equation}
\label{eq:analysis_nb_star}
n_b^\star
= \bar n\cdot\frac{\sqrt{v_b(\theta)}}{\sum_{j=1}^{B} \bar q_j\,\sqrt{v_j(\theta)}},
\qquad b\in A.
\end{equation}
\end{theorem}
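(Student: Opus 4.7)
The plan is to treat Eq.~\eqref{eq:analysis_var_opt_problem} as a standard convex program with a single linear equality constraint and attack it via Lagrange multipliers / KKT. First I would restrict attention to the active index set $A=\{b:\bar q_b>0\}$, since any bin with $\bar q_b=0$ contributes nothing to either the objective or the constraint, so its $n_b$ is irrelevant; uniqueness is naturally stated on $A$. On the restricted problem, $F(\mathbf n)\triangleq\sum_{b\in A}\bar q_b v_b(\theta)/n_b$ is strictly convex on the open positive orthant (each $1/n_b$ is strictly convex for $n_b>0$ and $\bar q_b v_b(\theta)>0$), and the feasible set is the intersection of a convex hyperplane with $\{n_b>0\}_{b\in A}$, which is nonempty (e.g.\ $n_b=\bar n$ for all $b\in A$).

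\paragraph{Main steps.} First I would form the Lagrangian $\mathcal{L}(\mathbf n,\lambda)=\sum_b \bar q_b v_b/n_b+\lambda\bigl(\sum_b \bar q_b n_b-\bar n\bigr)$ and impose stationarity: $\partial\mathcal{L}/\partial n_b=-\bar q_b v_b/n_b^{2}+\lambda\bar q_b=0$. Since $\bar q_b>0$ on $A$, this forces $\lambda>0$ (the objective is bounded below and decreasing as any $n_b\to\infty$, so the multiplier must push back) and yields $n_b=\sqrt{v_b(\theta)/\lambda}$. Second I would substitute into the budget $\sum_b \bar q_b n_b=\bar n$ to solve $\sqrt{1/\lambda}=\bar n/\sum_j \bar q_j\sqrt{v_j(\theta)}$, which gives exactly Eq.~\eqref{eq:analysis_nb_star}. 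Third, uniqueness on $A$ follows from strict convexity of $F$ on the convex feasible set: any minimizer is a KKT point, and the KKT system above has a unique positive solution.

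\paragraph{Obstacles and alternative.} I expect the argument to be essentially routine; the only mild subtleties are (i) justifying that the explicit positivity constraints $n_b>0$ can be ignored---the KKT candidate is strictly positive whenever $v_b(\theta)>0$, so those constraints are inactive and contribute no additional multipliers---and (ii) handling the degenerate cases $A=\emptyset$ or $\bar n=0$, which are vacuous. A cleaner, calculation-free alternative that I would present as a sanity check is to invoke the weighted Cauchy--Schwarz inequality
\[
\Bigl(\sum_{b\in A}\bar q_b n_b\Bigr)\Bigl(\sum_{b\in A}\bar q_b\,\frac{v_b(\theta)}{n_b}\Bigr)
\;\ge\;\Bigl(\sum_{b\in A}\bar q_b\sqrt{v_b(\theta)}\Bigr)^{2},
\]
with equality iff $n_b^{2}\propto v_b(\theta)$ on $A$. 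Dividing by $\bar n$ yields a matching lower bound on the objective, and the proposed $n_b^\star$ attains it; this bypasses the Lagrangian entirely and simultaneously delivers both optimality and uniqueness on the active coordinates.
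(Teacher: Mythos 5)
Your proposal is correct and follows essentially the same route as the paper's proof: form the Lagrangian for the equality-constrained convex program, solve the stationarity condition to get $n_b=\sqrt{v_b(\theta)/\mu}$, enforce the mean-budget constraint to pin down the multiplier, and invoke strict convexity on the active coordinates (with the $\bar q_b=0$ bins discarded as irrelevant) for uniqueness. Your weighted Cauchy--Schwarz alternative is a nice self-contained bonus; the paper deploys that same inequality only afterwards, to show the optimal allocation never does worse than the uniform one, rather than as the primary optimality argument.
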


\noindent\emph{Proof.} Appendix~\ref{sec:theory_sqrt_law}.

\paragraph{Implications for Rollout-GDRO.}
Eq.~\eqref{eq:analysis_nb_star} is the classical Neyman/square-root allocation: bins with higher intrinsic variance receive
more rollouts, with $n_b^\star\propto \sqrt{v_b(\theta)}$.
The associated KKT condition can be written as a shadow-price rule: for a multiplier $\mu>0$, the per-bin best response of
the continuous relaxation solves
\begin{equation}
\label{eq:analysis_shadow_price_best_response}
n_b(\mu)\;=\;\arg\min_{n>0}\;\frac{v_b(\theta)}{n}+\mu n\;=\;\sqrt{\frac{v_b(\theta)}{\mu}},
\end{equation}
and $\mu$ is chosen to satisfy the mean-budget constraint.
This clarifies why Rollout-GDRO naturally admits an ``economic'' interpretation: the dual variable $\mu$ trades off variance
reduction against compute.

In practice we choose $n_b$ from a discrete set of rollout arms and only observe bandit feedback (the value of the chosen
arm).
Rollout-GDRO's EXP3P updates and DP-based selection step (Section~\ref{sec:method_budget}) can be read as an online
approximation to the shadow-price solution above; the resulting best-response map is piecewise constant in $\mu$, which
matches the staircase/threshold transitions in our rollout heatmaps.
We defer the corresponding online-learning formalization (and extensions beyond the variance proxy) to the appendix.

\section{Experiments}
\label{sec:experiments}

In this section, we present the empirical evaluation of our \textbf{Multi-Adversary GDRO} framework. We use the DAPO 14.1k English dataset\footnote{\url{https://huggingface.co/datasets/open-r1/DAPO-Math-17k-Processed}} for all training runs, following a standard post-training pipeline. 

\noindent\textbf{Metrics.}
Unless stated otherwise, we report \emph{mean@8} for benchmark accuracies: each prompt is evaluated with 8 sampled completions
and we average the binary correctness indicator across those 8 trials.
We additionally report \emph{pass@8}, the probability that at least one of the 8 completions is correct (estimated from the same samples).
This distinction matters for reasoning: mean@8 reflects typical performance, while pass@8 captures ``best-of-$k$'' robustness under limited search.

\noindent\textbf{Compute neutrality.}
Prompt-GDRO keeps the rollout budget fixed and changes training pressure through bin-wise reweighting of GRPO updates.
Rollout-GDRO keeps the \emph{mean} rollout budget fixed (e.g., $\bar n=4$) and redistributes rollouts across bins.
Thus, improvements reflect better use of the same overall training compute rather than a larger sampling budget.

We first report the quantitative improvements on standard mathematical reasoning benchmarks. Subsequently, we provide a rigorous qualitative analysis of the dynamic difficulty landscape, interpreting how the interaction between model capacity and data heterogeneity drives the emergent curriculum observed in our prompt reweighting distribution and rollout allocation strategies.

\subsection{Main Results}
We evaluate our method across three model scales: \textbf{Qwen3-1.7B-Base}, \textbf{Qwen3-4B-Base}, and \textbf{Qwen3-8B-Base}. We compare the standard GRPO baseline against our two proposed mechanisms: \textbf{Prompt-GDRO} (adversarial prompt reweighting driven by online difficulty bins) and \textbf{Rollout-GDRO} (adversarial compute budgeting). Table \ref{tab:gdro_results} summarizes the performance at the peak checkpoint for each stabilized run.

\begin{table*}[t]
\centering
\caption{Results on mathematical reasoning benchmarks for Prompt Reweighting GDRO and Rollout Budgeting GDRO vs GRPO Baseline. \textbf{Bold} values indicate methods that outperform the GRPO Baseline (independent comparison). The \textbf{AIME} column reports the average accuracy of AIME 2024 and AIME 2025. The percentage improvement for pass@8 is shown in brackets. All other metrics reported are \textbf{mean@8}.}
\vspace{-0.1in}
\label{tab:gdro_results}
\scalebox{0.88}{\setlength{\tabcolsep}{1.5mm}{
\begin{tabular}{lcccccccc>{\columncolor{gray!15}}c@{}}
\toprule
\textbf{Models} & \textbf{MATH 500} & \textbf{AIME} & \textbf{AMC} & \textbf{MINERVA} & \textbf{OLYMPIAD} & \textbf{GPQA} & \textbf{pass@8} \\
\midrule
\multicolumn{8}{@{}l}{\textit{Qwen3-1.7B-Base}} \\
\quad GRPO (Baseline) & 50.62 & 5.42 & 34.69 & 14.56 & 23.07 & 26.39 & 50.74 \\
\quad Prompt-GDRO & \textbf{63.20} & \textbf{6.88} & \textbf{39.38} & 14.61 & \textbf{25.07} & \textbf{29.29} & \textbf{55.68} \textcolor{blue}{(+9.74\%)} \\
\quad Rollout-GDRO & \textbf{63.98} & \textbf{7.50} & \textbf{36.87} & \textbf{17.28} & \textbf{26.71} & \textbf{27.72} & \textbf{56.14} \textcolor{blue}{(+10.64\%)} \\
\midrule
\multicolumn{8}{@{}l}{\textit{Qwen3-4B-Base}} \\
\quad GRPO (Baseline) & 72.05 & 11.25 & 60.94 & 17.79 & 30.48 & 35.54 & 56.31 \\
\quad Prompt-GDRO & \textbf{75.78} & \textbf{12.92} & \textbf{64.06} & \textbf{26.72} & \textbf{40.98} & \textbf{40.88} & \textbf{63.70} \textcolor{blue}{(+13.13\%)} \\
\quad Rollout-GDRO & \textbf{75.20} & \textbf{13.96} & \textbf{67.50} & \textbf{26.47} & \textbf{39.28} & \textbf{38.51} & \textbf{62.27} \textcolor{blue}{(+10.59\%)} \\
\midrule
\multicolumn{8}{@{}l}{\textit{Qwen3-8B-Base}} \\
\quad GRPO (Baseline) & 73.45 & 14.38 & 66.56 & 28.17 & 36.92 & 42.25 & 62.04 \\
\quad Prompt-GDRO & \textbf{76.18} & \textbf{16.04} & \textbf{70.94} & \textbf{32.17} & \textbf{42.43} & \textbf{43.81} & \textbf{67.60} \textcolor{blue}{(+8.96\%)} \\
\quad Rollout-GDRO & \textbf{77.88} & \textbf{15.63} & \textbf{66.88} & \textbf{29.55} & \textbf{43.62} & \textbf{43.31} & \textbf{67.75} \textcolor{blue}{(+9.20\%)} \\
\bottomrule
\end{tabular}}}
\end{table*}

Our framework demonstrates robust gains across all settings. \textbf{Prompt-GDRO} consistently improves performance by explicitly targeting hard data groups, achieving a peak gain of \textbf{+13.13\%} on the 4B model. Remarkably, \textbf{Rollout-GDRO} achieves comparable or superior results---improving the 1.7B model by \textbf{+10.64\%} and the 8B model by \textbf{+9.20\%}---without altering the data distribution. This validates our hypothesis that \textit{compute allocation} is an equally powerful lever for robustness: by dynamically assigning more rollouts to high-variance prompts, the adversary reduces gradient variance exactly where the model is most uncertain, yielding gains that rival direct data curriculum learning.

\begin{keyfindingbox}{Key Finding 1}
Both prompt reweighting (Prompt-GDRO) and adaptive compute allocation (Rollout-GDRO) independently yield significant performance gains, improving pass@8 accuracy by up to 13.13\% and 10.64\% respectively across model scales.
\end{keyfindingbox}

\subsection{Qualitative Analysis: The Dynamics of Difficulty (Prompt-GDRO)}
\label{sec:qualitative_analysis}

To understand the mechanism behind these performance gains, we visualize the temporal evolution of the training distribution. Figure \ref{fig:mechanism} presents a comprehensive triptych tracking the causal chain of our adversarial mechanism: from \textit{data availability} (prompt share) to \textit{adversarial pressure} (weights) to \textit{learning payoff} (reward).

\begin{figure*}[t]
    \centering
    \includegraphics[width=\textwidth]{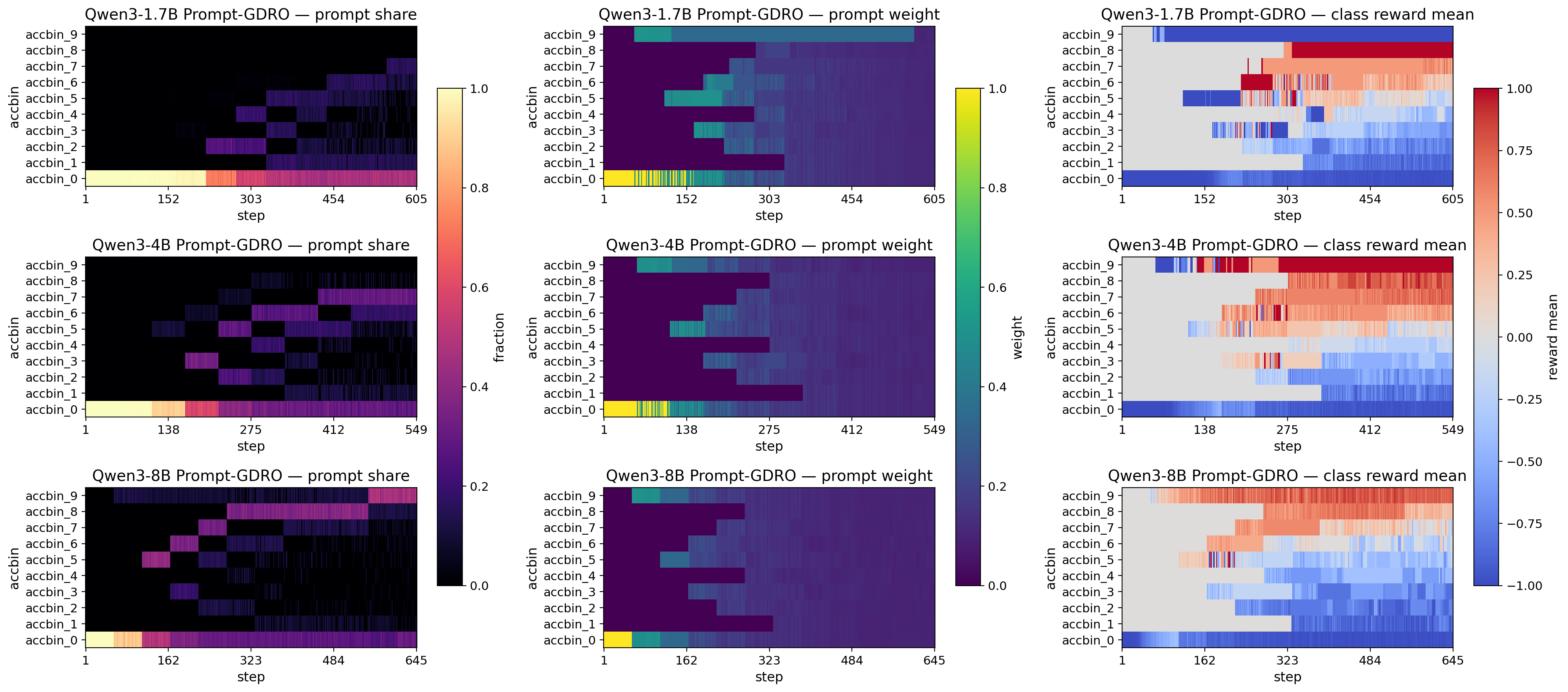}
    \caption{\textbf{The Causal Chain of Curriculum.} A triptych comparing the \textbf{Prompt Distribution} (Left), \textbf{Adversarial Weights} (Middle), and \textbf{Realized Reward} (Right) for 1.7B, 4B, and 8B models. This visualizes the mechanism: even when hard bins are rare in the data (dark regions in Left), the adversary applies disproportionate pressure (bright bands in Middle), forcing the model to eventually crack these problems and yield positive rewards (emergence of red/blue bands in Right). This effectively proves that Prompt-GDRO decouples the learning signal from dataset frequency.}
    \label{fig:mechanism}
\end{figure*}

\paragraph{Capacity-Dependent Distribution Shift.}
The training dynamics reveal a stark correlation between model capacity and the ``velocity'' of learning, as quantified by the macro-level metrics in Figure \ref{fig:dynamics_metrics}.
\begin{itemize}[nosep,leftmargin=*]
    \item \textbf{Qwen3-1.7B (Top Row, Fig \ref{fig:mechanism}):} The model exhibits high inertia. While the dataset mass often lags in \texttt{accbin\_0}, the scalar summary (Figure \ref{fig:dynamics_metrics}, top panel) shows the mean bin index steadily climbing past 2.0. The \textbf{mass in bins $\ge 3$} trace reveals that even this smaller model is successfully pushed out of the trivial zone, preventing the stagnation typical of uniform baselines.

    \item \textbf{Qwen3-4B (Middle Row, Fig \ref{fig:mechanism}):} This scale occupies a ``Goldilocks zone.'' The data distribution shows a steady migration to intermediate bins. The adversarial weights form a distinct, high-intensity \textbf{diagonal frontier} that leads the data distribution. By Step 366, the weight entropy stabilizes, indicating the adversary has locked into a high-precision curriculum targeting specific intermediate failure modes.

    \item \textbf{Qwen3-8B (Bottom Row, Fig \ref{fig:mechanism}):} The largest model exhibits a rapid collapse of the unsolved mass. The scalar summaries show the \textbf{mass in bins $\ge 8$} (dashed lines) spiking early, confirming that for capable models, the adversary aggressively focuses on the ``last mile'' of robustness---solving the few remaining hard cases---rather than wasting capacity on solved arithmetic.
\end{itemize}

\begin{figure}[t]
    \centering
    \includegraphics[width=0.95\linewidth]{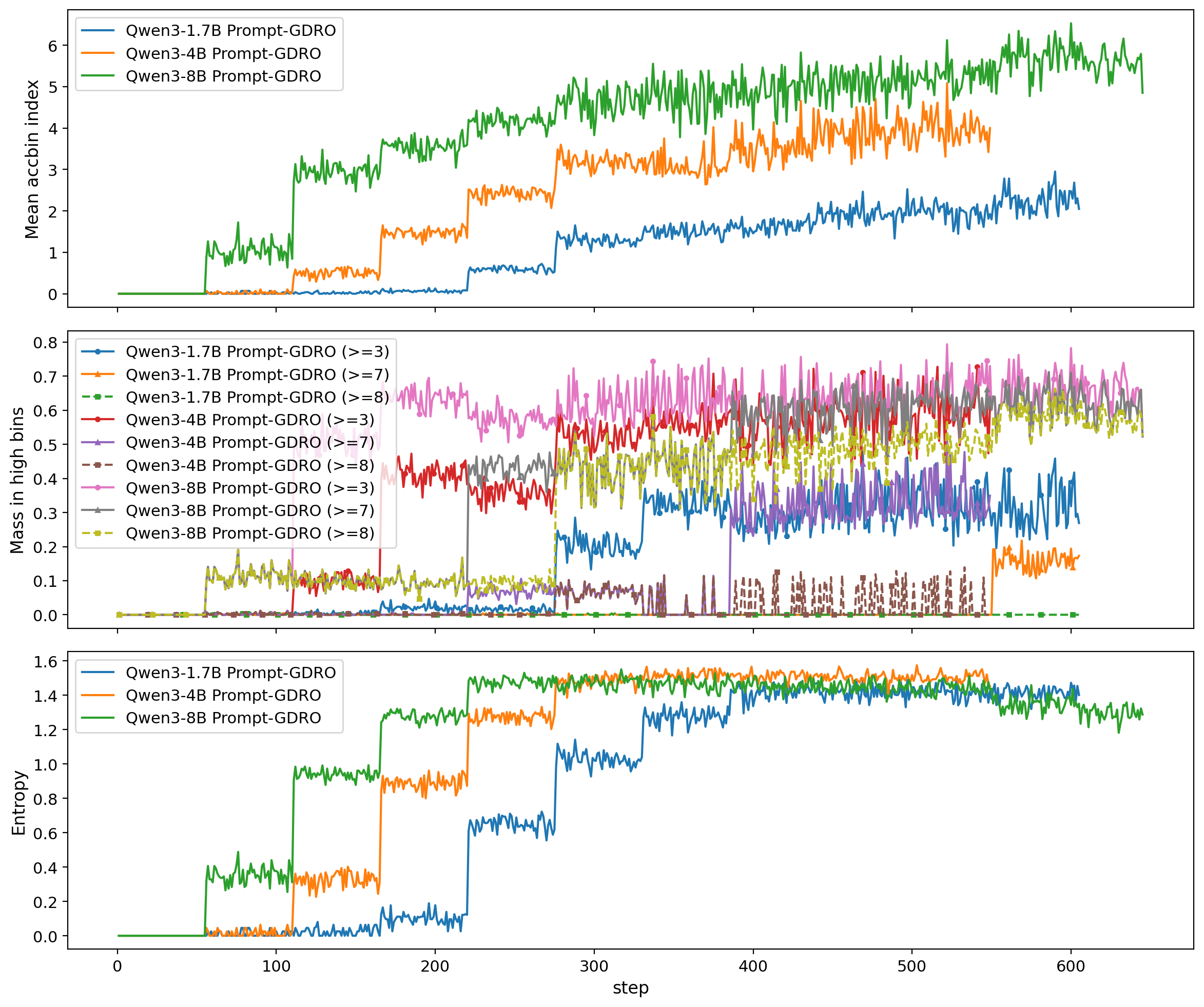}
    \caption{\textbf{Training Dynamics Quantified (Prompt-GDRO).} (Top) The Mean Accuracy Bin Index tracks the rising difficulty of targeted prompts. (Middle) The fraction of prompts in difficulty bands $\ge 3$ (solid) and $\ge 8$ (dashed) highlights scaling laws: 1.7B struggles to clear the $\ge 3$ bar, while 8B rapidly saturates even the $\ge 8$ band. (Bottom) Entropy metrics confirm that the adversary maintains a diverse portfolio of difficulty, preventing mode collapse to a single bin.}
    \label{fig:dynamics_metrics}
\end{figure}

\begin{figure*}[t]
    \centering
    \includegraphics[width=\textwidth]{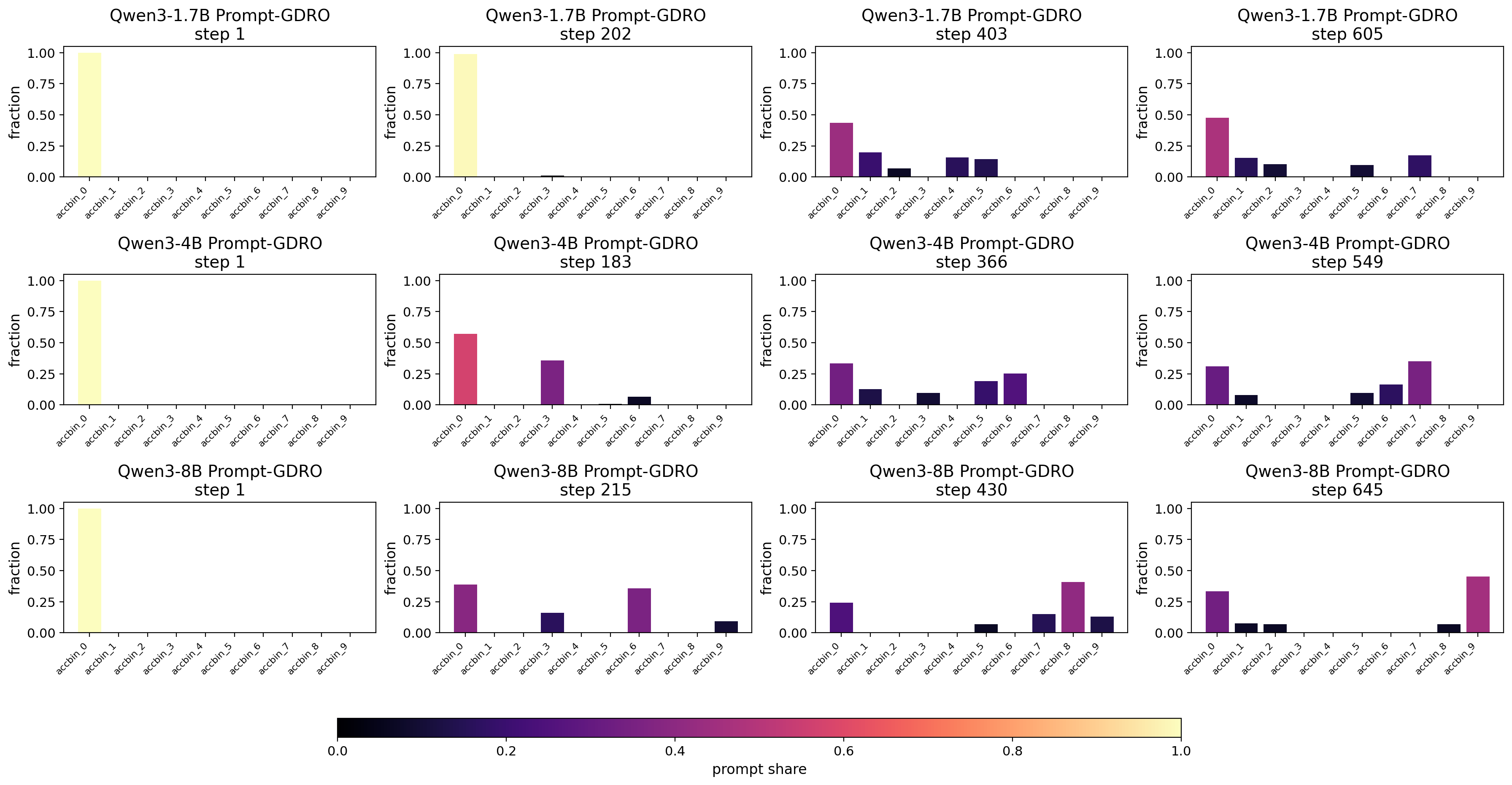}
    \caption{\textbf{Snapshots of the Learning Distribution.} The probability mass of the training set across difficulty bins at four distinct training stages (Start, Early-Mid, Late-Mid, End). These publication-friendly checkpoints reveal the exact shape of the curriculum: note how the 4B model (Middle Row) transitions from a uniform start to a heavy emphasis on \texttt{accbin\_5}--\texttt{accbin\_7} by mid-training, whereas the 8B model (Bottom Row) shifts almost its entire mass to the hardest bins by the final step.}
    \label{fig:snapshots}
\end{figure*}

\paragraph{Visualizing the Wave of Progress.}
Figure \ref{fig:snapshots} offers discrete checkpoints that clarify the exact distributional shape at key training intervals.
\begin{itemize}[nosep,leftmargin=*]
    \item \textbf{The Heavy Tail (1.7B):} Even at the final step (Step 605), the 1.7B model retains a significant plurality of mass ($\approx 45\%$) in \texttt{accbin\_0}. The distribution remains right-skewed, indicating the ``reasoning frontier'' is anchored in fundamental difficulties.
    \item \textbf{The High-Entropy Plateau (4B):} By Step 366, the 4B model allocates over 50\% of its probability mass to the intermediate \texttt{accbin\_5}--\texttt{accbin\_7} range. This ``plateau'' represents a diverse curriculum where the model simultaneously refines intermediate concepts and attempts advanced problems.
    \item \textbf{The Traveling Peak (8B):} The 8B model demonstrates a clear ``wave'' dynamic. The peak of the distribution physically travels from left to right. By Step 430, the mass in \texttt{accbin\_0} has virtually vanished, and the bulk of the sampling budget is dedicated to \texttt{accbin\_8} and \texttt{accbin\_9}. This confirms that for sufficient capacity, the primary challenge shifts rapidly from \textit{correctness} to \textit{robustness}, necessitating the dynamic budget reallocation our method provides.
\end{itemize}

\paragraph{Adversary lead--lag.}
The heatmaps in Figure~\ref{fig:mechanism} suggest that the adversarial weights form a ``frontier'' that \emph{precedes} the empirical data distribution.
We quantify this intuition with a lightweight lead--lag proxy computed from logged bin statistics.
Let $q_t\in\Delta_B$ denote the empirical \emph{prompt share} over bins at training step $t$, and let $\hat w_t\in\Delta_B$ denote the \emph{normalized} Prompt-GDRO weights across bins at the same step (the weight-only distribution, not reweighted by $q_t$).
Define the mean bin index under the data and under the weights as
\begin{equation}
    \mu_{\mathrm{data}}(t) \triangleq \sum_{b=1}^{B} b\,q_t(b),
    \qquad
    \mu_{\mathrm{weight}}(t) \triangleq \sum_{b=1}^{B} b\,\hat w_t(b),
\end{equation}
and the lead--lag gap $\Delta\mu(t)\triangleq \mu_{\mathrm{weight}}(t)-\mu_{\mathrm{data}}(t)$.
A positive $\Delta\mu(t)$ indicates that the adversary's weight distribution is shifted toward \,higher-index\, bins than the empirical prompt share.
Under our convention (Section~\ref{sec:method_grouping}) that \texttt{accbin\_0} corresponds to the \emph{lowest} pass@8 interval and larger indices correspond to \emph{higher} pass@8 (more solvable) prompts, this means the adversary emphasizes the current \emph{learnable frontier} rather than simply mirroring the bulk of unsolved mass.
Figure~\ref{fig:extra_diagnostics} (left) shows that $\Delta\mu(t)$ is strongly positive early in training and decays over time, with faster decay at larger model scales.
For Qwen3-8B, $\Delta\mu(t)$ eventually becomes slightly negative, consistent with the data distribution migrating quickly into high pass@8 bins while the adversary maintains pressure on the remaining low-pass@8 cases.
Overall, the decay of $\Delta\mu(t)$ provides a compact scalar signature of the ``traveling wave'' curriculum: the adversary leads and the data distribution catches up as the policy improves.

\begin{figure*}[t]
    \centering
    \begin{subfigure}[t]{0.495\textwidth}
        \centering
        \includegraphics[width=\linewidth]{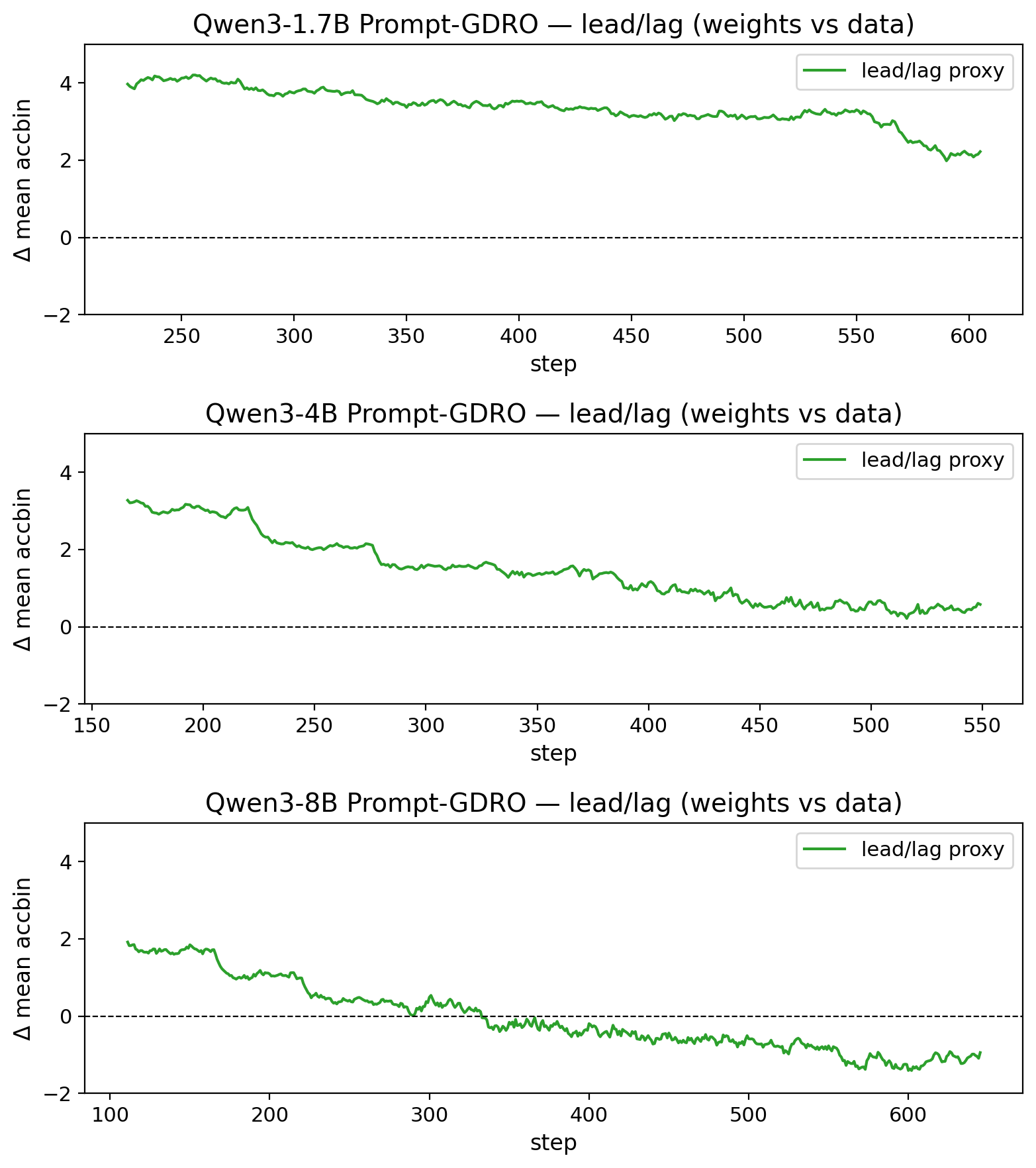}
        \caption{\textbf{Prompt-GDRO lead--lag proxy.} $\Delta\mu(t)=\mu_{\mathrm{weight}}(t)-\mu_{\mathrm{data}}(t)$, where $\mu_{\mathrm{data}}(t)=\sum_b b\,q_t(b)$ and $\mu_{\mathrm{weight}}(t)=\sum_b b\,\hat w_t(b)$.}
        \label{fig:lead_lag}
    \end{subfigure}\hfill
    \begin{subfigure}[t]{0.495\textwidth}
        \centering
        \includegraphics[width=\linewidth]{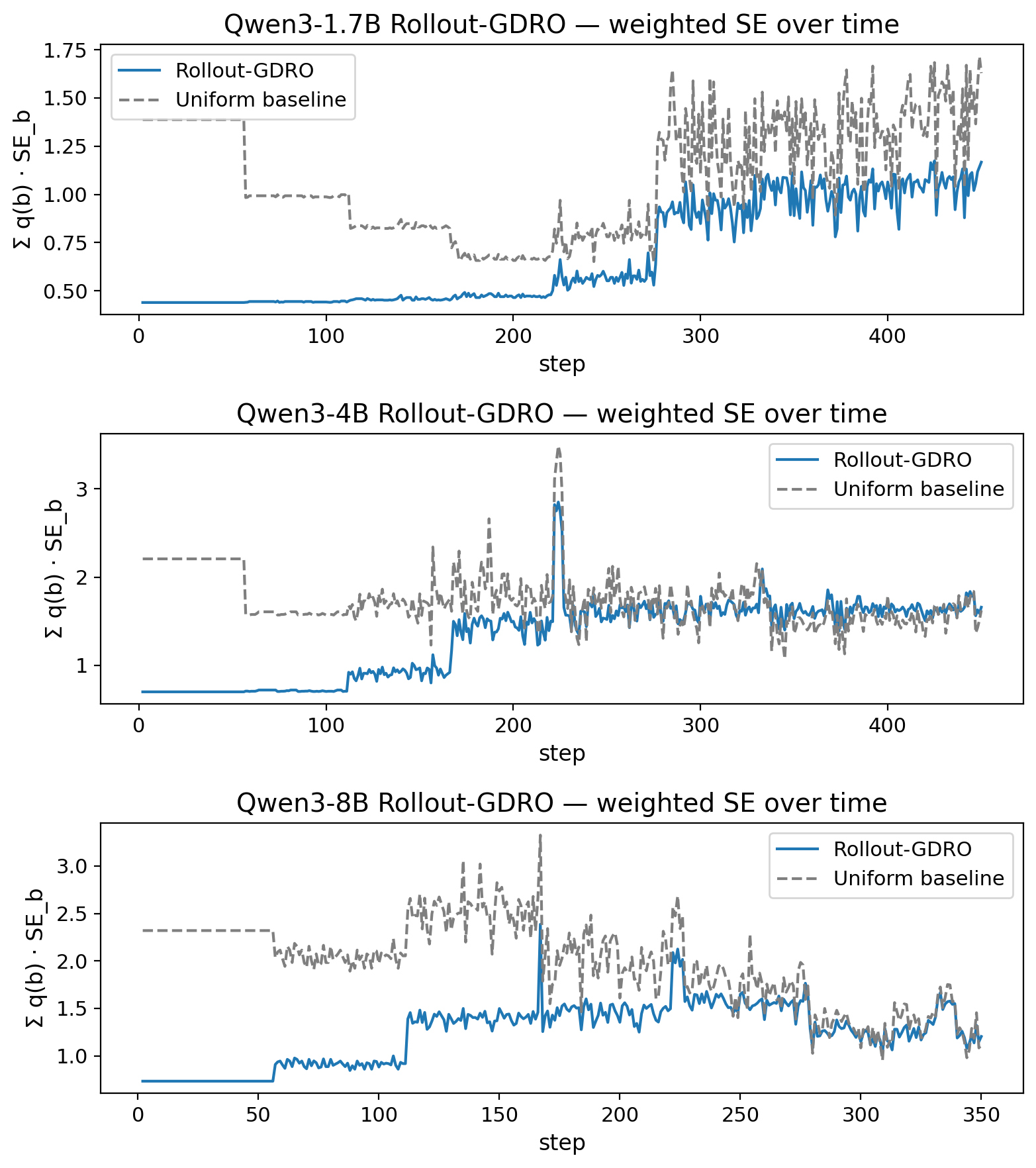}
        \caption{\textbf{Rollout-GDRO weighted SE proxy.} $\mathrm{WSE}(t)=\sum_b q_t(b)\hat\sigma_b/\sqrt{n_b(t)}$ compared to a compute-matched uniform baseline.}
        \label{fig:wse_proxy}
    \end{subfigure}
    \caption{\textbf{Two diagnostics for the two adversaries.} (Left) Prompt-GDRO weights initially lead the empirical data distribution in bin index (a ``learnable frontier'') and progressively align as the prompt-share distribution catches up. (Right) Rollout-GDRO reduces an offline weighted standard-error proxy relative to a uniform allocation with the same mean rollout budget, consistent with variance-aware compute allocation.}
    \label{fig:extra_diagnostics}
\end{figure*}

\begin{keyfindingbox}{Key Finding 2}
Prompt-GDRO generates an emergent curriculum that decouples the learning signal from dataset frequency. It applies disproportionate pressure to rare, hard bins, creating a ``traveling wave'' of difficulty that adapts to the model's capacity.
\end{keyfindingbox}

\subsection{Qualitative Analysis: Adaptive Compute Allocation (Rollout-GDRO)}
\label{sec:rollout_analysis}

While Prompt-GDRO improves robustness by altering \textit{what} data the model sees, Rollout-GDRO improves robustness by altering \textit{how deeply} the model explores that data. To understand this mechanism, we analyze the adversarial budgeter's behavior through three complementary visualizations: the continuous budget frontier, macro-level scalar dynamics, and discrete allocation snapshots.

\subsubsection{The Budget Frontier}
Figure \ref{fig:rollout_mechanism} visualizes the direct contrast between data frequency and resource allocation. The left column shows the prompt share (dataset mass), while the right column shows the allocated rollout budget per prompt.

\begin{figure*}[t]
    \centering
    \includegraphics[width=\textwidth]{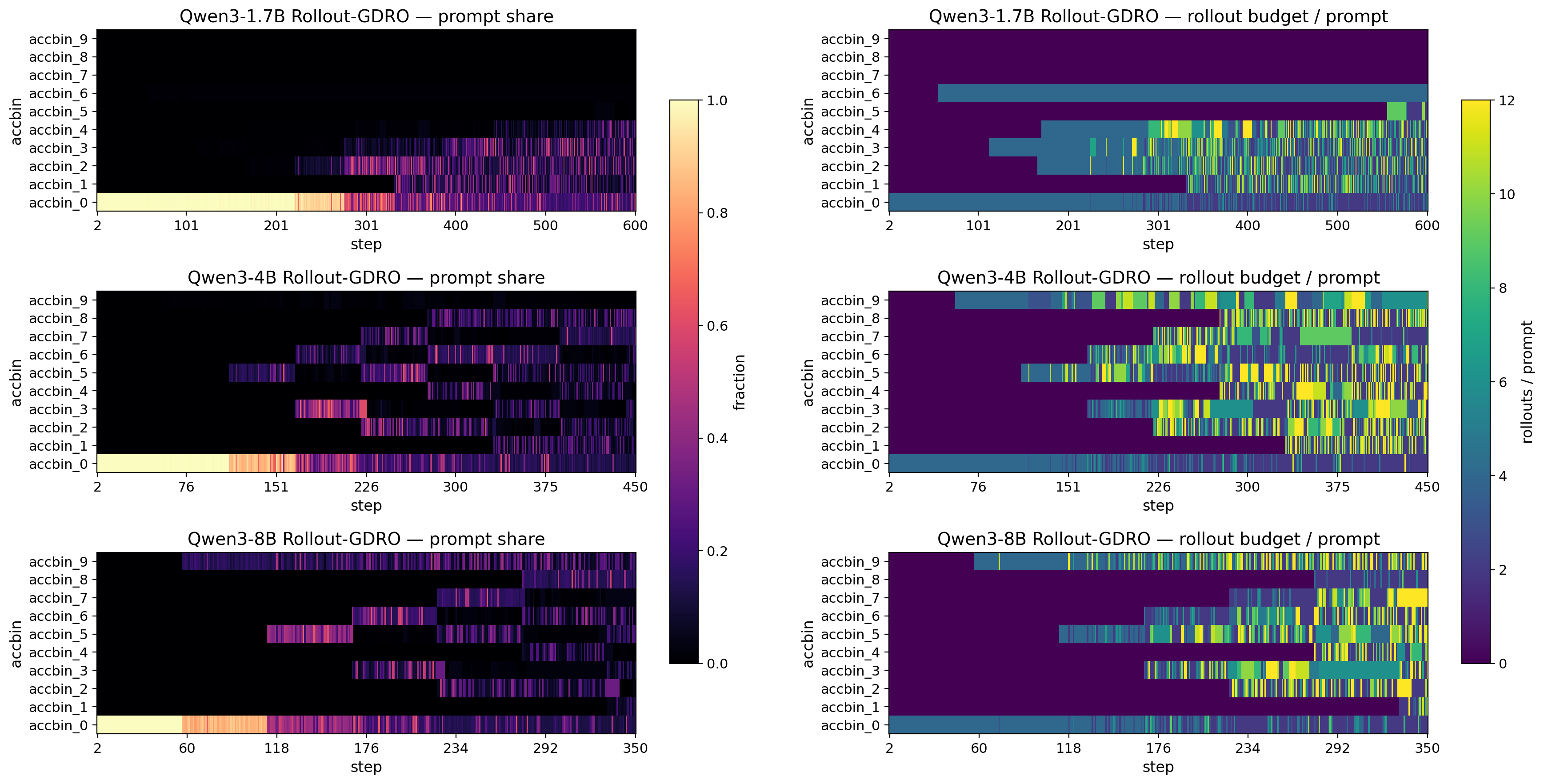}
    \caption{\textbf{The Budget Frontier.} A comparison of the dataset's natural difficulty distribution (Left) versus the adversarial rollout allocation (Right) for 1.7B, 4B, and 8B models. The color intensity in the Right column represents the number of rollouts assigned per prompt (from purple $\approx 2$ to yellow $\approx 12$). Note how the budgeter shifts compute intensity to the ``transition zone,'' decoupling resource allocation from data frequency: even when hard bins are rare (dark left), they receive maximum compute (bright right).}
    \label{fig:rollout_mechanism}
\end{figure*}

This comparison offers the clearest qualitative proof of our method's economic policy. For the 8B model (bottom row), the dataset mass (left) remains concentrated in easier bins for hundreds of steps. However, the budgeter (right) rapidly identifies the emerging capability in \texttt{accbin\_6} and above, locking high-compute resources onto these rare prompts. In contrast, the 1.7B model (top row) takes significantly longer to leave \texttt{accbin\_0}, and the budget frontier moves sluggishly, confirming that the adversary adapts the curriculum pace to the model's intrinsic scaling laws.

\subsubsection{Macro-Dynamics of Allocation}
To quantify these trends, Figure \ref{fig:rollout_scalars} presents the macro-level training dynamics. The four stacked traces---Mean Accuracy Bin Index, High-Bin Mass, High-Bin Budget Share, and Entropy---provide a unified view of how Rollout-GDRO migrates compute toward harder domains.

\begin{figure}[t]
    \centering
    \includegraphics[width=0.95\linewidth]{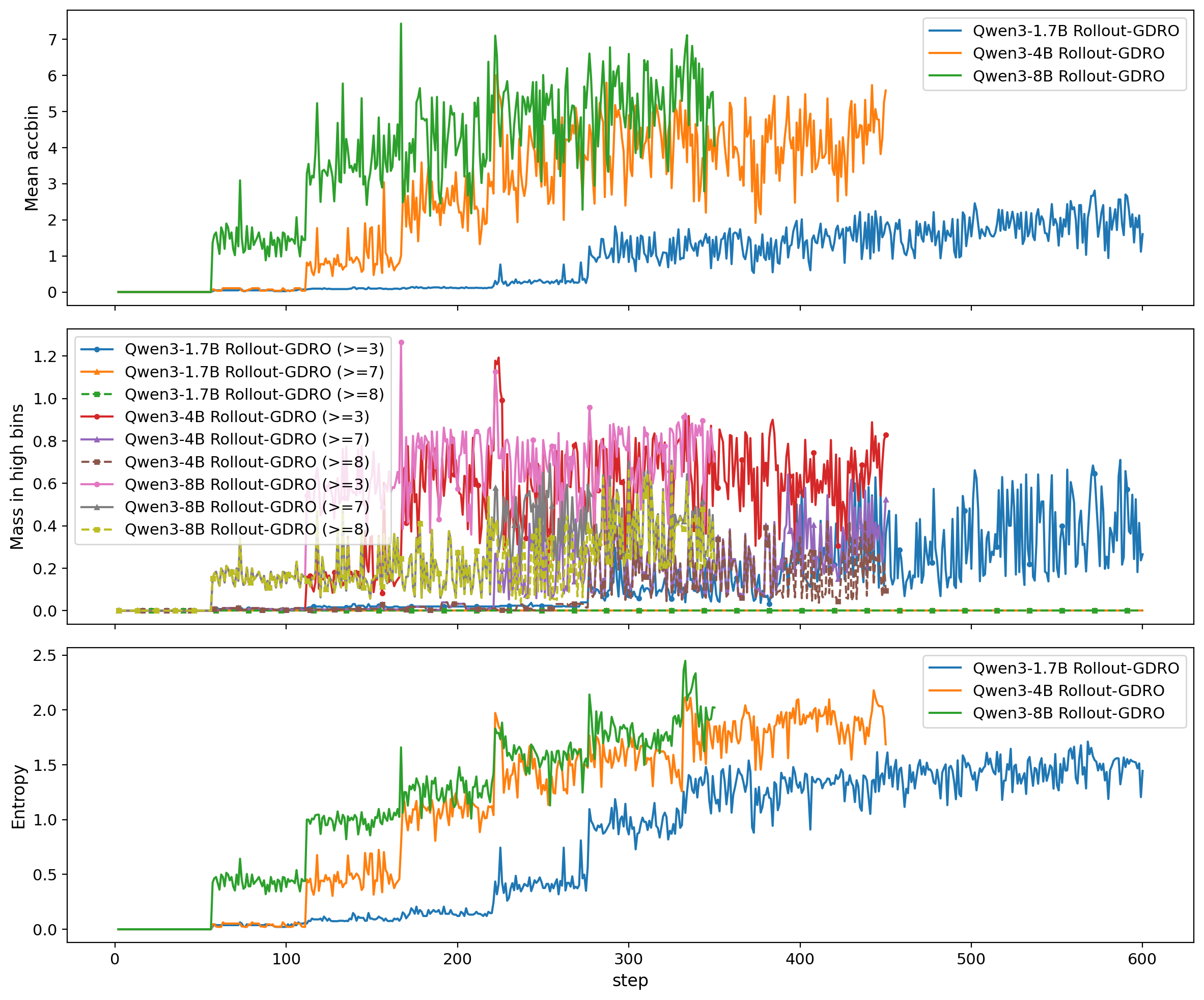}
    \caption{\textbf{Macro-Level Allocation Dynamics (Rollout-GDRO).} (Top) The Mean Accuracy Bin Index tracks the rising difficulty of targeted prompts. (Middle) The ``Mass in High Bins'' trace serves as quantitative evidence of the rollout adversary: it shows the dual variable mechanism keeping the total budget fixed while aggressively reweighting toward hard groups ($\ge \texttt{accbin\_8}$). (Bottom) Entropy metrics confirm that the 8B model (green) sustains a diverse allocation strategy even as it conquers lower difficulties, contrasting with the slower migration of the 1.7B model (blue).}
    \label{fig:rollout_scalars}
\end{figure}

The ``Mass in High Bins'' trace is particularly revealing. It serves as direct evidence of the DRO objective in action: as the models improve, the adversary steadily reallocates the fixed global budget toward bins $\ge 7$. This reallocation correlates directly with the inflection points observed in the pass@8 accuracy tables. Furthermore, the shared axes highlight distinct scaling trends: the 8B model (green line) learns to push its budget into high-difficulty bins significantly earlier than the 1.7B model (blue line), validating that larger capacity enables more aggressive curriculum acceleration.

\noindent\textbf{Variance-aware compute efficiency.}
Beyond shifting budget toward harder bins, we can directly test whether the rollout adversary reduces the \emph{uncertainty} of the bin-wise training signal at fixed compute.
Let $n_b(t)$ denote the realized number of rollouts per prompt allocated to bin $b$ at step $t$, and let $q_t(b)$ denote the prompt share.
Using an offline bin-wise variability proxy $\hat\sigma_b$ (estimated once from the training logs as the empirical standard deviation of a per-prompt GRPO signal within each bin, e.g., the prompt-level loss $\ell(x;\theta)$), we define a weighted standard-error proxy
\begin{equation}
    \mathrm{WSE}(t)\triangleq \sum_{b=1}^{B} q_t(b)\,\frac{\hat\sigma_b}{\sqrt{n_b(t)}}.
\end{equation}
We compare against a compute-matched uniform-rollout baseline by setting $n_b(t)\equiv \bar n$ for all bins $b$, which satisfies the mean-rollout constraint $\sum_{b=1}^B q_t(b)\, n_b(t)=\bar n$ and therefore matches overall sampling compute at each step.
As shown in Figure~\ref{fig:extra_diagnostics} (right), Rollout-GDRO consistently attains lower $\mathrm{WSE}(t)$ than the uniform baseline throughout training.
Averaged over the plotted horizon, this corresponds to relative reductions of \textbf{37.1\%} (1.7B), \textbf{22.6\%} (4B), and \textbf{33.4\%} (8B), supporting the interpretation that Rollout-GDRO improves gradient information efficiency by allocating more rollouts to high-variance bins.

\subsubsection{Discrete Economic Phases}
Finally, Figure \ref{fig:rollout_snapshots} decomposes the continuous training process into discrete ``chapters'' of the curriculum. These snapshots clarify the magnitude of the adversarial intervention at key training stages (Start, Early, Mid, Late).

\begin{figure*}[t]
    \centering
    \includegraphics[width=\textwidth]{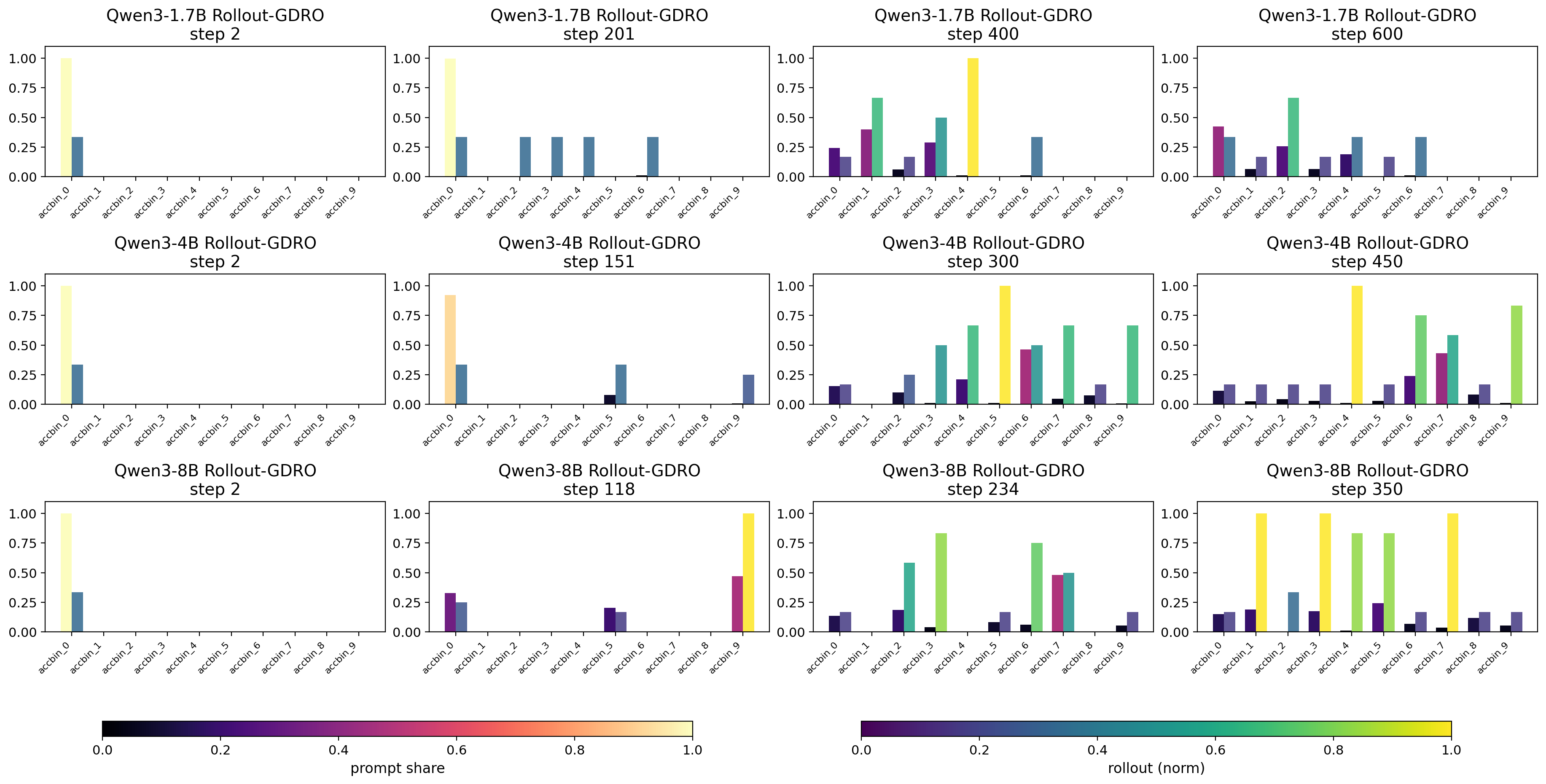}
    \caption{\textbf{Snapshots of the Allocation Economy.} Paired bars at four canonical steps showing the dataset share (dark blue) versus the normalized rollout budget (light blue) for each bin. This visualizes the \textbf{``Multiplier Effect''}: by Step 300, the 4B model allocates $>80\%$ of its budget to \texttt{accbin\_5+}, even though these bins contain $<20\%$ of the data. This explicitly demonstrates how Rollout-GDRO amplifies the signal from rare, high-value prompts.}
    \label{fig:rollout_snapshots}
\end{figure*}

The paired bars (Dataset Share vs. Rollout Budget) illustrate a massive \textbf{Multiplier Effect}. For example, at Step 300, the 4B model allocates over $80\%$ of its compute budget to bins $\ge 5$, despite these bins constituting less than $20\%$ of the training data. This confirms that our method creates a highly non-uniform economic policy that gives 5--10$\times$ more rollouts to the ``reasoning frontier'' than a uniform baseline would. This behavior is model-specific: the 8B row shows an even faster shift, embracing high-bin budgeting early in training (Step 118), which aligns with the rapid saturation of easy tasks observed in our qualitative analysis.

\begin{keyfindingbox}{Key Finding 3}
Rollout-GDRO autonomously identifies the ``transition zone'' of difficulty and concentrates computational resources there. This results in a highly non-uniform economic policy where rare, high-value prompts receive up to 10$\times$ more rollouts than a uniform baseline, maximizing gradient information efficiency.
\end{keyfindingbox}

\section{Additional Related Work}
\label{sec:related_work}

Our work studies reasoning post-training at the intersection of (i) RLVR/GRPO-style policy optimization, (ii) distributionally robust optimization (DRO) and group robustness, and (iii) adaptive allocation of training-time compute.

\subsection{RLVR and Post-training for Reasoning}

RL-based post-training has become central for improving reasoning behaviors in LLMs. PPO \citep{schulman2017proximal} underpins RLHF-style alignment \citep{ouyang2022training}, while GRPO \citep{shao2024deepseekmath} offers a value-free, group-normalized alternative that has proven effective for verifiable-reward domains such as math. Parallel research improves reward quality and credit assignment through process supervision and step-level signals \citep{lightman2023let,wang2023mathshepherd}, as well as iterative refinement and self-improvement mechanisms \citep{gulcehre2023reinforced}. More recently, large-scale RLVR has enabled open reasoning models trained primarily from verifiable rewards, highlighting the potential of pure RL to elicit sophisticated behaviors such as self-reflection and verification \citep{guo2025deepseekr1}.

However, a growing body of work suggests that \emph{where} RLVR learns and \emph{how} compute is spent are both highly non-uniform. Token-level analyses indicate that RLVR gains can concentrate on a small fraction of high-entropy ``forking'' tokens that control reasoning branches \citep{wang2025beyond8020}, while other studies argue RLVR may implicitly incentivize correct reasoning patterns already latent in the base model \citep{wen2025rlvr_incentivizes}. At inference time, test-time scaling via longer ``thinking'' traces can be non-monotonic and may induce overthinking \citep{ghosal2025thinkingmore}, and long-CoT reasoning models can exhibit looping pathologies under low-temperature decoding \citep{pipis2025looping}. In parallel, several works explore alternative ways to trade off accuracy and compute, including budget-aware evaluation and compute-normalized comparisons \citep{wang2024token_economies}, and inference-time orchestration strategies that decouple accuracy from raw CoT length \citep{madaan2025rethinking_thinking}. Our work is complementary: rather than proposing a new reward or inference strategy, we focus on \emph{training-time} mechanisms that adaptively steer (a) which prompts are sampled and (b) how many rollouts are allocated, with the goal of improving robustness and compute efficiency.

\subsection{Distributionally Robust Optimization in Supervised Learning}

DRO formalizes robustness by minimizing worst-case risk over an ambiguity set of distributions around the empirical training distribution \citep{ben1999robust,rahimian2019dro_review}. In supervised learning, a common choice is divergence-based ambiguity sets, yielding objectives that emphasize performance under distribution shift and provide statistical guarantees \citep{namkoong2016stochastic,duchi2021statistics}. Wasserstein-based DRO offers an alternative geometry with tractable reformulations and strong finite-sample guarantees \citep{esfahani2018wasserstein}. Within deep learning, GDRO \citep{sagawa2020distributionally} operationalizes robustness to hidden stratification and spurious correlations by optimizing the maximum loss across pre-defined groups, and has become a standard tool for improving worst-group accuracy. On the algorithmic side, group-robust learning admits a natural game-theoretic and online learning interpretation; in particular, \citet{soma2022optimal} connect GDRO to no-regret dynamics (e.g., EXP3 variants), which directly motivates our use of \textbf{GDRO-EXP3P} for adversarial prompt reweighting.

Our setting departs from classical supervised GDRO in two ways. First, we \emph{do not assume static group labels}; instead we use an \emph{online} difficulty classifier based on pass@k to define groups that evolve with the policy. Second, we introduce a second adversary that controls \emph{compute allocation} (rollouts) under a budget constraint, extending the DRO perspective beyond data distribution shifts to \emph{training-time resource shifts}.

\subsection{Robust and Distributionally Robust Reinforcement Learning}

Robust RL traditionally models uncertainty in environment dynamics and seeks policies that perform well under worst-case transition perturbations, e.g., robust Markov decision processes \citep{iyengar2005robust,nilim2005robust} and distributionally robust MDP formulations \citep{xu2012distributionally}. Recent work develops statistical and computational characterizations of robust RL \citep{panaganti2022robust}. In contrast, our work keeps the underlying environment fixed and instead treats \emph{prompt difficulty} and \emph{compute allocation} as adversarially controlled quantities during LLM post-training. This yields a robustness lens that is closer to group robustness over tasks/prompts than to worst-case transition uncertainty.

\subsection{Curriculum, Adaptive Compute, and Data Value}

Adaptive curricula and compute allocation are increasingly recognized as first-class components of reasoning systems. Curriculum-based post-training pipelines such as Light-R1 \citep{wen2025lightr1} explicitly stage data difficulty and objectives (SFT/DPO/RL) to elicit long-CoT behaviors. At inference time, scaling laws and compute-allocation policies highlight that difficult instances require more budget, but that naive increases in ``thinking'' can be inefficient or unstable \citep{snell2024scaling,ghosal2025thinkingmore}. Recent work proposes learning policies to allocate test-time compute \citep{setlur2024learning} and explores search-based or tree-structured generation to improve exploration of the reasoning space \citep{xie2024mcts,hou2025treerl}. Our \textbf{Rollout-GDRO} can be viewed as moving this idea to \emph{training time}: we allocate rollout budgets to groups to improve gradient estimator quality under a global constraint, akin to adaptive variance reduction \citep{rubinstein2016simulation}.

Recent work has also begun to articulate compute-optimal ``RL scaling'' workflows for LLM post-training by empirically studying how to allocate a fixed sampling budget across (i) the number of problems per batch, (ii) the number of parallel rollouts per problem (GRPO group size), and (iii) the number of sequential update steps. In particular, the IsoCompute Playbook reports that compute-optimal rollout parallelism can often be summarized by simple sigmoidal/logit fits as total sampling compute grows \citep{cheng2026isocompute}. In contrast, our approach is more algorithmic: Prompt-GDRO and Rollout-GDRO adaptively reshape the effective prompt distribution and per-group compute online, without assuming a pre-fit scaling law. Concretely, we hold the mean sampling budget fixed and redistribute it across online-defined difficulty subgroups within each batch, rather than optimizing compute allocation across global axes such as problems-per-batch, rollouts-per-problem, or number of update steps. Relatedly, \citet{qi2025brpo} propose Budget Relative Policy Optimization (BRPO) to optimize anytime reasoning performance across varying token budgets, complementing our training-time allocation perspective.

Finally, recent theoretical discussion emphasizes that the \emph{value of data} depends on the learner's computational constraints and even on data ordering. The notion of \emph{epiplexity} formalizes this perspective and motivates principled data selection and dataset interventions \citep{finzi2026epiplexity}; see also community discussion \citep{finzi2026epiplexity_thread}. Our work is exploratory in this broader direction: we study whether simple, online, adversarial control loops over prompt reweighting and compute allocation can serve as a practical ``steering subsystem'' for reasoning post-training, complementing concurrent efforts that analyze RLVR mechanisms \citep{wang2025beyond8020,wen2025rlvr_incentivizes}, rethink thinking-token tradeoffs \citep{madaan2025rethinking_thinking}, and diagnose instabilities such as looping \citep{pipis2025looping}.

\section{Limitations and Future Work}
\label{sec:limitations}

\paragraph{Bridge: from two adversaries to open questions.}
Our empirical results suggest that the two adversaries introduced in this work---\emph{Prompt-GDRO} (adaptive prompt reweighting over online difficulty bins) and \emph{Rollout-GDRO} (adaptive rollout allocation under a global compute budget)---capture complementary levers for improving reasoning post-training.
Both adversaries are driven by the same online difficulty signal (stable binning via empirical pass@k),
but intervene at different points in the pipeline: Prompt-GDRO shapes the \emph{data distribution} presented to the learner, while Rollout-GDRO shapes the \emph{per-sample signal-to-noise ratio} of policy-gradient updates by modulating rollout counts.
This coupling is central to the ``beyond uniform'' thesis of our framework, yet our current study is necessarily exploratory and leaves several important questions unresolved.

\paragraph{Empirical scope and missing full-factorial ablations.}
This paper is intended as an exploratory report to the community: we demonstrate that \emph{distribution-shaping} tools from GDRO-style thinking can be productively instantiated inside a modern reasoning post-training stack, but we do not claim to have exhaustively optimized the design space.
In particular, we have not yet performed a full factorial ablation over \emph{all} distribution-shaping components and their interactions (e.g., prompt selection/reweighting, compute allocation, and online binning choices).
Concrete future work includes:
\begin{itemize}[nosep,leftmargin=*]
    \item \textbf{Binning and classifier hyperparameters.} We used a fixed binning scheme in most experiments; broader sweeps over the number of bins, smoothing horizons, and bin-stability heuristics are needed. In preliminary sweeps we often observed performance peaking around $\approx 6$ bins, but this is not yet a robust conclusion.
    \item \textbf{Joint training with multiple adversaries.} Most experiments isolate Prompt-GDRO or Rollout-GDRO; a systematic study of their \emph{joint} behavior (and staged curricula) is still missing.
    \item \textbf{Rollout allocator design.} We only explored a limited set of discrete rollout arms and budget schedules; future work should examine broader arm sets, alternative constrained optimizers, and adaptive rollout bounds.
\end{itemize}

\paragraph{RL scaling and compute-optimal post-training.}
Our experiments are conducted at a single (moderate) scale, and we do not yet understand how adversarial prompt reweighting and rollout budgeting interact with emerging ``RL scaling'' behavior as total post-training compute grows. Recent work \citep{liu2025tricks_traps,liu2025prorl,khatri2025scaling_rl_compute,cheng2026isocompute} study compute-optimal allocation rules and scaling behavior for RL of LLMs (e.g., by fitting simple sigmoidal/logit trends for optimal rollouts-per-problem under larger budgets). A natural next step is to evaluate whether Prompt-GDRO and Rollout-GDRO shift these compute-optimal frontiers (or their saturation points) across larger compute budgets, model sizes, and prompt mixtures.

\paragraph{Adversarial game computation and systems overhead.}
A practical limitation of our approach is that it introduces additional online machinery beyond standard GRPO:
(i) computing and maintaining a difficulty classifier (pass@k statistics, stable bin assignment),
(ii) updating adversarial distributions (EXP3P-style weight updates),
and (iii) (for Rollout-GDRO) enforcing compute constraints while selecting discrete rollout arms. As an illustration for trade-offs, we measure the driver-side advantage stage time (\texttt{timing\_s/adv}, in sec/step) and find that for the Qwen3-4B runs (mean over the last 100 logged steps after warmup) GRPO requires \texttt{0.043} sec/step, Prompt-GDRO requires \texttt{0.355} sec/step, and Rollout-GDRO requires \texttt{0.446} sec/step. This overhead is distinct from our \emph{sampling-compute neutrality} claims, which refer to the mean rollout budget.

We find GDRO’s adversary/advantage-side bookkeeping can materially increase the driver-side advantage stage, and is one contributor to end-to-end slowdown.
While each component is lightweight in isolation, their combination can create nontrivial systems overhead at scale.
An important engineering direction is to design \emph{asynchronous} and \emph{streaming} variants (e.g., delayed bin updates, batched adversary steps, or partially offloaded bookkeeping) that preserve the core objective while minimizing training slowdown.

\paragraph{Sensitivity to online binning and reward noise.}
Our ``group'' notion is induced by an online estimator of difficulty, rather than static metadata.
While this avoids reliance on brittle human-defined group labels, it also introduces potential noise sources:
estimated pass@k may have high variance early in training, and bin boundaries can induce discontinuous group reassignment.
These effects can bias both adversaries if not handled carefully.
Future work should explore principled uncertainty-aware binning (e.g., Bayesian estimators or confidence-bound assignment rules),
as well as robustness to verifier calibration drift and reward-model nonstationarity.
It may also be valuable to incorporate \emph{process-level} or \emph{stepwise} supervision signals (when available) to stabilize difficulty estimation, rather than relying solely on outcome-only pass@k.

\paragraph{Generalization and evaluation beyond the current pipeline.}
Although we report improvements on advanced benchmarks, we do not yet provide a systematic evaluation protocol for distribution shift.
A natural next step is to test whether the adversarial curricula learned on one training mixture transfer to new mixtures,
or whether they overfit to dataset-specific artifacts.
More broadly, our method suggests a future direction where GDRO-style training is used not only to improve average in-distribution metrics,
but to target robustness to hidden stratification and distribution shift \citep{sagawa2020distributionally,oakden2020hidden}.
Crucially, this should be framed as future work: \emph{out-of-distribution generalization is not a primary motivation of this paper}, but an important downstream question enabled by the methodology.

\paragraph{Toward learning from the model's own experience.}
A key longer-term direction is to couple our adversarial distribution shaping with \emph{experience generation} and \emph{continual post-training}.
For example, one can imagine a closed loop where the model:
(i) proposes new problems or perturbations, (ii) evaluates its own failures, and (iii) uses Prompt-GDRO/Rollout-GDRO to prioritize the resulting frontier.
This connects naturally to self-training and self-improvement paradigms such as STaR-style bootstrapping \citep{zelikman2022star}, Quiet-STaR-style implicit ``thinking'' training \citep{zelikman2024quietstar}, self-rewarding / judge-based optimization \citep{yuan2024selfrewarding}, and RLAIF-style scalable feedback \citep{lee2023rlaif}.
Realizing such a pipeline requires addressing continual-learning issues (e.g., catastrophic forgetting \citep{rolnick2019experience}),
maintaining replay buffers \citep{schaul2015prioritized}, and preventing reward hacking under self-generated supervision.

\paragraph{Beyond exponential-weights GDRO: richer ambiguity sets and scalable solvers.}
Our current instantiation uses exponential-weights style updates over a discrete set of groups/arms.
However, the broader DRO literature offers many alternative ambiguity sets and solution methods that may be better suited for future scaling:
$f$-divergence DRO admits stochastic-gradient formulations \citep{namkoong2016stochastic},
and recent work studies computationally efficient large-scale solvers for DRO objectives such as CVaR and $\chi^2$-based uncertainty sets \citep{levy2020large}.
Wasserstein DRO provides a complementary metric-based robustness lens with tractable reformulations and finite-sample guarantees \citep{esfahani2018wasserstein}.
On the RL side, robust MDP formulations \citep{iyengar2005robust,nilim2005robust} and scalable $\phi$-divergence regularization approaches \citep{panaganti2024phidivergence} suggest additional ways to model uncertainty and allocate resources under environment shift.
A concrete research agenda is to identify which ambiguity sets best correspond to the \emph{operational} failure modes of reasoning post-training (e.g., verifier mismatch, data mixture drift, or hard-sample scarcity), and to develop scalable solvers compatible with modern LLM training.

\paragraph{Beyond robustness: adversarial reasoning objectives for safety and risk.}
Finally, our adversaries currently act on \emph{what} is trained (prompt distribution) and \emph{how intensively} it is trained (rollout budgets),
but not on richer forms of adversarial reasoning objectives.
An important future direction is to design adversaries that target \emph{specific} reasoning desiderata beyond accuracy, such as safety, risk avoidance, and constraint satisfaction.
This connects to alignment frameworks that rely on rule-based or AI-generated feedback \citep{bai2022constitutional,lee2023rlaif},
as well as risk-sensitive optimization objectives.
We view this as a distinct problem from classical DRO: rather than protecting against distributional uncertainty alone,
the goal becomes to adversarially surface and correct \emph{undesirable reasoning behaviors} (e.g., unsafe tool use, brittle shortcuts, or overconfident hallucinations) under realistic deployment constraints.

\section{Conclusion}
\label{sec:conclusion}

We introduced two multi-adversary GDRO frameworks for reasoning post-training that move beyond the static uniformity of standard GRPO by defining \emph{dynamic} groups via an online difficulty classifier (stable pass@k binning). On this shared grouping layer, Prompt-GDRO uses an EMA-debiased GDRO-\textsc{EXP3P} reweighting to concentrate updates on persistently hard bins without frequency artifacts, and Rollout-GDRO uses a GDRO-\textsc{EXP3P} adversary with a shadow-price controller to redistribute rollouts under a fixed mean compute budget. In this work, we evaluate Prompt-GDRO and Rollout-GDRO independently (no coupling); studying their joint dynamics is left to future work (\S\ref{sec:limitations}). Across Qwen3-Base scales, both mechanisms are compute-neutral in the sense of \S\ref{sec:experiments} yet improve pass@8 over GRPO by up to 13.13\% (Prompt-GDRO) and 10.64\% (Rollout-GDRO), and diagnostics indicate an emergent curriculum as sampling weights and rollout budgets track the evolving reasoning frontier. We hope these results motivate further work on dynamic grouping and DRO-style training games as principled components of future reasoning post-training pipelines.

\newpage
\bibliography{ref,gdro-ref}
\bibliographystyle{colm}
\newpage
\appendix
\section{Experiment Details}
\label{app:experiment_details}

In this section, we detail the experimental setup, including the shared optimization hyperparameters and the specific configurations for our adversarial mechanisms. All experiments were conducted using the Qwen3-Base model family (1.7B, 4B, 8B) using BFloat16 (BF16) mixed precision and FlashAttention 2. The code and configuration files used to reproduce these results are available at \url{https://github.com/kishanpb/verl-gdro}.

\subsection{Shared Training Hyperparameters}
All methods (GRPO Baseline, Prompt-GDRO, and Rollout-GDRO) utilize a common post-training foundation based on the Group Relative Policy Optimization (GRPO) objective.

\subsubsection{Optimization \& Architecture}
\begin{itemize}
    \item \textbf{Global Train Batch Size}: 256
    \item \textbf{Global Validation Batch Size}: 128
    \item \textbf{Total Training Steps}: 1000
    \item \textbf{Optimizer}: AdamW
    \item \textbf{Actor Learning Rate}: $1 \times 10^{-6}$
    \item \textbf{KL Penalty Coefficient} ($\beta_{\text{KL}}$): 0.001
    \item \textbf{PPO Clip Range}: $[1 - \epsilon_{\text{low}}, 1 + \epsilon_{\text{high}}]$ where $\epsilon_{\text{low}}=0.2$, $\epsilon_{\text{high}}=0.28$
    \item \textbf{Advantage Normalization}: Yes (Normalized by group standard deviation)
    \item \textbf{Advantage Clipping}: $[-5, 5]$
\end{itemize}

\subsubsection{Rollout Generation}
\begin{itemize}
    \item \textbf{Inference Engine}: vLLM
    \item \textbf{Training Rollouts per Prompt} ($G$): 4 (Base setting)
    \item \textbf{Validation Rollouts per Prompt}: 8
    \item \textbf{Sampling Temperature}: 0.6 (Training)
    \item \textbf{Top-p}: 0.8
    \item \textbf{Top-k}: 20
    \item \textbf{Reward}: Verifiable math correctness with $r(x,y)\in\{-1,+1\}\subset[-1,1]$, implemented via the \texttt{math/math-dapo} modules in \texttt{verl}.
\end{itemize}

\subsection{Adversarial Configuration}
Our Multi-Adversary framework introduces specific hyperparameters for the \textbf{EXP3P} algorithms governing data sampling and compute allocation.

\subsubsection{Prompt-GDRO (The Data Adversary)}
This mechanism reweights the prompt distribution based on the intensive difficulty of online groups.
\begin{itemize}
    \item \textbf{Grouping Mechanism}: Online Pass@k (10 bins)
    \item \textbf{Adversary Algorithm}: EMA-Debiased GDRO-EXP3P
    \item \textbf{Adversary Learning Rate} ($\eta_q$): 0.65
    \item \textbf{Exploration Rate} ($\gamma$): 0.01
    \item \textbf{Score EMA Decay} ($\beta$): 0.12
    \item \textbf{Max Class Weight Cap}: 15.0
    \item \textbf{Loss Normalization}: Normalized by class share (to prevent frequency bias)
\end{itemize}

\subsubsection{Rollout-GDRO (The Compute Adversary)}
This mechanism allocates discrete rollout counts $n_b$ to minimize gradient variance under a global budget constraint.
\begin{itemize}
    \item \textbf{Grouping Mechanism}: Online Pass@k (10 bins, edges at $0.1, 0.2, \dots, 0.9$)
    \item \textbf{Rollout Arm Range}: $n \in [n_{\min}, n_{\max}]$ where $n_{\min}=2, n_{\max}=12$ (Multiplier $3.0\times$ base)
    \item \textbf{Global Budget Constraint} ($\bar{n}$): 4 rollouts (average per prompt)
    \item \textbf{Dual Learning Rate} ($\alpha_{\mu}$): 0.05
    \item \textbf{Arm Learning Rate} ($\eta$): 0.65
    \item \textbf{Arm Exploration Rate} ($\gamma$): 0.01
    \item \textbf{Arm Score EMA Decay}: 0.4
    \item \textbf{Budget Matching}: Exact constrained selection via Dynamic Programming
\end{itemize}
\newpage
\section{Main Theoretical Results: A Game-and-Variance View}
\label{sec:main_theory}

This section develops a unified theoretical lens for the \emph{two adversarial controllers} in our framework:
(i) \textbf{Prompt-GDRO}, which adaptively reshapes the prompt distribution to emphasize difficult bins, and
(ii) \textbf{Rollout-GDRO}, which adaptively reallocates rollouts across bins to reduce estimation noise under a compute budget.
Our goal is \emph{not} to provide deep-network convergence guarantees for GRPO, but rather to (a) formalize the
surrogate objectives implicitly optimized by these controllers, and (b) connect their update rules to standard
no-regret / mirror-descent analyses that explain the qualitative behaviors observed empirically (e.g., ``traveling waves'' and staircase compute allocation).

A condensed statement of these results (omitting most proofs) appears in Section~\ref{sec:analysis};
this appendix provides complete statements and proofs for reference.

Throughout, we adopt the GDRO formulation from \Cref{sec:preli} (Eq.~\eqref{eq:gdro_general}).
Let $g(x)\in\{1,\dots,B\}$ be the (online) grouping rule from \Cref{sec:analysis}, and define the group losses below. We use $\{L_b\}_{b=1}^B$ primarily in the Prompt-GDRO analysis; Rollout-GDRO instead optimizes a separate budgeted variance objective.
\begin{equation}
    L_b(\theta)\;\triangleq\;\mathbb{E}\!\left[\ell(x;\theta)\mid g(x)=b\right],
    \qquad b\in\{1,\dots,B\},
    \label{eq:Lb_def_main_theory}
\end{equation}
where $\ell(x;\theta)$ is the prompt-level GRPO loss from \Cref{sec:preli_grpo}.%
\footnote{In reward maximization form, one may take $L_b(\theta)=-J_b(\theta)$, where $J_b$ is the group-conditional expected reward (this sign convention is standard in RL; see, e.g., \citealp{agarwal2019reinforcement}).}

\subsection{Prompt-GDRO as Entropic GDRO and No-Regret Game Dynamics}
\label{sec:theory_prompt_gdro}

We start from the canonical finite-group robust objective
\begin{equation}
    \min_{\theta}\;\max_{q\in\Delta_B}\; f(\theta,q),
    \qquad
    f(\theta,q)\;\triangleq\;\sum_{b=1}^B q(b)\,L_b(\theta),
    \label{eq:prompt_gdro_game_main}
\end{equation}
where $\Delta_B$ is the probability simplex over $B$ groups.
The inner maximization selects a worst-case mixture of groups, while the outer minimization trains a policy robust to this mixture.

\subsubsection{Entropy-regularized inner maximization and the log-sum-exp surrogate}
\label{sec:theory_lse}

A recurring theme in this paper is that our adversaries are implemented by \emph{exponential-weights} updates (i.e., entropic mirror descent/ascent). 
A key consequence is that the exact max/min over the simplex is replaced by an \emph{entropy-regularized} version, yielding a smooth
``soft'' worst-group objective.

The next lemma is a standard variational identity (often called the Gibbs variational principle / the convex conjugacy between log-sum-exp and negative entropy); see, e.g., \citet{boyd2004convex} or \citet{wainwright2008graphical}. We include a proof for completeness.
\begin{lemma}[Entropy-regularized maximum and log-sum-exp]
\label{lem:lse_variational}
For any $v\in\mathbb{R}^m$ and $\eta>0$,
\begin{equation}
\label{eq:lse_variational}
\max_{p\in\Delta_m} \left\{\langle p, v\rangle + \frac{1}{\eta} H(p)\right\}
\;=\;
\frac{1}{\eta}\log\!\left(\sum_{i=1}^m e^{\eta v_i}\right),
\end{equation}
where $H(p)\triangleq -\sum_{i=1}^m p_i\log p_i$ is the Shannon entropy.
Moreover, the maximizer is unique and equals the softmax distribution
\begin{equation}
\label{eq:softmax_solution}
p^\star_i(v) \;=\; \frac{e^{\eta v_i}}{\sum_{j=1}^m e^{\eta v_j}}
\qquad (i=1,\dots,m).
\end{equation}
\end{lemma}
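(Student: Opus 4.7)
The plan is to attack this as a concave maximization over the simplex by writing the KKT conditions for the single equality constraint $\sum_i p_i = 1$, and then to verify the claimed value of the maximum by direct substitution.

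First, I would observe that the objective $F(p) \triangleq \langle p, v\rangle + \tfrac{1}{\eta} H(p)$ is strictly concave on $\Delta_m$: the linear term is trivially concave, and $H$ is strictly concave on the (relative) interior of $\Delta_m$ since $-x\log x$ has second derivative $-1/x < 0$ for $x>0$. Strict concavity already guarantees that any maximizer is unique, so the whole lemma reduces to producing one candidate and checking the objective value there.

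Next, I would argue that the maximizer lies in the interior, so the nonnegativity constraints $p_i \ge 0$ are all inactive and only the normalization constraint is binding. The key fact is the steepness of negative entropy at the boundary: $\partial F/\partial p_i = v_i - \tfrac{1}{\eta}(\log p_i + 1) \to +\infty$ as $p_i \to 0^+$, so a tiny amount of mass on any empty coordinate strictly increases $F$; hence the optimum puts strictly positive mass on every index. With only the equality constraint active, I would form the Lagrangian
\begin{equation}
\mathcal{L}(p,\lambda) \;=\; \sum_{i=1}^m p_i v_i \;-\; \tfrac{1}{\eta}\sum_{i=1}^m p_i \log p_i \;-\; \lambda\!\left(\sum_{i=1}^m p_i - 1\right),
\end{equation}
set $\partial \mathcal{L}/\partial p_i = 0$ to get $\log p_i = \eta v_i - (1+\eta\lambda)$, and then eliminate $\lambda$ via $\sum_i p_i = 1$. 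This forces the normalizing constant $e^{-(1+\eta\lambda)} = 1/\sum_j e^{\eta v_j}$, producing the softmax candidate in Eq.~\eqref{eq:softmax_solution}.

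Finally, I would plug $p^\star$ back into $F$ and use the identity $-\log p^\star_i = -\eta v_i + \log\sum_j e^{\eta v_j}$ to telescope the entropy and linear terms, leaving $F(p^\star) = \tfrac{1}{\eta}\log\sum_j e^{\eta v_j}$, which matches Eq.~\eqref{eq:lse_variational}. Strict concavity from the first step then upgrades this to the unique maximizer claim. I do not anticipate a real obstacle here: this is the standard Gibbs/Legendre duality between negative entropy and log-sum-exp, and the only mildly delicate point is justifying the interior argument so that one may ignore the nonnegativity multipliers---handled cleanly by the $-\log p_i$ blow-up at the boundary.
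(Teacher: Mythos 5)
Your proposal is correct and follows essentially the same route as the paper's proof: a Lagrangian/stationarity argument for the single equality constraint, an interiority argument from the entropy term to dismiss the nonnegativity multipliers, normalization to obtain the softmax candidate, direct substitution for the log-sum-exp value, and strict concavity for uniqueness. Your treatment of the boundary blow-up of $\partial F/\partial p_i$ is slightly more explicit than the paper's, but the argument is the same.
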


\begin{proof}
Consider the Lagrangian for maximizing $\sum_i p_i v_i - \frac{1}{\eta}\sum_i p_i\log p_i$ subject to $\sum_i p_i=1$ and $p_i\ge 0$.
At an interior optimum (which holds here because the entropy term makes the objective \emph{strictly concave} and favors $p_i>0$), stationarity gives
\[
v_i - \frac{1}{\eta}(\log p_i + 1) = \lambda
\qquad\text{for all } i,
\] 
Here $\lambda\in\mathbb{R}$ is the Lagrange multiplier for the equality constraint $\sum_i p_i=1$ (so its sign is unconstrained); rearranging yields $p_i \propto e^{\eta v_i}$. Normalizing yields \eqref{eq:softmax_solution}.
Plugging \eqref{eq:softmax_solution} into the objective gives
\[
\sum_i p^\star_i v_i + \frac{1}{\eta}H(p^\star)
=
\frac{1}{\eta}\log\!\left(\sum_{i=1}^m e^{\eta v_i}\right),
\]
which proves \eqref{eq:lse_variational}. Uniqueness follows from strict concavity of $H(\cdot)$ on $\Delta_m$.
\end{proof}

\begin{corollary}[Smooth approximation quality]
\label{cor:lse_max_gap}
For any $v\in\mathbb{R}^m$ and $\eta>0$,
\begin{equation}
\label{eq:lse_gap}
\max_i v_i
\;\le\;
\frac{1}{\eta}\log\!\left(\sum_{i=1}^m e^{\eta v_i}\right)
\;\le\;
\max_i v_i + \frac{\log m}{\eta}.
\end{equation}
\end{corollary}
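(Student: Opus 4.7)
The plan is to derive both bounds directly from the variational identity in \Cref{lem:lse_variational}, which gives a clean one-shot derivation without any separate direct calculation. By \Cref{lem:lse_variational},
\begin{equation*}
\frac{1}{\eta}\log\!\left(\sum_{i=1}^m e^{\eta v_i}\right)
\;=\;
\max_{p\in\Delta_m}\left\{\langle p,v\rangle + \frac{1}{\eta}H(p)\right\},
\end{equation*}
so it suffices to sandwich the right-hand side between $\max_i v_i$ and $\max_i v_i + (\log m)/\eta$ using elementary facts about entropy on the simplex.

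For the \emph{lower bound}, I would use nonnegativity of the Shannon entropy: $H(p)\ge 0$ for every $p\in\Delta_m$. Dropping the entropy term can only decrease the objective, so
\begin{equation*}
\max_{p\in\Delta_m}\!\left\{\langle p,v\rangle + \frac{1}{\eta}H(p)\right\}
\;\ge\;
\max_{p\in\Delta_m} \langle p,v\rangle
\;=\;
\max_{i\in[m]} v_i,
\end{equation*}
where the last equality is the standard fact that a linear functional on the simplex is maximized at a vertex. Alternatively, one can simply observe $\sum_i e^{\eta v_i}\ge e^{\eta\max_i v_i}$, take logs, and divide by $\eta$; both routes give the same inequality.

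For the \emph{upper bound}, I would decouple the two terms by bounding each separately on the simplex: $\langle p,v\rangle\le \max_i v_i$ for every $p\in\Delta_m$, and $H(p)\le \log m$ with equality at the uniform distribution (a standard consequence of Jensen's inequality applied to $-\log$). Summing these,
\begin{equation*}
\max_{p\in\Delta_m}\!\left\{\langle p,v\rangle + \frac{1}{\eta}H(p)\right\}
\;\le\;
\max_i v_i + \frac{1}{\eta}\log m,
\end{equation*}
which yields the claimed gap of $\log m/\eta$.

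There is essentially no ``hard part'' here: both bounds follow immediately once one recognizes the variational form as a sum of a linear term (bounded via simplex vertices) and a scaled entropy term (bounded by $\log m$). The only minor pitfall to flag is making the argument work uniformly over $v\in\mathbb{R}^m$ without assuming boundedness, which is automatic because both bounding steps are purely algebraic and do not involve integrability or regularity of $v$.
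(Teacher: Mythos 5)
Your proof is correct, but it takes a genuinely different route from the paper's. The paper proves \eqref{eq:lse_gap} by two one-line inequalities on the sum of exponentials directly: $\sum_i e^{\eta v_i}\ge e^{\eta v_{\max}}$ for the lower bound and $\sum_i e^{\eta v_i}\le m\,e^{\eta v_{\max}}$ for the upper bound, then taking logs and dividing by $\eta$. You instead pass through the variational identity of \Cref{lem:lse_variational} and sandwich the entropy-regularized maximum using $0\le H(p)\le \log m$ together with the fact that a linear functional on $\Delta_m$ is maximized at a vertex; every step is valid and holds uniformly over $v\in\mathbb{R}^m$ as you note. The trade-off: the paper's argument is more elementary and self-contained (it does not depend on the lemma at all, so the corollary could be stated independently), while your argument makes the source of the $\log m/\eta$ gap conceptually transparent --- it is exactly the maximal entropy of the adversary's distribution --- which dovetails nicely with the KL-robust interpretation in \Cref{rem:kl_robust_view}. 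You even mention the paper's direct route as an alternative for the lower bound, so the two proofs are essentially interchangeable here; no gaps.
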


\begin{proof}
Let $v_{\max}=\max_i v_i$. Then $\sum_i e^{\eta v_i}\ge e^{\eta v_{\max}}$, giving the lower bound.
Also $\sum_i e^{\eta v_i}\le m e^{\eta v_{\max}}$, giving the upper bound.
\end{proof}

\begin{remark}[KL-robust view and ``implicit'' regularization]
\label{rem:kl_robust_view}
Let $u$ be the uniform distribution over $[m]$.
Using $\mathrm{KL}(p\|u)=\sum_i p_i\log p_i+\log m$ and $H(p)=-\sum_i p_i\log p_i$,
\eqref{eq:lse_variational} is equivalently
\begin{equation}
\label{eq:kl_form}
\frac{1}{\eta}\log\!\left(\sum_{i=1}^m e^{\eta v_i}\right)
=
\frac{\log m}{\eta}
+
\max_{p\in\Delta_m}\left\{\langle p,v\rangle - \frac{1}{\eta}\mathrm{KL}(p\|u)\right\}.
\end{equation}
Thus, the softmax distribution in \eqref{eq:softmax_solution} can be interpreted as the optimizer of a \emph{KL-penalized} DRO objective.
In our implementation, this entropy/KL term is not added explicitly to \eqref{eq:prompt_gdro_game_main};
it appears implicitly because the adversary is implemented by entropic mirror ascent (exponential weights).
\Cref{cor:lse_max_gap} quantifies the resulting max-gap: the soft objective approximates the hard max within $\log m/\eta$.
\end{remark}

\paragraph{Entropic GDRO surrogate.}
Applying \Cref{lem:lse_variational} with $m=B$ and $v=L(\theta)\triangleq(L_1(\theta),\dots,L_B(\theta))$ shows that
the entropy-regularized inner problem in \eqref{eq:prompt_gdro_game_main} yields the smooth robust surrogate
\begin{equation}
    \mathcal{R}_\eta(\theta)
    \;\triangleq\;
    \max_{q\in\Delta_B}\left\{\sum_{b=1}^B q(b)L_b(\theta)+\frac{1}{\eta}H(q)\right\}
    \;=\;
    \frac{1}{\eta}\log\!\left(\sum_{b=1}^B e^{\eta L_b(\theta)}\right).
    \label{eq:entropic_risk_prompt}
\end{equation}

\begin{corollary}[Entropic GDRO as a surrogate for worst-group loss]
\label{cor:entropic_surrogate}
For any $\theta$, the entropic surrogate in \eqref{eq:entropic_risk_prompt} satisfies
\begin{equation}
\label{eq:entropic_surrogate_bounds}
\max_{b\in[B]} L_b(\theta)
\;\le\;
\mathcal{R}_\eta(\theta)
\;\le\;
\max_{b\in[B]} L_b(\theta) + \frac{\log B}{\eta}.
\end{equation}
Equivalently, $\mathcal{R}_\eta(\theta)$ is the value of the entropy-regularized variant of the inner maximization in \eqref{eq:prompt_gdro_game_main}.
\end{corollary}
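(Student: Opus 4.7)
The plan is to derive \Cref{cor:entropic_surrogate} as an immediate specialization of \Cref{cor:lse_max_gap} (the log-sum-exp sandwich bound), which was already established earlier in the excerpt. The only work is to align the notation between the two statements and observe that the entropic surrogate $\mathcal{R}_\eta(\theta)$ is, by definition, exactly the log-sum-exp of the group losses.

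First, I would recall from \eqref{eq:entropic_risk_prompt} that, thanks to \Cref{lem:lse_variational} applied with $m=B$ and $v = L(\theta) \triangleq (L_1(\theta), \dots, L_B(\theta))$, we have the identity
\begin{equation*}
    \mathcal{R}_\eta(\theta)
    \;=\;
    \frac{1}{\eta}\log\!\left(\sum_{b=1}^B e^{\eta L_b(\theta)}\right).
\end{equation*}
Thus the surrogate is literally a log-sum-exp of the vector of group losses.

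Second, I would invoke \Cref{cor:lse_max_gap} with the same $m=B$ and $v_i = L_i(\theta)$. The corollary gives
\begin{equation*}
    \max_{b\in[B]} L_b(\theta)
    \;\le\;
    \frac{1}{\eta}\log\!\left(\sum_{b=1}^B e^{\eta L_b(\theta)}\right)
    \;\le\;
    \max_{b\in[B]} L_b(\theta) + \frac{\log B}{\eta},
\end{equation*}
and combining this sandwich with the identity above yields precisely \eqref{eq:entropic_surrogate_bounds}. The equivalence with the entropy-regularized inner maximization is then just \Cref{lem:lse_variational} restated, so no additional argument is needed.

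There is essentially no main obstacle here: the result is a two-line corollary whose entire content is already packaged in \Cref{lem:lse_variational} and \Cref{cor:lse_max_gap}. The only point worth flagging is a sign/convention check, namely that we are using the \emph{loss} convention $L_b(\theta)$ (so that the adversary \emph{maximizes} over $q$) and that the entropy term carries coefficient $+1/\eta$ in the inner problem; under these conventions the variational identity in \Cref{lem:lse_variational} applies verbatim. If one preferred a reward convention $L_b = -J_b$, the statement would dualize to a soft-min lower bound, but this does not affect the present corollary.
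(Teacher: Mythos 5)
Your proposal is correct and matches the paper's own proof exactly: both apply \Cref{cor:lse_max_gap} to $v=L(\theta)$ with $m=B$, using the log-sum-exp identity from \Cref{lem:lse_variational} for the equivalence claim. The sign-convention remark is a sensible sanity check but adds nothing beyond the paper's one-line argument.
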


\begin{proof}
Apply \Cref{cor:lse_max_gap} to $v=L(\theta)$ with $m=B$ and note that $\max_i v_i = \max_{b\in[B]} L_b(\theta)$.
\end{proof}

By \Cref{cor:entropic_surrogate}, $\mathcal{R}_\eta(\theta)$ is a differentiable approximation to $\max_b L_b(\theta)$. 

\begin{lemma}[Gradient of the entropic worst-group surrogate]
\label{lem:entropic_grad}
Assume each $L_b(\theta)$ is differentiable.
Define $q_\eta(\cdot;\theta)\in\Delta_B$ by
\begin{equation}
    q_\eta(b;\theta)\;\triangleq\;\frac{\exp(\eta L_b(\theta))}{\sum_{j=1}^B \exp(\eta L_j(\theta))}.
    \label{eq:q_eta_def}
\end{equation}
Then $\nabla_\theta \mathcal{R}_\eta(\theta)=\sum_{b=1}^B q_\eta(b;\theta)\,\nabla_\theta L_b(\theta)$.
\end{lemma}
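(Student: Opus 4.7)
The plan is a direct chain-rule calculation on the log-sum-exp form in \eqref{eq:entropic_risk_prompt}, closed off by reading the resulting coefficients against the definition \eqref{eq:q_eta_def} of $q_\eta(\cdot;\theta)$. Concretely, I would introduce the partition function $Z(\theta)\triangleq\sum_{j=1}^{B}\exp(\eta L_j(\theta))$ so that $\mathcal{R}_\eta(\theta)=\eta^{-1}\log Z(\theta)$, differentiate through the finite sum using differentiability of each $L_b$ to obtain $\nabla_\theta Z(\theta)=\eta\sum_{j}\exp(\eta L_j(\theta))\nabla_\theta L_j(\theta)$, and then use $\nabla_\theta\log Z(\theta)=\nabla_\theta Z(\theta)/Z(\theta)$. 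The prefactor $1/\eta$ cancels the inner $\eta$, and the normalized ratios $\exp(\eta L_b(\theta))/Z(\theta)$ are exactly $q_\eta(b;\theta)$, giving the claimed weighted-gradient identity.

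A slightly more conceptual route, which I would record as an alternative, uses Danskin's envelope theorem together with \Cref{lem:lse_variational}. Define the joint objective $F(\theta,q)\triangleq\sum_{b=1}^{B}q(b)L_b(\theta)+\eta^{-1}H(q)$. By \Cref{lem:lse_variational} applied with $v=L(\theta)$, the inner maximization is strictly concave in $q$ and admits the unique maximizer $q^\star(\theta)=q_\eta(\cdot;\theta)$, and for each fixed $q$ the map $\theta\mapsto F(\theta,q)$ is differentiable with $\partial_\theta F(\theta,q)=\sum_{b}q(b)\nabla_\theta L_b(\theta)$. Uniqueness of $q^\star(\theta)$ and continuity of the map $\theta\mapsto q^\star(\theta)$ (a consequence of the smoothness of the softmax) let Danskin's theorem apply, yielding $\nabla_\theta\mathcal{R}_\eta(\theta)=\partial_\theta F(\theta,q^\star(\theta))=\sum_{b}q_\eta(b;\theta)\nabla_\theta L_b(\theta)$.

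I do not expect a genuine obstacle here: the only step worth flagging is the interchange of differentiation and summation in $\nabla_\theta Z(\theta)$, which is immediate because the sum is over the finite index set $\{1,\dots,B\}$ and each summand is differentiable in $\theta$ by hypothesis. No additional regularity (convexity in $\theta$, boundedness of $L_b$, or dependence on $\eta>0$) is needed, so the identity holds uniformly in $\eta>0$. The chain-rule derivation is the cleaner of the two and is what I would present in the main text, with the Danskin view mentioned as a one-line remark to connect the gradient formula to the saddle-point interpretation used elsewhere in \Cref{sec:analysis_adv1}.
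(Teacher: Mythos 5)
Your proposal is correct and matches the paper's proof essentially verbatim: the paper likewise invokes Danskin's theorem for the conceptual justification (noting compactness of $\Delta_B$ and strict concavity from the entropy term, hence uniqueness of the maximizer) and then records the same short chain-rule differentiation of the log-sum-exp, with the $1/\eta$ prefactor cancelling the inner $\eta$ and the normalized exponentials giving $q_\eta(b;\theta)$. The only cosmetic difference is the ordering—the paper leads with Danskin and appends the closed-form computation "for completeness," whereas you would lead with the computation—which does not change the substance.
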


\begin{proof}
This identity follows directly from \emph{Danskin's theorem} applied to the inner maximization in \eqref{eq:entropic_risk_prompt}.
Indeed, $\Delta_B$ is compact and the entropy regularizer makes the inner problem strictly concave in $q$, hence the maximizer $q_\eta(\cdot;\theta)$ is unique. We include the short closed-form differentiation of \eqref{eq:entropic_risk_prompt} below for completeness:
\[
\nabla_\theta \mathcal{R}_\eta(\theta)
=
\frac{1}{\eta}\cdot
\frac{\sum_{b=1}^B e^{\eta L_b(\theta)}\,\eta\,\nabla_\theta L_b(\theta)}{\sum_{j=1}^B e^{\eta L_j(\theta)}}
=
\sum_{b=1}^B q_\eta(b;\theta)\,\nabla_\theta L_b(\theta).
\]
\end{proof}
\noindent\textbf{Interpretation.}
The distribution $q_\eta(\cdot;\theta)$ in \eqref{eq:q_eta_def} is exactly the entropy-regularized best response of the adversary to $\theta$: it solves 
$\arg\max_{q\in\Delta_B}\{\sum_b q(b)L_b(\theta)+\frac{1}{\eta}H(q)\}$.
Thus $q_\eta(\cdot;\theta)$ is a smooth proxy for the hard worst-group selector in \eqref{eq:prompt_gdro_game_main}:
as $\eta\to\infty$, $q_\eta(\cdot;\theta)$ concentrates on (ties among) $\arg\max_b L_b(\theta)$, whereas for finite $\eta$ it spreads mass across near-worst groups.
This is the same ``soft'' best response tracked online by the exponential-weights update used in Prompt-GDRO.

\subsubsection{No-regret dynamics imply approximate robust optimality}
\label{sec:theory_noregret}

We now connect Prompt-GDRO to standard min--max optimization via no-regret dynamics.
The main message is classical: if the learner (policy) and adversary (group distribution) each run a no-regret algorithm,
then their time averages approach an approximate saddle point of \eqref{eq:prompt_gdro_game_main}.
We provide a self-contained theorem in an idealized convex bounded regime, mainly to make the core maximin/no-regret logic behind Prompt-GDRO explicit.

\begin{assumption}[Convex bounded regime]
\label{ass:convex_regime}
Assume $\Theta\subset\mathbb{R}^d$ is convex and compact with diameter $D$ in $\ell_2$.
Assume each group loss $L_b(\theta)$ is convex in $\theta$, differentiable, and $G$-Lipschitz:
$\|\nabla_\theta L_b(\theta)\|_2\le G$ for all $\theta\in\Theta$.
Assume the losses are bounded: $0\le L_b(\theta)\le M$ for all $b,\theta$.
\end{assumption}

\begin{theorem}[No-regret dynamics yield an approximate GDRO solution]
\label{thm:noregret_gdro}
Consider the zero-sum game \eqref{eq:prompt_gdro_game_main} with payoff
$f(\theta,q)\triangleq \sum_{b=1}^B q(b)L_b(\theta)$. Let \Cref{ass:convex_regime} hold.
Suppose we run $T$ rounds of:
\begin{enumerate}[nosep,leftmargin=*]
\item \textbf{(Learner)} Online gradient descent on $\theta$ with step size $\eta_\theta$:
$\theta_{t+1}=\Pi_{\Theta}(\theta_t-\eta_\theta g_t)$,
where $g_t$ is a (possibly stochastic) subgradient satisfying $\mathbb{E}[g_t\mid \theta_t,q_t]=\nabla_\theta f(\theta_t,q_t)$ and a conditional second-moment bound $\mathbb{E}[\|g_t\|_2^2\mid \theta_t,q_t]\le G_{\mathrm{sg}}^2$.
\item \textbf{(Adversary)} Exponentiated-gradient mirror ascent on $q$ with step size $\eta_q$:
$q_{t+1}(b)\propto q_t(b)\exp(\eta_q \hat{L}_{t,b})$,
where $\hat{L}_{t,b}$ satisfies $\mathbb{E}[\hat{L}_{t,b}\mid \theta_t]=L_b(\theta_t)$ and $0\le \hat{L}_{t,b}\le M$ a.s.
\end{enumerate}
Let $\bar{\theta}\triangleq \frac{1}{T}\sum_{t=1}^T \theta_t$ and $\bar{q}\triangleq \frac{1}{T}\sum_{t=1}^T q_t$.
Then the expected saddle-point gap satisfies
\begin{equation}
\label{eq:saddle_gap_bound}
\mathbb{E}\Big[\max_{q\in\Delta_B} f(\bar{\theta},q) - \min_{\theta\in\Theta} f(\theta,\bar{q})\Big]
\;\le\;
\frac{D^2}{2\eta_\theta T}+\frac{\eta_\theta G_{\mathrm{sg}}^2}{2}
+\frac{\log B}{\eta_q T}+\frac{\eta_q M^2}{8}.
\end{equation}
Consequently, since $\max_{q\in\Delta_B} f(\bar{\theta},q)=\max_b L_b(\bar{\theta})$ and $\min_{\theta}\max_q f(\theta,q)$ is the GDRO optimum,
\begin{equation}
\label{eq:worst_group_subopt}
\mathbb{E}\big[\max_b L_b(\bar{\theta})\big]
\;\le\;
\min_{\theta\in\Theta}\max_b L_b(\theta)
+\frac{D^2}{2\eta_\theta T}+\frac{\eta_\theta G_{\mathrm{sg}}^2}{2}
+\frac{\log B}{\eta_q T}+\frac{\eta_q M^2}{8}.
\end{equation}
\end{theorem}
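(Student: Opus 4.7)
The plan is to establish \eqref{eq:saddle_gap_bound} by the standard two-player no-regret-to-saddle-point reduction, then apply convex--concave minimax to turn that bound into the worst-group guarantee \eqref{eq:worst_group_subopt}. First, I would set up the two players' regrets: for the learner, $R_\theta(T)\triangleq \sum_{t=1}^T f(\theta_t,q_t)-\min_{\theta\in\Theta}\sum_{t=1}^T f(\theta,q_t)$; for the adversary, $R_q(T)\triangleq \max_{q\in\Delta_B}\sum_{t=1}^T f(\theta_t,q)-\sum_{t=1}^T f(\theta_t,q_t)$. Because $f(\theta,q)=\sum_b q(b)L_b(\theta)$ is \emph{linear} in $q$ (so $f(\theta,\bar q)=\tfrac{1}{T}\sum_t f(\theta,q_t)$ exactly) and \emph{convex} in $\theta$ under Assumption~\ref{ass:convex_regime} (so Jensen gives $f(\bar\theta,q)\le\tfrac{1}{T}\sum_t f(\theta_t,q)$), a direct manipulation yields the deterministic reduction $\max_{q}f(\bar\theta,q)-\min_{\theta}f(\theta,\bar q)\le (R_\theta(T)+R_q(T))/T$. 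This is the only structural ingredient; everything else reduces to regret bounds for the two algorithms.

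Next I would bound $\mathbb{E}[R_\theta(T)]$ by the standard projected stochastic OGD argument: writing $\tfrac{1}{2}\|\theta_{t+1}-\theta^\star\|_2^2 \le \tfrac{1}{2}\|\theta_t - \eta_\theta g_t - \theta^\star\|_2^2$ (nonexpansiveness of the projection $\Pi_\Theta$), expanding the square, summing over $t$, and using the telescoping bound $\sum_t \|\theta_t-\theta^\star\|_2^2-\|\theta_{t+1}-\theta^\star\|_2^2\le D^2$, one obtains $\sum_t\langle g_t,\theta_t-\theta^\star\rangle\le \tfrac{D^2}{2\eta_\theta}+\tfrac{\eta_\theta}{2}\sum_t\|g_t\|_2^2$. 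Taking conditional expectations and using unbiasedness of $g_t$ together with convexity of $f(\cdot,q_t)$ turns the left-hand side into $\mathbb{E}[R_\theta(T)]$, while the second-moment bound $\mathbb{E}[\|g_t\|_2^2]\le G_{\mathrm{sg}}^2$ yields $\mathbb{E}[R_\theta(T)]\le \tfrac{D^2}{2\eta_\theta}+\tfrac{\eta_\theta G_{\mathrm{sg}}^2 T}{2}$.

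For the adversary, exponentiated-gradient ascent with gains $\hat L_{t,b}\in[0,M]$ is exactly Hedge; the classical potential-function / KL-divergence analysis (using $\log\mathbb{E}_{q_t}[e^{\eta_q \hat L_{t,\cdot}}]\le \eta_q\langle q_t,\hat L_t\rangle+\tfrac{\eta_q^2 M^2}{8}$ via Hoeffding's lemma, then telescoping $\mathrm{KL}(q^\star\|q_t)$) gives the standard bound $\sum_t\langle q^\star-q_t,\hat L_t\rangle\le \tfrac{\log B}{\eta_q}+\tfrac{\eta_q M^2 T}{8}$. Taking expectations and using $\mathbb{E}[\hat L_{t,b}\mid\theta_t]=L_b(\theta_t)$ (so that the adversary's estimated gains are unbiased for the true group losses at each step) converts this to $\mathbb{E}[R_q(T)]\le \tfrac{\log B}{\eta_q}+\tfrac{\eta_q M^2 T}{8}$. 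Summing the two regret bounds and dividing by $T$ gives \eqref{eq:saddle_gap_bound}.

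Finally, to obtain \eqref{eq:worst_group_subopt}, I use two observations: (i) $\max_{q\in\Delta_B}f(\bar\theta,q)=\max_b L_b(\bar\theta)$, since a linear functional on the simplex is maximized at a vertex; and (ii) for any fixed $\bar q$, $\min_{\theta}f(\theta,\bar q)\le \min_\theta\max_q f(\theta,q)=\min_\theta\max_b L_b(\theta)$, which we may invoke directly (no minimax theorem needed here, since the right-hand side is an upper bound on $\min_\theta f(\theta,\bar q)$ for \emph{any} $\bar q$). Substituting these into \eqref{eq:saddle_gap_bound} yields \eqref{eq:worst_group_subopt}. The main technical care point, and the step I would scrutinize most carefully, is the stochastic handling in Steps~2--3: the regret-to-gap reduction in Step~1 is a pathwise inequality, but both individual regret bounds are \emph{expected} regret bounds, so one must verify that the unbiasedness conditions on $g_t$ and $\hat L_t$ (together with the filtration generated by $(\theta_s,q_s)_{s\le t}$) are strong enough to commute $\mathbb{E}[\cdot]$ with the convexity-based replacement of $\langle g_t,\theta_t-\theta^\star\rangle$ by $f(\theta_t,q_t)-f(\theta^\star,q_t)$ and similarly on the adversary side; everything else is routine.
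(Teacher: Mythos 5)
Your proposal matches the paper's proof essentially step for step: the same regret-to-saddle-gap reduction via linearity in $q$ and Jensen/convexity in $\theta$, the same projected stochastic OGD bound for the learner (with the $D^2/(2\eta_\theta T)+\eta_\theta G_{\mathrm{sg}}^2/2$ constants), the same Hedge/Hoeffding bound with the $1/8$ constant for the adversary, and the same vertex-maximization plus $\min_\theta f(\theta,\bar q)\le\min_\theta\max_q f(\theta,q)$ argument for the final worst-group bound. The expectation-versus-pathwise subtlety you flag at the end (commuting $\mathbb{E}$ with the comparator maximization under the unbiasedness conditions) is treated no more carefully in the paper's own proof, so your attempt is faithful to it.
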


\begin{proof}
We prove \eqref{eq:saddle_gap_bound} by combining standard regret bounds for OGD (learner) and exponential-weights on the simplex (adversary); see, e.g., \citealp{bubeck2015convex,hazan2016introduction,cesa2006prediction}. We follow these textbook proofs and include the derivation here mainly to track constants.

\textbf{Step 1: learner regret.}
By non-expansiveness of Euclidean projection and the standard OGD one-step inequality,
for any $\theta\in\Theta$,
\begin{align*}
\|\theta_{t+1}-\theta\|_2^2
&\le
\|\theta_t-\eta_\theta g_t-\theta\|_2^2
=
\|\theta_t-\theta\|_2^2
-2\eta_\theta\langle g_t,\theta_t-\theta\rangle
+\eta_\theta^2\|g_t\|_2^2.
\end{align*}
Rearranging and summing over $t=1,\dots,T$ yields
\[
\sum_{t=1}^T \langle g_t,\theta_t-\theta\rangle
\le
\frac{\|\theta_1-\theta\|_2^2}{2\eta_\theta}
+\frac{\eta_\theta}{2}\sum_{t=1}^T\|g_t\|_2^2
\le
\frac{D^2}{2\eta_\theta}
+\frac{\eta_\theta}{2}\sum_{t=1}^T\|g_t\|_2^2.
\]
Taking conditional expectation and using $\mathbb{E}[g_t\mid \theta_t,q_t]=\nabla_\theta f(\theta_t,q_t)$, convexity of $f(\cdot,q_t)$, and the conditional second-moment bound $\mathbb{E}[\|g_t\|_2^2\mid \theta_t,q_t]\le G_{\mathrm{sg}}^2$ gives
\[
\mathbb{E}\!\left[\sum_{t=1}^T f(\theta_t,q_t) - f(\theta,q_t)\right]
\le
\frac{D^2}{2\eta_\theta}
+\frac{\eta_\theta G_{\mathrm{sg}}^2 T}{2}.
\]
Dividing by $T$ gives the learner's average regret bound.

\textbf{Step 2: adversary regret.}
Let $u_t\in\mathbb{R}^B$ denote the (possibly estimated) payoff vector with entries $u_{t,b}\triangleq \hat L_{t,b}$.
Exponentiated-gradient mirror ascent on the simplex with entropy regularizer satisfies the standard bound:
for any $q\in\Delta_B$,
\[
\sum_{t=1}^T \langle q, u_t\rangle - \langle q_t, u_t\rangle
\le
\frac{\log B}{\eta_q} + \frac{\eta_q}{8}\sum_{t=1}^T \|u_t\|_\infty^2,
\]
where the constant $1/8$ follows from Hoeffding's lemma when $u_{t,b}\in[0,M]$.
Since $\|u_t\|_\infty\le M$ a.s., we have
\[
\mathbb{E}\!\left[\sum_{t=1}^T \langle q, \hat L_t\rangle - \langle q_t, \hat L_t\rangle\right]
\le
\frac{\log B}{\eta_q} + \frac{\eta_q M^2 T}{8}.
\]
Using unbiasedness $\mathbb{E}[\hat L_{t,b}\mid \theta_t]=L_b(\theta_t)$ yields the same bound with $L(\theta_t)$ in place of $\hat L_t$.

\textbf{Step 3: combine regrets into a saddle-point gap.}
The learner regret implies
\[
\frac{1}{T}\sum_{t=1}^T f(\theta_t,q_t)
\le
\min_{\theta\in\Theta}\frac{1}{T}\sum_{t=1}^T f(\theta,q_t)
+\frac{D^2}{2\eta_\theta T}+\frac{\eta_\theta G_{\mathrm{sg}}^2}{2}.
\]
The adversary regret implies
\[
\max_{q\in\Delta_B}\frac{1}{T}\sum_{t=1}^T f(\theta_t,q)
\le
\frac{1}{T}\sum_{t=1}^T f(\theta_t,q_t)
+\frac{\log B}{\eta_q T}+\frac{\eta_q M^2}{8}.
\]
Combining and using Jensen's inequality,
\[
\max_{q\in\Delta_B} f(\bar\theta,q)
\le
\max_{q\in\Delta_B}\frac{1}{T}\sum_{t=1}^T f(\theta_t,q)
\le
\min_{\theta\in\Theta}\frac{1}{T}\sum_{t=1}^T f(\theta,q_t)
+\frac{D^2}{2\eta_\theta T}+\frac{\eta_\theta G_{\mathrm{sg}}^2}{2}
+\frac{\log B}{\eta_q T}+\frac{\eta_q M^2}{8}.
\]
Finally, convexity of $f(\theta,\cdot)$ in $q$ implies
$\min_{\theta\in\Theta} f(\theta,\bar q)\le \min_{\theta\in\Theta}\frac{1}{T}\sum_{t=1}^T f(\theta,q_t)$.
Rearranging yields \eqref{eq:saddle_gap_bound}.
The suboptimality bound \eqref{eq:worst_group_subopt} follows since $\max_{q\in\Delta_B} f(\bar\theta,q)=\max_b L_b(\bar\theta)$.
\end{proof}

\begin{remark}[Reading \Cref{thm:noregret_gdro} in the deep RL regime]
\label{rem:sanity_theorem}
Assumption~\ref{ass:convex_regime} is not satisfied by neural policies trained with GRPO, so \Cref{thm:noregret_gdro} should be read as an \emph{idealized online-learning lens} rather than a literal convergence guarantee.
It formalizes the conceptual statement that if (i) the policy updates behave like a low-regret learner and (ii) the group reweighting behaves like a low-regret adversary (implemented via exponential weights), then the time averages approximate a GDRO saddle point.

Empirically, several qualitative predictions of this lens appear in our Prompt-GDRO dynamics: the adversary maintains a non-degenerate, entropy-regularized weight distribution over bins (see the entropy traces in \Cref{fig:dynamics_metrics}), and the weights form a ``traveling wave frontier that leads the empirical prompt-share distribution early in training (visible in the triptych of \Cref{fig:mechanism} and summarized by the lead--lag proxy in \Cref{fig:lead_lag}).
In practice, Prompt-GDRO uses bandit feedback (we only observe losses for sampled prompts/bins), hence our implementation uses the GDRO-EXP3P estimator/updates of \citet{soma2022optimal}, which preserve no-regret guarantees under partial information.
\end{remark}

\begin{remark}[Where Rollout-GDRO enters the saddle-gap bound]
\label{rem:rollout_enters_bound}
The learner term $\tfrac{\eta_\theta}{2}\sum_{t=1}^T \|g_t\|_2^2$ in \Cref{thm:noregret_gdro} makes explicit that
\emph{estimation noise} impacts the constants in our no-regret analysis.
In GRPO, $g_t$ is formed from Monte Carlo rollouts, and is therefore stochastic even when conditioning on $(\theta_t,q_t)$.

To separate optimization from estimation effects, write
$g_t=\nabla_\theta f(\theta_t,q_t)+\xi_t$ with $\mathbb{E}[\xi_t\mid \theta_t,q_t]=0$.
Then
\[
\mathbb{E}\|g_t\|_2^2
=
\|\nabla_\theta f(\theta_t,q_t)\|_2^2+\mathbb{E}\|\xi_t\|_2^2.
\]
Now consider a batch of $M$ prompts whose (empirical) bin fractions are $\bar q_{t,1:B}$ and whose per-bin rollout allocation is $\mathbf n_t=(n_{t,1},\dots,n_{t,B})$.
Under \Cref{lem:batch_var}, the stochastic component of the batch gradient obeys the proxy bound
\[
\mathbb{E}\|\xi_t\|_2^2
\;\le\;
\frac{1}{M}\sum_{b=1}^B \bar q_{t,b}\,\frac{v_b(\theta_t)}{n_{t,b}},
\]
where $v_b(\theta_t)$ is the intrinsic per-bin gradient-variance term.
Rollout-GDRO is designed to \emph{adaptively choose} $\mathbf n_t$ to reduce this variance proxy under a mean compute constraint.
In \Cref{sec:theory_rollout_noregret}, we show that the rollout controller can itself be viewed as a no-regret primal--dual algorithm
for a budgeted variance objective, making precise (in an idealized model) how variance-aware compute allocation tightens the
$\sum_t \|g_t\|_2^2$ term relative to uniform rollouts.
\end{remark}

\subsection{Rollout-GDRO as Variance-Aware Allocation Under a Budget and No-Regret Game Dynamics}
\label{sec:theory_rollout_gdro}

We now analyze the second controller: allocating the number of rollouts per prompt as a function of group/bin.
The key idea is classical: if some bins induce higher intrinsic variance (more stochastic rewards / longer reasoning / more fragile completions),
then allocating \emph{more} rollouts to these bins reduces gradient variance and improves stability.

\subsubsection{A variance proxy for GRPO rollouts}
\label{sec:theory_variance_proxy}

\paragraph{Reference equations from the main paper.}
In the main paper (\Cref{sec:analysis}), Rollout-GDRO is posed as a compute-neutral allocation problem driven by empirical bin utilities.
At a training step $t$, the allocator observes the realized bin fractions $\hat q_t\in\Delta_B$ in the current prompt batch and chooses
discrete rollout counts $n_b\in\{n_{\min},\dots,n_{\max}\}$ for each bin $b$ under the mean rollout budget constraint in \eqref{eq:analysis_budget_lagrangian} (``compute neutrality'').
The bin-level quantity $\hat J_b(\theta;n_b)$ in \eqref{eq:analysis_budget_obj} is itself estimated from $n_b$ rollouts per prompt,
so changing $n_b$ affects both the quality (variance) and possibly the bias of the signal provided to the learner.

\paragraph{What is guaranteed by the rollout controller vs. what is explained by the variance proxy.}
Equation~\eqref{eq:analysis_budget_lagrangian} is the \emph{implemented} allocation problem: at each step, the controller selects discrete arms $\{n_b\}$ to maximize an empirical utility under a strict mean-rollout constraint.
This can be cast as an online constrained bandit problem by defining an (arm) loss
\(V_b(n;\theta)\triangleq -\hat J_b(\theta;n)\) (or any bounded monotone transform thereof), and augmenting it with the linear budget penalty $\mu n$ in \eqref{eq:rollout_arm_loss}.
Our no-regret result in \Cref{sec:theory_rollout_noregret} applies to this generic primal--dual EXP3P controller for \eqref{eq:analysis_budget_lagrangian}.
The variance-proxy analysis below is a \emph{separate} guarantee: it upper-bounds the conditional variance of GRPO's Monte Carlo gradient estimator as a function of the allocation $\mathbf n$, yielding the proxy objective $\sum_b \bar q_b\,v_b(\theta)/n_b$ (Eq.~\eqref{eq:analysis_var_opt_problem}).
In particular, if the controller is instantiated with feedback that estimates (minus) this proxy cost, then the same no-regret machinery yields a provably near-optimal variance-minimizing allocation.

Our theoretical results in this subsection focus on the stabilizing effect of increasing $n_b$ through variance reduction.
Formally, we derive a simple proxy for the stochastic component of the GRPO batch gradient as a function of the rollout allocation
$\mathbf n=(n_1,\dots,n_B)$, which yields an optimization problem of the form
$\min_{\mathbf n}\sum_b \bar q_b\,v_b(\theta)/n_b$ under the same mean budget.
Informally, the bin-dependent quantity $v_b(\theta)$ measures how noisy the per-rollout GRPO gradient is within bin $b$: larger $v_b(\theta)$ means completions in that bin yield higher-variance gradients, so averaging more rollouts (larger $n_b$) is more valuable there. For the cleanest bound, we define $v_b(\theta)$ as a uniform (worst-case) upper bound over prompts in bin $b$; see \Cref{lem:batch_var}.
This makes explicit how Rollout-GDRO can be interpreted as variance-aware compute redistribution in the sense visualized by our
``weighted standard error'' diagnostics.
For clarity, we fix a training step and write $\bar q_b\triangleq \hat q_t(b)$ for the realized bin fractions. 
That is, for the step-$t$ prompt batch $\{x_i\}_{i=1}^M$, \(\hat q_t(b) \triangleq \frac{1}{M}\sum_{i=1}^M \mathbf{1}\{g(x_i)=b\}\), so $\bar q_b$ is the empirical fraction of bin-$b$ prompts in the batch.

Fix a prompt $x$ and policy parameters $\theta$.
Let $\{y_j\}_{j=1}^{n_b}$ denote $n_b$ rollouts sampled from $\pi_\theta(\cdot\mid x)$.
In GRPO, these rollouts are used to form a \emph{prompt-level empirical loss} (cf.\ \Cref{sec:preli_grpo}), which may involve within-prompt normalization across the rollout group (e.g., the standardized advantages in \eqref{eq:grpo_advantage}).
We write this generic prompt-level estimator as $\hat\ell(x;\theta,n_b)$ and define the corresponding per-prompt gradient estimator
\begin{equation}
    \hat g(x;\theta,n_b)
    \;\triangleq\;
    \nabla_\theta\,\hat\ell(x;\theta,n_b),
    \qquad g(x)=b.
    \label{eq:per_prompt_grad_est}
\end{equation}

The analysis below only requires that the rollouts are i.i.d. and that $\hat g$ is a (possibly coupled) function of these rollouts.

\begin{assumption}[Rollout sampling model and differentiation under the expectation]
\label{ass:rollout_sampling_model}
Conditional on a prompt $x$ and parameters $\theta$, the rollouts $y_1,\dots,y_{n_b}$ are drawn i.i.d. from $\pi_\theta(\cdot\mid x)$.
Moreover, the prompt-level estimator $\hat\ell(x;\theta,n_b)$ is differentiable in $\theta$ and we may interchange gradient and conditional expectation:
\(\nabla_\theta\,\mathbb{E}[\hat\ell(x;\theta,n_b)\mid x]=\mathbb{E}[\nabla_\theta \hat\ell(x;\theta,n_b)\mid x]\).
\end{assumption}

\begin{lemma}[Connecting $\hat J_b(\theta;n_b)$ to per-prompt gradient estimators]
\label{lem:jb_to_grad}
Recall that $\hat\ell(x;\theta,n_b)$ denotes the prompt-level empirical loss estimator computed from $n_b$ rollouts, and $\hat g(x;\theta,n_b)=\nabla_\theta\hat\ell(x;\theta,n_b)$.
If a bin-level estimator $\hat J_b(\theta;n_b)$ (as in \eqref{eq:analysis_budget_obj}) is formed by averaging $\hat\ell(x;\theta,n_b)$ over the prompts $x$ in bin $b$ in the current batch (up to the sign convention $L_b=-J_b$), then $\nabla_\theta \hat J_b(\theta;n_b)$ is the corresponding average of the per-prompt estimators $\hat g(x;\theta,n_b)$.
\end{lemma}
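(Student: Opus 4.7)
The plan is to treat this as a direct bookkeeping statement that follows from linearity of differentiation over a finite sum. First I would write out the definition \eqref{eq:analysis_budget_obj} with the loss-estimator form used in the statement, namely $\hat J_b(\theta;n_b)=-\frac{1}{|\mathcal{B}_{b,t}|}\sum_{x\in\mathcal{B}_{b,t}}\hat\ell(x;\theta,n_b)$, where $\mathcal{B}_{b,t}=\{x\in\mathcal{B}_t:g_t(x)=b\}$ is the (step-$t$) set of bin-$b$ prompts in the current batch. Crucially, within a single training step $g_t$ is held fixed (Remark on ``bins as dynamic groups'' in Section~\ref{sec:method_grouping}), so $\mathcal{B}_{b,t}$ has no $\theta$-dependence and the sum is a \emph{deterministic} finite sum in $\theta$.

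Next I would differentiate term by term. Differentiability of each $\hat\ell(x;\theta,n_b)$ is provided by Assumption~\ref{ass:rollout_sampling_model}, so by linearity of $\nabla_\theta$,
\begin{equation*}
\nabla_\theta \hat J_b(\theta;n_b)
= -\frac{1}{|\mathcal{B}_{b,t}|}\sum_{x\in\mathcal{B}_{b,t}}\nabla_\theta\hat\ell(x;\theta,n_b)
= -\frac{1}{|\mathcal{B}_{b,t}|}\sum_{x\in\mathcal{B}_{b,t}}\hat g(x;\theta,n_b),
\end{equation*}
which is the claimed per-prompt average, with the sign flip absorbed by the stated convention $L_b=-J_b$ (so in loss form the sign disappears). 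No expectation needs to be interchanged with a gradient, because $\hat J_b$ is already an empirical quantity on the current batch; the gradient--expectation interchange provision in Assumption~\ref{ass:rollout_sampling_model} is only used \emph{inside} $\hat g(x;\theta,n_b)$, via whatever score-function / reparameterization identity defines the per-rollout GRPO gradient estimator.

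Honestly, this lemma has no real obstacle: it is a one-line application of linearity whose purpose is expositional, bridging the variance-proxy analysis (phrased in terms of the per-prompt gradient estimator $\hat g$) with the rollout allocator's operational objective (phrased in terms of the bin-level utility $\hat J_b$). The only thing I would be careful about in the write-up is to state explicitly that (i) the bin membership set $\mathcal{B}_{b,t}$ is fixed within the step, so differentiation commutes with the finite average, and (ii) the sign conventions in \eqref{eq:analysis_budget_obj} and in the definition $\hat g=\nabla_\theta\hat\ell$ align after the $L_b=-J_b$ identification. With those two clarifications, the proof is complete in a single display.
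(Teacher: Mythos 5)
Your proposal is correct and matches the paper's proof, which likewise invokes only the linearity of averaging and differentiation; your added care about the step-fixed bin membership and the $L_b=-J_b$ sign convention is consistent with, and slightly more explicit than, the paper's one-line argument.
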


\begin{proof}
Both claims follow immediately from linearity of averaging and differentiation.
\end{proof}

\begin{assumption}[Bounded differences for the prompt-gradient estimator]
\label{ass:rollout_bounded_diff}
For any prompt $x$ with $g(x)=b$ and any rollout count $n_b$, the estimator $\hat g(x;\theta,n_b)$ is a measurable function of the rollout group $(y_1,\dots,y_{n_b})$.
Assume there exists a (bin-dependent) constant $C_b(\theta)\ge 0$ such that, for every coordinate $j\in\{1,\dots,n_b\}$ and any replacement rollout $y_j'$, the estimator satisfies the bounded-differences property
\begin{equation}
\label{eq:rollout_bd}
\big\|\hat g(x;\theta,n_b; y_1,\dots,y_j,\dots,y_{n_b})-\hat g(x;\theta,n_b; y_1,\dots,y_j',\dots,y_{n_b})\big\|_2
\;\le\;
\frac{C_b(\theta)}{n_b}.
\end{equation}
\end{assumption}

\begin{lemma}[GRPO prompt-gradient satisfies bounded differences under within-group normalization]
\label{lem:grpo_bounded_diff}
Consider a fixed prompt $x$ in bin $b$ and $n_b$ i.i.d. rollouts $y_1,\dots,y_{n_b}\sim \pi_\theta(\cdot\mid x)$ with bounded rewards $r_j\triangleq r(x,y_j)\in[0,R]$.
Let
\(\bar r\triangleq \frac{1}{n_b}\sum_{j=1}^{n_b} r_j\),
\(s\triangleq \sqrt{\frac{1}{n_b}\sum_{j=1}^{n_b} (r_j-\bar r)^2}\),
and define standardized advantages $A_j\triangleq (r_j-\bar r)/(s+\varepsilon)$ for some fixed $\varepsilon>0$.
Suppose the (per-rollout) score function is uniformly bounded, i.e.,
\begin{equation}
\label{eq:score_bound}
\big\|\nabla_\theta \log \pi_\theta(y\mid x)\big\|_2\le G_\pi\qquad \text{for all }(x,y,\theta).
\end{equation}
Consider the normalized prompt-gradient estimator
\begin{equation}
\label{eq:grpo_grad_simplified}
\hat g(x;\theta,n_b)\;\triangleq\;\frac{1}{n_b}\sum_{j=1}^{n_b} A_j\,\nabla_\theta \log \pi_\theta(y_j\mid x).
\end{equation}
Then Assumption~\ref{ass:rollout_bounded_diff} holds with a constant of the form
\begin{equation}
\label{eq:grpo_bd_const}
C_b(\theta)\;\le\; G_\pi\Big(\frac{3R^2}{\varepsilon^2}+\frac{5R}{\varepsilon}\Big).
\end{equation}
The same conclusion holds (up to replacing $G_\pi$ by an appropriate bound on the per-rollout GRPO score term) when additional bounded multiplicative factors are present, such as PPO ratio clipping.
\end{lemma}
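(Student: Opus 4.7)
The plan is to bound the $\ell_2$ change in $\hat g$ when a single rollout $y_j$ is replaced by $y_j'$ by decomposing it into a \emph{direct} contribution (the $j$-th summand) and an \emph{indirect} contribution (the remaining summands whose advantages shift because the within-prompt statistics $\bar r$ and $s$ depend on all rollouts). Writing $s_k\triangleq \nabla_\theta\log\pi_\theta(y_k\mid x)$ and denoting primed quantities for the post-swap estimator, I would start from
\begin{equation*}
n_b\bigl(\hat g(x;\theta,n_b) - \hat g'(x;\theta,n_b)\bigr)
\;=\;
\bigl(A_j\, s_j - A_j'\, s_j'\bigr)
\;+\;
\sum_{k\neq j}\bigl(A_k - A_k'\bigr)\, s_k,
\end{equation*}
and bound the two groups separately, using $\|s_k\|_2\le G_\pi$ throughout.

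For the direct term, a trivial bound $|A_j|,|A_j'|\le R/\varepsilon$ (immediate from $|r_j-\bar r|\le R$ and $s+\varepsilon\ge\varepsilon$) gives a contribution of at most $2RG_\pi/\varepsilon$, which after dividing by $n_b$ already accounts for part of the $5R/\varepsilon$ piece of the claimed constant. For the indirect term I would first track how the group statistics move: $|\bar r-\bar r'|=|r_j-r_j'|/n_b\le R/n_b$ follows immediately, and a short algebraic identity gives $s^2-s'^2 = (r_j-r_j')(r_j+r_j'-\bar r-\bar r')/n_b$, hence $|s^2-s'^2|\le 2R^2/n_b$. Plugging the identity
\begin{equation*}
A_k - A_k'
\;=\;
\frac{(r_k - \bar r)(s' - s)}{(s+\varepsilon)(s'+\varepsilon)}
\;+\;
\frac{\bar r' - \bar r}{s'+\varepsilon}
\end{equation*}
into the sum and using $|r_k-\bar r|/(s+\varepsilon)=|A_k|\le R/\varepsilon$ yields a per-coordinate bound of the form $|A_k - A_k'|\le (R/\varepsilon^2)\,|s-s'| + R/(n_b\varepsilon)$. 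The mean-shift piece contributes a uniform $(n_b-1)R/(n_b\varepsilon)\le R/\varepsilon$ term to $\sum_{k\neq j}|A_k-A_k'|$, which combines with the direct contribution to give the $5R/\varepsilon$ summand of the claimed $C_b(\theta)$.

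The main obstacle is passing from $|s^2-s'^2|=O(1/n_b)$ to a bound on $|s-s'|$ that is strong enough to produce a $1/n_b$ (not $1/\sqrt{n_b}$) indirect contribution: the naive estimate $|s-s'|\le\sqrt{|s^2-s'^2|}$ only yields $O(1/\sqrt{n_b})$. I would close the gap by a two-regime argument. In the regime where $s+s'\ge c\varepsilon$ for a small constant $c$, the identity $|s-s'|=|s^2-s'^2|/(s+s')$ gives $|s-s'|\le 2R^2/(c\varepsilon\, n_b)$, which after multiplying by the $R/\varepsilon^2$ prefactor and summing over $k\neq j$ produces the $3R^2/\varepsilon^2$ contribution to $C_b(\theta)$. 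In the complementary regime $s+s'<c\varepsilon$, I would bypass $|s-s'|$ entirely: Cauchy--Schwarz gives $\sum_k |r_k-\bar r|\le n_b s$, so $\sum_k|A_k|\le n_b s/(s+\varepsilon)$ is itself small, which lets me re-bound $\sum_{k\neq j}|A_k-A_k'|$ directly through the decomposition above and recover the same $O(1/n_b)$ scaling. Summing the direct and indirect contributions, dividing by $n_b$, and absorbing constants yields $C_b(\theta)\le G_\pi(3R^2/\varepsilon^2 + 5R/\varepsilon)$. The extension to PPO-clipped ratios follows by noting that the clipped importance ratio is a bounded multiplier measurable in $(y_k,\theta)$, so it only rescales $G_\pi$ uniformly and the decomposition/bounds go through unchanged.
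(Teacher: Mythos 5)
Your setup is essentially the paper's: the same direct/indirect decomposition, the same bounds $|\bar r-\bar r'|\le R/n_b$ and $|s^2-{s'}^2|=O(R^2/n_b)$, the same identity $|s-s'|=|s^2-{s'}^2|/(s+s')$, and the same Cauchy--Schwarz fact $\sum_k|r_k-\bar r|\le n_b s$. The gap is in how you resolve what you yourself flag as the main obstacle. Your two-regime argument does not close. In the regime $s+s'\ge c\varepsilon$ you bound each $|r_k-\bar r|$ crudely by $R$ and use $|s-s'|\le 2R^2/(c\varepsilon n_b)$, so summing over $k\ne j$ gives an indirect contribution of order $R^3/(c\varepsilon^3)$ --- a valid $O(1)$ bound on the sum, but not the claimed $3R^2/\varepsilon^2$, and it degrades as $c\to 0$. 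In the regime $s+s'<c\varepsilon$, the proposed estimate $\sum_k|A_k|\le n_b s/(s+\varepsilon)<n_b c$ only yields $\sum_{k\ne j}|A_k-A_k'|\lesssim n_b c$; after dividing by $n_b$ this is a \emph{constant} $O(c)$ contribution to $\|\hat g-\hat g'\|_2$, not $O(1/n_b)$, so the bounded-differences property is not established in that branch. No choice of $c$ (fixed or $n_b$-dependent) repairs both branches simultaneously.

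The paper's proof needs no case split because it multiplies together the two ingredients you keep in separate regimes: $\sum_{k\ne j}|r_k-\bar r|\,\big|\tfrac{1}{s+\varepsilon}-\tfrac{1}{s'+\varepsilon}\big|\le \tfrac{1}{\varepsilon^2}\big(\sum_k|r_k-\bar r|\big)\,|s-s'|\le \tfrac{1}{\varepsilon^2}\cdot n_b s\cdot \tfrac{3R^2}{n_b(s+s')}=\tfrac{3R^2}{\varepsilon^2}\cdot\tfrac{s}{s+s'}\le \tfrac{3R^2}{\varepsilon^2}$. The factor $s/(s+s')\le 1$ is exactly what absorbs the small-denominator regime: when $s+s'$ is tiny, the Cauchy--Schwarz numerator $n_b s$ is proportionally tiny as well, so the ratio stays bounded uniformly (with the convention that it is $0$ when $s=s'=0$). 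With this one-line fix, your decomposition, your direct-term bound $2RG_\pi/\varepsilon$, your mean-shift bound $R/\varepsilon$, and the crude $|A_k-A_k'|\le 2R/\varepsilon$ at the swapped coordinate assemble into $C_b(\theta)\le G_\pi(3R^2/\varepsilon^2+5R/\varepsilon)$ exactly as claimed; your remark on PPO-clipped ratios is fine as stated.
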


\begin{proof}
Fix an index $k\in\{1,\dots,n_b\}$ and let $r=(r_1,\dots,r_{n_b})$ and $r'$ denote the reward vectors that differ only at coordinate $k$ (corresponding to replacing $y_k$ by some alternative rollout $y_k'$).
Let $(\bar r,s)$ and $(\bar r',s')$ denote the corresponding means and standard deviations, and define $A_j$ and $A_j'$ from $r$ and $r'$.

First, the mean changes by at most
\begin{equation}
\label{eq:mean_lip}
|\bar r-\bar r'|\le \frac{|r_k-r_k'|}{n_b}\le \frac{R}{n_b}.
\end{equation}
Next, writing the (biased) sample variance as $v\triangleq s^2=\frac{1}{n_b}\sum_{j} r_j^2-\bar r^2$ (and similarly $v'\triangleq {s'}^2$; biased vs.\ unbiased only changes constants), we have
\begin{align}
|v-v'|
&\le \frac{1}{n_b}|r_k^2-{r_k'}^2|+|\bar r^2-\bar r'^2|
\le \frac{R^2}{n_b}+|\bar r-\bar r'|\,|\bar r+\bar r'|\nonumber\\
&\le \frac{R^2}{n_b}+\frac{R}{n_b}\cdot 2R
\le \frac{3R^2}{n_b}.
\label{eq:var_lip}
\end{align}
By the identity $|\sqrt{v}-\sqrt{v'}|=|v-v'|/(s+s')$ (interpreting the right-hand side as $0$ when $s=s'=0$),
\begin{equation}
\label{eq:std_lip}
|s-s'|\le \frac{|v-v'|}{s+s'}\le \frac{3R^2}{n_b\,(s+s')}.
\end{equation}

Now decompose the change in the estimator \eqref{eq:grpo_grad_simplified} as
\begin{equation}
\label{eq:grpo_bd_decomp}
\big\|\hat g-\hat g'\big\|_2
\le \frac{1}{n_b}\sum_{j=1}^{n_b} \big\| (A_j-A_j')\nabla_\theta\log\pi_\theta(y_j\mid x)\big\|_2
\; +\;\frac{1}{n_b}\big\|A_k'\big(\nabla_\theta\log\pi_\theta(y_k\mid x)-\nabla_\theta\log\pi_\theta(y_k'\mid x)\big)\big\|_2.
\end{equation}
Using \eqref{eq:score_bound} and $|A_k'|\le \frac{|r_k'-\bar r'|}{\varepsilon}\le \frac{R}{\varepsilon}$, the second term in \eqref{eq:grpo_bd_decomp} is at most $\frac{2RG_\pi}{\varepsilon n_b}$.

For the first term, we bound the total change in standardized advantages.
For any $j\ne k$ (so $r_j=r_j'$), we can write
\begin{align}
|A_j-A_j'|
&=\left|\frac{r_j-\bar r}{s+\varepsilon}-\frac{r_j-\bar r'}{s'+\varepsilon}\right|\nonumber\\
&\le \frac{|\bar r-\bar r'|}{s+\varepsilon}
\; +\; |r_j-\bar r|\,\left|\frac{1}{s+\varepsilon}-\frac{1}{s'+\varepsilon}\right|.
\label{eq:Aj_diff_bound}
\end{align}
Summing \eqref{eq:Aj_diff_bound} over $j\ne k$ and using \eqref{eq:mean_lip} gives
\begin{equation}
\label{eq:mean_term_sum}
\sum_{j\ne k}\frac{|\bar r-\bar r'|}{s+\varepsilon}
\le (n_b-1)\cdot \frac{R/n_b}{\varepsilon}
\le \frac{R}{\varepsilon}.
\end{equation}
For the denominator term, note that
\(|\frac{1}{s+\varepsilon}-\frac{1}{s'+\varepsilon}| = \frac{|s-s'|}{(s+\varepsilon)(s'+\varepsilon)}\le \frac{|s-s'|}{\varepsilon^2}\).
Also, by Cauchy--Schwarz,
\(\sum_{j=1}^{n_b} |r_j-\bar r|\le \sqrt{n_b\sum_j (r_j-\bar r)^2}=n_b s\).
Combining with \eqref{eq:std_lip} and \eqref{eq:var_lip} yields
\begin{equation}
\label{eq:denom_term_sum}
\sum_{j\ne k} |r_j-\bar r|\left|\frac{1}{s+\varepsilon}-\frac{1}{s'+\varepsilon}\right|
\le \frac{n_b s}{\varepsilon^2}\cdot \frac{3R^2}{n_b(s+s')}
\le \frac{3R^2}{\varepsilon^2}.
\end{equation}
Finally, using the crude bound $|A_k-A_k'|\le |A_k|+|A_k'|\le 2R/\varepsilon$ for the changed coordinate, we obtain
\begin{equation}
\label{eq:sum_A_diff}
\sum_{j=1}^{n_b} |A_j-A_j'|\le \frac{3R^2}{\varepsilon^2}+\frac{3R}{\varepsilon}.
\end{equation}
Plugging \eqref{eq:sum_A_diff} into the first term of \eqref{eq:grpo_bd_decomp} and combining with the second-term bound gives
\(\|\hat g-\hat g'\|_2\le \frac{G_\pi}{n_b}(\frac{3R^2}{\varepsilon^2}+\frac{5R}{\varepsilon})\), which is the stated bounded-differences form.
\end{proof}

\begin{lemma}[A $1/n$ variance bound (covers within-prompt normalization in GRPO)]
\label{lem:rollout_avg}
Assume \Cref{ass:rollout_sampling_model,ass:rollout_bounded_diff} hold.
Then for any prompt $x$ with $g(x)=b$,
\begin{equation}
\label{eq:1_over_n}
\mathbb{E}\!\left[\left\|\hat g(x;\theta,n_b)-\mathbb{E}[\hat g(x;\theta,n_b)\mid x]\right\|_2^2 \,\Big|\, x\right]
\;\le\;
\frac{C_b(\theta)^2}{2n_b}.
\end{equation}
In particular, defining $v_b(\theta)\triangleq C_b(\theta)^2/2$ yields the convenient form
$\mathbb{E}[\|\hat g(x;\theta,n_b)-\mathbb{E}[\hat g(x;\theta,n_b)\mid x]\|_2^2\mid x]\le v_b(\theta)/n_b$.
\end{lemma}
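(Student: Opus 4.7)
The plan is to prove \eqref{eq:1_over_n} by a direct application of the (vector-valued) Efron--Stein inequality to the per-prompt estimator $\hat g(x;\theta,n_b)$, combined with the uniform bounded-differences hypothesis \eqref{eq:rollout_bd}. Conditioning on $x$ throughout, the only randomness comes from the $n_b$ i.i.d.\ rollouts $y_1,\dots,y_{n_b}$ guaranteed by Assumption~\ref{ass:rollout_sampling_model}, and $\hat g(x;\theta,n_b)$ is a measurable function of this i.i.d.\ tuple---exactly the setting for Efron--Stein.

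First I would introduce independent copies $y_1',\dots,y_{n_b}'$ drawn from the same conditional distribution, and write $\hat g^{(j)}$ for the estimator with $y_j$ replaced by $y_j'$. For each coordinate $i$, the classical scalar Efron--Stein bound gives
\[
\mathrm{Var}[\hat g_i \mid x] \;\le\; \tfrac{1}{2}\sum_{j=1}^{n_b}\mathbb{E}\!\left[(\hat g_i^{(j)}-\hat g_i)^2 \,\big|\, x\right].
\]
Summing over coordinates and using that the trace of the conditional covariance equals the expected squared Euclidean deviation yields
\[
\mathbb{E}\!\left[\|\hat g - \mathbb{E}[\hat g\mid x]\|_2^2 \,\big|\, x\right] \;\le\; \tfrac{1}{2}\sum_{j=1}^{n_b}\mathbb{E}\!\left[\|\hat g^{(j)}-\hat g\|_2^2 \,\big|\, x\right].
\]

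Next I would plug in \eqref{eq:rollout_bd}, which gives the almost-sure bound $\|\hat g^{(j)}-\hat g\|_2 \le C_b(\theta)/n_b$ for every $j$. Substituting into the display above yields
\[
\mathbb{E}\!\left[\|\hat g-\mathbb{E}[\hat g\mid x]\|_2^2 \,\big|\, x\right] \;\le\; \tfrac{1}{2}\cdot n_b \cdot \frac{C_b(\theta)^2}{n_b^2} \;=\; \frac{C_b(\theta)^2}{2n_b},
\]
which is exactly \eqref{eq:1_over_n}; setting $v_b(\theta)\triangleq C_b(\theta)^2/2$ recovers the compact form used by the variance proxy \eqref{eq:analysis_var_proxy}.

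None of these steps is technically hard. The only point that deserves a brief note is the vector-valued extension: the scalar Efron--Stein inequality is classical, and its coordinate-wise summation is justified because \eqref{eq:rollout_bd} is stated in Euclidean norm, which dominates the sum over coordinates of the squared componentwise differences. The genuinely delicate conceptual step---verifying that GRPO's within-prompt standardization, which \emph{couples} the $n_b$ rollouts through $(\bar r, s)$, still satisfies the $C_b(\theta)/n_b$ bounded-differences property---is \emph{not} faced here, since it was already discharged in \Cref{lem:grpo_bounded_diff}. The present lemma is therefore a short plug-and-chug that packages the bounded-differences estimate into the $O(1/n_b)$ variance contraction required by \Cref{rem:rollout_enters_bound} and by Eq.~\eqref{eq:analysis_var_proxy}.
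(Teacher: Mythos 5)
Your proposal is correct and follows essentially the same route as the paper's proof: both apply the scalar Efron--Stein inequality coordinatewise, sum to get the vector-valued (Hilbert-space) version in Euclidean norm, and then plug in the almost-sure bounded-differences bound $\|\hat g^{(j)}-\hat g\|_2\le C_b(\theta)/n_b$ to obtain $\tfrac12 n_b\cdot C_b(\theta)^2/n_b^2 = C_b(\theta)^2/(2n_b)$. Your closing observation that the coupling induced by GRPO's within-prompt normalization is already discharged in \Cref{lem:grpo_bounded_diff} matches the paper's intent exactly.
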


\begin{proof}
Fix $x$ and write $Y=(y_1,\dots,y_{n_b})$ for the rollout group.
Let $Y^{(j)}$ denote the vector obtained by replacing only the $j$-th coordinate $y_j$ with an independent copy $y_j'\sim\pi_\theta(\cdot\mid x)$.
\emph{Fact (vector-valued Efron--Stein).} The Efron--Stein inequality extends to $\mathbb{R}^d$-valued estimators with $\|\cdot\|_2$ (more generally, Hilbert space-valued) by applying the scalar inequality coordinatewise and summing; see, e.g., \citet[Ch.~3]{boucheron2013concentration}.
The Efron--Stein inequality (vector-valued Efron--Stein / Hilbert space version) implies
\[
\mathbb{E}\!\left[\left\|\hat g(Y)-\mathbb{E}[\hat g(Y)\mid x]\right\|_2^2\,\Big|\,x\right]
\;\le\;
\frac12\sum_{j=1}^{n_b}\mathbb{E}\!\left[\left\|\hat g(Y)-\hat g\!\left(Y^{(j)}\right)\right\|_2^2\,\Big|\,x\right].
\]
By \Cref{eq:rollout_bd}, each summand is at most $C_b(\theta)^2/n_b^2$.
Summing over $j$ gives \eqref{eq:1_over_n}.
\end{proof}

Next, consider a batch of $M$ prompts $\{x_i\}_{i=1}^M$.
Let $\mathbf n=(n_1,\dots,n_B)$ denote the vector of per-bin rollout counts, and define the batch gradient estimator
\begin{equation}
    \hat g(\theta;\mathbf n)
    \;\triangleq\;
    \frac{1}{M}\sum_{i=1}^M \hat g(x_i;\theta,n_{g(x_i)}).
    \label{eq:batch_grad_est}
\end{equation}

\begin{lemma}[A variance proxy decomposes over groups]
\label{lem:batch_var}
Assume that, conditioned on the batch prompts $\{x_i\}_{i=1}^M$, the rollout groups used to form the per-prompt estimators
$\hat g(x_i;\theta,n_{g(x_i)})$ are independent across $i$.
Fix a batch of $M$ prompts $\{x_i\}_{i=1}^M$ and define the empirical bin fractions
$\bar q_b \triangleq \frac{1}{M}\sum_{i=1}^M \mathbf{1}\{g(x_i)=b\}$.
Then, conditioned on the batch prompts $\{x_i\}_{i=1}^M$,
\begin{equation}
    \mathbb{E}\!\left[\big\|\hat g(\theta;\mathbf n) - \mathbb{E}[\hat g(\theta;\mathbf n)\mid \{x_i\}_{i=1}^M]\big\|_2^2 \,\Big|\, \{x_i\}_{i=1}^M\right]
    \;\le\;
    \frac{1}{M}\sum_{b=1}^B \bar q_b\,\frac{v_b(\theta)}{n_b},
    \label{eq:batch_var_bound}
\end{equation}
where $v_b(\theta)$ is any uniform bin-wise constant such that the per-prompt conditional variance obeys
$\mathbb{E}[\|\hat g(x;\theta,n_b)-\mathbb{E}[\hat g(x;\theta,n_b)\mid x]\|_2^2\mid x]\le v_b(\theta)/n_b$ for all prompts $x$ with $g(x)=b$.
Under \Cref{ass:rollout_bounded_diff} and \Cref{lem:rollout_avg}, we may take $v_b(\theta)=C_b(\theta)^2/2$.
\end{lemma}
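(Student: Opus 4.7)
}
The plan is to condition on the batch prompts $\{x_i\}_{i=1}^M$ and exploit the stated conditional independence of the rollout groups across prompts to turn the variance of a sum into a sum of variances, then apply the per-prompt bound of \Cref{lem:rollout_avg} bin-by-bin. I first rewrite the centered batch estimator as
\[
\hat g(\theta;\mathbf n)-\mathbb{E}\!\left[\hat g(\theta;\mathbf n)\,\big|\,\{x_i\}\right]
=\frac{1}{M}\sum_{i=1}^{M}\Big(\hat g(x_i;\theta,n_{g(x_i)})-\mathbb{E}\!\left[\hat g(x_i;\theta,n_{g(x_i)})\,\big|\,x_i\right]\Big),
\]
where I have used the tower property together with the independence of the per-prompt rollout groups (given $\{x_i\}$) to identify $\mathbb{E}[\hat g(\theta;\mathbf n)\mid\{x_i\}]$ with the average of the conditional means.

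Next I would expand the squared $\|\cdot\|_2$-norm of the centered average. By conditional independence of the $M$ per-prompt rollout groups given $\{x_i\}$, all cross terms of the form $\langle \hat g(x_i;\cdot)-\mathbb{E}[\cdot\mid x_i],\ \hat g(x_{i'};\cdot)-\mathbb{E}[\cdot\mid x_{i'}]\rangle$ with $i\neq i'$ vanish in expectation. This yields
\[
\mathbb{E}\!\left[\big\|\hat g(\theta;\mathbf n)-\mathbb{E}[\hat g(\theta;\mathbf n)\mid\{x_i\}]\big\|_2^2 \,\Big|\, \{x_i\}\right]
=\frac{1}{M^{2}}\sum_{i=1}^{M}\mathbb{E}\!\left[\big\|\hat g(x_i;\theta,n_{g(x_i)})-\mathbb{E}[\hat g(x_i;\theta,n_{g(x_i)})\mid x_i]\big\|_2^2 \,\Big|\, x_i\right].
\]

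I would then apply the hypothesized per-prompt bound (equivalently \Cref{lem:rollout_avg} with $v_b(\theta)=C_b(\theta)^2/2$), which gives $v_{g(x_i)}(\theta)/n_{g(x_i)}$ for the $i$-th summand uniformly in the specific prompt $x_i\in$ bin $g(x_i)$. Finally I group the sum by bin: the contribution of bin $b$ is exactly $|\{i:g(x_i)=b\}|\cdot v_b(\theta)/n_b = M\bar q_b\, v_b(\theta)/n_b$, so summing and dividing by $M^{2}$ yields $\tfrac{1}{M}\sum_{b=1}^{B}\bar q_b\,v_b(\theta)/n_b$, which is exactly \eqref{eq:batch_var_bound}.

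The main (minor) obstacle is being careful about what is conditioned on: the independence assumption is only stated conditional on $\{x_i\}_{i=1}^M$, so the decomposition into a sum of variances must be carried out \emph{after} conditioning, and the uniformity of $v_b(\theta)$ over prompts within bin $b$ must be invoked to pass from per-prompt variances (which depend on $x_i$) to the bin-level constant $v_b(\theta)$. Everything else is a routine second-moment calculation; no concentration or game-theoretic argument is needed here beyond what \Cref{lem:rollout_avg} already provides.
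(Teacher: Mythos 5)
Your proposal is correct and follows essentially the same route as the paper's proof: center the batch estimator as an average of per-prompt deviations $Z_i$, use conditional independence given $\{x_i\}_{i=1}^M$ to drop the cross terms, bound each $\mathbb{E}[\|Z_i\|_2^2\mid x_i]$ via \Cref{lem:rollout_avg}, and regroup the sum by bins to obtain $\tfrac{1}{M}\sum_b \bar q_b\,v_b(\theta)/n_b$. Your extra care about identifying the conditional mean via the tower property is a welcome (if minor) clarification of a step the paper states implicitly.
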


\begin{proof}
Write $\hat g(\theta;\mathbf n)-\mathbb{E}[\hat g(\theta;\mathbf n)\mid \{x_i\}_{i=1}^M] = \frac{1}{M}\sum_{i=1}^M Z_i$,
where $Z_i \triangleq \hat g(x_i;\theta,n_{g(x_i)})-\mathbb{E}[\hat g(x_i;\theta,n_{g(x_i)})\mid x_i]$.
Conditioned on the prompts, $\{Z_i\}_{i=1}^M$ are independent and zero-mean.
Thus,
\[
\mathbb{E}\!\left[\left\|\frac{1}{M}\sum_{i=1}^M Z_i\right\|_2^2 \,\Big|\, \{x_i\}_{i=1}^M\right]
=
\frac{1}{M^2}\sum_{i=1}^M \mathbb{E}\!\left[\|Z_i\|_2^2\mid x_i\right]
\le
\frac{1}{M^2}\sum_{i=1}^M \frac{v_{g(x_i)}(\theta)}{n_{g(x_i)}},
\]
where the last inequality uses \Cref{lem:rollout_avg} and the definition of $v_b(\theta)$.
Grouping terms by bins yields
$\frac{1}{M^2}\sum_{b=1}^B \sum_{i:g(x_i)=b} \frac{v_b(\theta)}{n_b}
= \frac{1}{M}\sum_{b=1}^B \bar q_b \frac{v_b(\theta)}{n_b}$,
which proves \eqref{eq:batch_var_bound}.
\end{proof}

\begin{remark}[From variance proxy to the plotted ``weighted standard error'']
\label{rem:se_proxy}
The quantity on the right-hand side of \eqref{eq:batch_var_bound} is a \emph{variance proxy}.
In experiments we additionally visualize the more interpretable ``weighted standard error'' proxy
$\sum_b \bar q_b \sqrt{v_b(\theta)/n_b}$.
This proxy is often easier to interpret because each term $\sqrt{v_b(\theta)/n_b}$ behaves like a standard error: it is a within-bin scale of the prompt-gradient noise, which shrinks at the canonical $1/\sqrt{n_b}$ rate when using $n_b$ rollouts (\Cref{lem:rollout_avg}), and the weights $\bar q_b$ simply average these standard-error contributions across the observed batch composition (see \Cref{fig:wse_proxy}).
By Cauchy--Schwarz,
\[
\left(\sum_{b=1}^B \bar q_b \sqrt{\frac{v_b(\theta)}{n_b}}\right)^2
\le
\left(\sum_{b=1}^B \bar q_b\right)\left(\sum_{b=1}^B \bar q_b\frac{v_b(\theta)}{n_b}\right)
=
\sum_{b=1}^B \bar q_b\frac{v_b(\theta)}{n_b}.
\]
Thus, minimizing the variance proxy also controls the weighted standard-error proxy we plot.
\end{remark}

\subsubsection{The variance-optimal allocation obeys a square-root law}
\label{sec:theory_sqrt_law}

\Cref{lem:batch_var} suggests that, under compute neutrality, the rollout allocation $\mathbf n$ controls a leading-order proxy for
the stochastic component of the learner's update through the term $\sum_b \bar q_b\,v_b(\theta)/n_b$.
This motivates studying a variance-aware relaxation of the allocator objective \eqref{eq:analysis_budget_obj}, in which the
\emph{marginal value} of additional rollouts is captured by the reduction in estimator variance.
Concretely, we fix $\theta$ and treat $v_b(\theta)$ and $\bar q_b$ as constants for a given step (with $\bar q_b=\hat q_t(b)$),
and we analyze the continuous relaxation
\begin{equation}
\label{eq:var_opt_problem}
\min_{\mathbf n\in\mathbb{R}_+^B}\;\sum_{b=1}^B \bar q_b\,\frac{v_b(\theta)}{n_b}
\qquad\text{s.t.}\qquad
\sum_{b=1}^B \bar q_b\,n_b = \bar n.
\end{equation}
This objective is convex in $\mathbf n$ (each term is $1/n_b$), and the constraint is linear.

\begin{theorem}[Square-root law for variance-optimal allocation]
\label{thm:sqrt_allocation}
Let $A\triangleq\{b\in[B]: \bar q_b>0\}$ denote the set of bins present in the batch.
If $v_b(\theta)>0$ for all $b\in A$, then the minimizer of \eqref{eq:var_opt_problem} is unique on the active coordinates and is
\begin{equation}
    n_b^\star
    = \bar n\cdot\frac{\sqrt{v_b(\theta)}}{\sum_{j=1}^B \bar q_j\,\sqrt{v_j(\theta)}},
\qquad b\in A.
    \label{eq:nb_star}
\end{equation}
For bins with $\bar q_b=0$ (i.e., $b\notin A$), the variable $n_b$ does not affect the objective nor the budget constraint and may be chosen arbitrarily.

Moreover, the minimal objective value equals
\begin{equation}
    \sum_{b=1}^B \bar q_b\,\frac{v_b(\theta)}{n_b^\star}
    = \frac{\left(\sum_{b=1}^B \bar q_b\,\sqrt{v_b(\theta)}\right)^2}{\bar n}.
    \label{eq:var_star_value}
\end{equation}
Compared to the uniform allocation $n_b\equiv \bar n$, the optimal allocation never increases the variance proxy:
\begin{equation}
    \sum_{b=1}^B \bar q_b\,\frac{v_b(\theta)}{n_b^\star}
    \;\le\;
    \sum_{b=1}^B \bar q_b\,\frac{v_b(\theta)}{\bar n}
    =
    \frac{\sum_{b=1}^B \bar q_b v_b(\theta)}{\bar n},
    \label{eq:optimal_le_uniform}
\end{equation}
with equality iff $v_b(\theta)$ is constant across the active bins $b\in A$.
\end{theorem}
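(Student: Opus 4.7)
The plan is to solve \eqref{eq:var_opt_problem} as a standard convex program on its active set and then read off both the closed form \eqref{eq:nb_star} and the comparison \eqref{eq:optimal_le_uniform} from elementary inequalities. First I would reduce to $A=\{b:\bar q_b>0\}$: if $b\notin A$, then $n_b$ appears with coefficient $0$ in both the objective and the budget constraint, so it is a free variable and can be ignored. On the remaining coordinates, the objective $\sum_{b\in A}\bar q_b v_b(\theta)/n_b$ is a positive linear combination of the strictly convex functions $n_b\mapsto 1/n_b$ on $(0,\infty)$ (positive because $v_b(\theta)>0$ on $A$ by assumption), and the constraint $\sum_{b\in A}\bar q_b n_b=\bar n$ is linear with nonzero gradient, so the KKT conditions are necessary and sufficient, and any solution is unique on $A$.

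Next I would form the Lagrangian $\mathcal{L}(\mathbf{n},\lambda)=\sum_{b\in A}\bar q_b v_b(\theta)/n_b+\lambda\bigl(\sum_{b\in A}\bar q_b n_b-\bar n\bigr)$, differentiate, and solve. Stationarity gives $-\bar q_b v_b(\theta)/n_b^2+\lambda\bar q_b=0$, hence $n_b=\sqrt{v_b(\theta)/\lambda}$ for all $b\in A$. Substituting into the budget equation yields $\sqrt{1/\lambda}=\bar n\big/\sum_{j\in A}\bar q_j\sqrt{v_j(\theta)}$, which plugged back into $n_b=\sqrt{v_b(\theta)/\lambda}$ gives \eqref{eq:nb_star}. (Since $\bar q_b=0$ for $b\notin A$, the sum in the denominator is equivalently over all $b\in[B]$, matching the stated form.) To obtain \eqref{eq:var_star_value} I would simply compute $\sum_b \bar q_b v_b(\theta)/n_b^\star$ after substitution; the $\sqrt{v_b(\theta)}$ telescoping leaves $\bigl(\sum_b \bar q_b\sqrt{v_b(\theta)}\bigr)^2/\bar n$.

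Finally, for the uniform-baseline comparison \eqref{eq:optimal_le_uniform} I would apply the Cauchy--Schwarz inequality to the vectors $(\sqrt{\bar q_b})_{b}$ and $(\sqrt{\bar q_b v_b(\theta)})_{b}$, which gives
\begin{equation*}
\Bigl(\sum_{b=1}^B \bar q_b\sqrt{v_b(\theta)}\Bigr)^{2}
\;\le\;
\Bigl(\sum_{b=1}^B \bar q_b\Bigr)\Bigl(\sum_{b=1}^B \bar q_b v_b(\theta)\Bigr)
\;=\;\sum_{b=1}^B \bar q_b v_b(\theta),
\end{equation*}
using $\sum_b \bar q_b=1$. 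Dividing by $\bar n$ and invoking \eqref{eq:var_star_value} yields the desired inequality. The Cauchy--Schwarz equality condition recovers ``$\sqrt{v_b(\theta)}$ constant on $A$,'' giving the stated tightness characterization.

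I do not anticipate a genuine obstacle here: the argument is essentially a one-constraint Lagrangian on a strictly convex separable objective, plus a Cauchy--Schwarz step. The only two points requiring mild care are (i) restricting to the active set so that the stated formula \eqref{eq:nb_star} is well-defined whenever $\bar q_b=0$, and (ii) using that $\bar q$ is a probability vector so $\sum_b\bar q_b=1$ in the Cauchy--Schwarz application; both are immediate under the assumptions of the theorem.
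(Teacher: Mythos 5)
Your proposal is correct and follows essentially the same route as the paper's proof: a single-multiplier Lagrangian/KKT argument on the active set yielding $n_b=\sqrt{v_b(\theta)/\lambda}$, substitution into the budget constraint to recover \eqref{eq:nb_star} and \eqref{eq:var_star_value}, and the identical Cauchy--Schwarz step (with $\sum_b \bar q_b = 1$) for \eqref{eq:optimal_le_uniform} and its equality condition. No gaps; the strict-convexity justification of uniqueness on $A$ matches the paper's as well.
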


\begin{proof}
If $v_b(\theta)=0$ for some active bin $b\in A$, then its term $\bar q_b v_b(\theta)/n_b$ is identically zero, so allocating additional rollouts to that bin cannot reduce the variance proxy; in that degenerate case one may treat such bins as ``free'' and apply the same square-root allocation to the remaining bins with positive $v_b(\theta)$ (in our discrete implementation, this corresponds to choosing the smallest rollout arm).
Form the Lagrangian (with multiplier $\mu$, matching the ``shadow price'' interpretation in \eqref{eq:analysis_budget_lagrangian})
\[
\mathcal{L}(\mathbf n,\mu)
=
\sum_{b=1}^B \bar q_b\frac{v_b(\theta)}{n_b}
+\mu\left(\sum_{b=1}^B \bar q_b n_b-\bar n\right).
\]
Bins with $\bar q_b=0$ do not appear in the objective nor the constraint in \eqref{eq:var_opt_problem}, so we may ignore them and restrict attention to the active set $A$.
Stationarity gives, for each active bin $b\in A$ (so $\bar q_b>0$),
\[
-\bar q_b\frac{v_b(\theta)}{n_b^2}+\mu \bar q_b = 0
\quad\Longrightarrow\quad
n_b = \sqrt{\frac{v_b(\theta)}{\mu}}.
\]
Enforcing the constraint yields
$\sum_b \bar q_b \sqrt{v_b(\theta)/\mu}=\bar n$, hence
$\sqrt{\mu}=\frac{\sum_j \bar q_j\sqrt{v_j(\theta)}}{\bar n}$,
and substituting gives \eqref{eq:nb_star}.
Uniqueness on the active coordinates follows because the objective in \eqref{eq:var_opt_problem} is strictly convex in $\{n_b\}_{b\in A}$ when $v_b(\theta)>0$.

Plugging \eqref{eq:nb_star} into the objective yields \eqref{eq:var_star_value}.

Finally, to show \eqref{eq:optimal_le_uniform}, apply Cauchy--Schwarz:
\[
\left(\sum_{b=1}^B \bar q_b\sqrt{v_b(\theta)}\right)^2
\le
\left(\sum_{b=1}^B \bar q_b\right)\left(\sum_{b=1}^B \bar q_b v_b(\theta)\right)
=
\sum_{b=1}^B \bar q_b v_b(\theta).
\]
Divide by $\bar n$ and use \eqref{eq:var_star_value} to obtain \eqref{eq:optimal_le_uniform}.
Equality holds iff $\sqrt{v_b(\theta)}$ is constant across $b\in A$.
\end{proof}

\begin{remark}[Interpretation]
\label{rem:sqrt_interp}
\Cref{thm:sqrt_allocation} shows that the variance-optimal allocation equalizes the \emph{marginal} variance proxy across bins:
at the optimum, $v_b(\theta)/n_b^{\star 2}=\mu$ is constant in $b$ (the KKT multiplier / shadow price).
Consequently, bins with larger intrinsic variance $v_b(\theta)$ receive more rollouts, with
\emph{per-prompt} rollouts scaling as $n_b^\star\propto \sqrt{v_b(\theta)}$.
This square-root law is the same structure as the classical Neyman allocation in stratified sampling and Monte Carlo budgeting
(see, e.g., \citealp{rubinstein2016simulation}).
Notably, the bin fraction $\bar q_b$ cancels from the stationarity condition, meaning that the \emph{marginal} value of extra rollouts
depends on $v_b(\theta)$ rather than frequency.
At the same time, the budget is weighted by $\bar q_b$, so rare bins can receive large $n_b^\star$ without violating compute neutrality
$\sum_b \bar q_b n_b=\bar n$.
See the rollout heatmaps and time snapshots in \Cref{fig:rollout_mechanism,fig:rollout_snapshots} for the corresponding piecewise-constant ("staircase") transitions between discrete rollout arms as the shadow price $\mu$ adapts.
This square-root allocation intuition is consistent with the empirically observed rollout-allocation patterns in Rollout-GDRO (see, e.g., \Cref{fig:rollout_mechanism,fig:rollout_snapshots}), where bins deemed noisier receive larger rollout arms under a fixed mean budget.
\end{remark}

\subsubsection{Discrete rollout arms and an entropic primal--dual view}
\label{sec:theory_primal_dual}

The continuous analysis in \Cref{sec:theory_variance_proxy}--\Cref{sec:theory_sqrt_law} yields a closed-form allocation for the idealized
variance proxy objective when the rollout counts $\{n_b\}$ are allowed to be any positive reals.
Rollout-GDRO, however, must choose \emph{discrete} rollout counts $n_b\in\mathcal{N}=\{n_{\min},\dots,n_{\max}\}$ and enforce compute neutrality online.
Mirroring the Lagrangian perspective in the main paper (cf.\ \eqref{eq:analysis_budget_lagrangian}), we view the allocator as maintaining a
dual variable $\mu$ that acts as a \emph{shadow price of compute}.
To connect the discrete implementation to the variance proxy derived above, we consider an entropy-regularized relaxation of the same constrained problem.

Concretely, for each bin $b$ we maintain a distribution $p_b\in\Delta_{\mathcal{N}}$ over rollout ``arms'' $n\in\mathcal{N}$.
We model the per-bin \emph{cost} of choosing arm $n$ as a function $V_b(n;\theta)$ that decreases with $n$.
In the variance-proxy setting of \Cref{sec:theory_variance_proxy}, a canonical choice is $V_b(n;\theta)=v_b(\theta)/n$.
(Equivalently, one may interpret the utility in \eqref{eq:analysis_budget_obj} as a monotone transform of $-V_b$; our analysis isolates the
variance-reduction component that is most directly controlled by $n$.)
The entropy-regularized Lagrangian for a fixed $\theta$ takes the form
\begin{equation}
\label{eq:rollout_lagrangian}
\mathcal{L}(\{p_b\},\mu;\theta)
=
\sum_{b=1}^B \bar q_b\left(
\mathbb{E}_{n\sim p_b}[V_b(n;\theta)]
-\frac{1}{\eta}H(p_b)
\right)
+\mu\left(\sum_{b=1}^B \bar q_b\,\mathbb{E}_{n\sim p_b}[n]-\bar n\right),
\end{equation}
where $\mu\ge 0$ is the shadow price and the entropy term encourages exploration over arms (stability).

\begin{lemma}[Soft-min solution for rollout arms]
\label{lem:softmin_rollout}
For fixed $\mu$ and bin $b$, the minimizer of \eqref{eq:rollout_lagrangian} over $p_b\in\Delta_{\mathcal{N}}$ is
\begin{equation}
\label{eq:rollout_softmin_dist}
p^\star_{b,\mu}(n)\;=\;\frac{\exp\!\big(-\eta\,[V_b(n;\theta)+\mu n]\big)}
{\sum_{n'\in\mathcal{N}}\exp\!\big(-\eta\,[V_b(n';\theta)+\mu n']\big)}.
\end{equation}
Moreover, the optimal value of the inner minimization equals a soft-min:
\begin{equation}
\label{eq:softmin_value}
\min_{p_b\in\Delta_{\mathcal{N}}}\left\{
\mathbb{E}_{n\sim p_b}[V_b(n;\theta)+\mu n] - \frac{1}{\eta}H(p_b)
\right\}
=
-\frac{1}{\eta}\log\!\left(\sum_{n\in\mathcal{N}} e^{-\eta(V_b(n;\theta)+\mu n)}\right).
\end{equation}
\end{lemma}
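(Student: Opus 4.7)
The plan is to reduce the statement to a direct instance of Lemma~\ref{lem:lse_variational} (the Gibbs / log-sum-exp variational identity) via a sign flip, after separating the Lagrangian across bins. First I would observe that, with $\mu$ fixed, the Lagrangian in Eq.~\eqref{eq:rollout_lagrangian} decouples across $b$: collecting the $p_b$-dependent terms gives $\bar q_b\bigl(\mathbb{E}_{n\sim p_b}[V_b(n;\theta)+\mu n]-\eta^{-1}H(p_b)\bigr)$, with the $-\mu\bar n$ term and the contributions from bins $b'\neq b$ constant with respect to $p_b$. Since $\bar q_b\ge 0$, when $\bar q_b>0$ the minimizer of $\mathcal{L}$ over $p_b$ coincides with the minimizer of the bracketed expression (when $\bar q_b=0$ any $p_b$ is optimal and the formula in Eq.~\eqref{eq:rollout_softmin_dist} is a consistent choice).

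Second, I would reformulate this per-bin minimization as a maximization. Define $v_n\triangleq -(V_b(n;\theta)+\mu n)$ for $n\in\mathcal{N}$. Then
\[
\mathbb{E}_{n\sim p_b}[V_b(n;\theta)+\mu n] - \eta^{-1}H(p_b)
\;=\; -\bigl(\langle p_b,v\rangle + \eta^{-1}H(p_b)\bigr),
\]
so the infimum of the left-hand side over $\Delta_\mathcal{N}$ equals the negative of the supremum of the right-hand side, and the two problems share the same optimizer.

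Finally, I would apply Lemma~\ref{lem:lse_variational} with $m=|\mathcal{N}|$ and the constructed $v$. This yields both claims in one shot: the unique maximizer of the flipped problem is the softmax $p_{b,\mu}^\star(n)\propto \exp(\eta v_n)=\exp(-\eta[V_b(n;\theta)+\mu n])$, which normalizes to Eq.~\eqref{eq:rollout_softmin_dist}; and the optimal value equals $\eta^{-1}\log\sum_{n\in\mathcal{N}} e^{\eta v_n}=\eta^{-1}\log\sum_{n\in\mathcal{N}}\exp(-\eta[V_b(n;\theta)+\mu n])$. Negating recovers the soft-min value in Eq.~\eqref{eq:softmin_value}, and uniqueness is inherited from strict concavity of the entropy on the finite-dimensional simplex $\Delta_\mathcal{N}$. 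There is really no hard step here: the result is a corollary of Lemma~\ref{lem:lse_variational} once the bin separation, the drop of $\mu$-independent constants, and the sign convention are made explicit. The only small bookkeeping item is ensuring that the cross-bin coupling through $\mu$ does not obstruct the per-bin argmin, which is immediate because $\mu$ is held fixed in the inner minimization.
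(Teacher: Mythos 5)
Your proposal is correct and follows essentially the same route as the paper's proof: both apply Lemma~\ref{lem:lse_variational} to the vector $v_n=-(V_b(n;\theta)+\mu n)$ and negate to convert the log-sum-exp maximization into the soft-min. Your additional remarks on the per-bin decoupling of the Lagrangian and the $\bar q_b=0$ edge case are fine bookkeeping that the paper leaves implicit.
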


\begin{proof}
Apply \Cref{lem:lse_variational} to the vector $v\in\mathbb{R}^{|\mathcal{N}|}$ with entries
$v_n \triangleq -\big(V_b(n;\theta)+\mu n\big)$.
Then
\[
\max_{p\in\Delta_{\mathcal{N}}}\left\{\langle p,v\rangle + \frac{1}{\eta}H(p)\right\}
=
\frac{1}{\eta}\log\sum_{n\in\mathcal{N}} e^{\eta v_n}
=
\frac{1}{\eta}\log\sum_{n\in\mathcal{N}} e^{-\eta (V_b(n;\theta)+\mu n)}.
\]
Negating both sides turns the maximization into the minimization in \eqref{eq:softmin_value}.
The maximizer in \Cref{lem:lse_variational} becomes the minimizer here and equals \eqref{eq:rollout_softmin_dist}.
\end{proof}

\begin{corollary}[Soft-min approximation quality]
\label{cor:softmin_gap}
Let $\mathrm{smin}_\eta(z)\triangleq -\frac{1}{\eta}\log\sum_{i=1}^K e^{-\eta z_i}$.
Then for any $z\in\mathbb{R}^K$,
\begin{equation}
\label{eq:softmin_gap}
\mathrm{smin}_\eta(z)\;\le\;\min_i z_i\;\le\;\mathrm{smin}_\eta(z)+\frac{\log K}{\eta}.
\end{equation}
\end{corollary}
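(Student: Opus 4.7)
The plan is to reduce the claim to the log-sum-exp bound already established in \Cref{cor:lse_max_gap}, using the fact that soft-min and log-sum-exp are related by a sign flip. Define $v \in \mathbb{R}^K$ by $v_i \triangleq -z_i$. Then
\[
\mathrm{lse}_\eta(v)\;\triangleq\;\frac{1}{\eta}\log\sum_{i=1}^K e^{\eta v_i}
\;=\;\frac{1}{\eta}\log\sum_{i=1}^K e^{-\eta z_i}
\;=\;-\mathrm{smin}_\eta(z),
\]
and $\max_i v_i = -\min_i z_i$. So the two-sided approximation quality of $\mathrm{lse}_\eta$ relative to $\max$ transports directly into an approximation quality of $\mathrm{smin}_\eta$ relative to $\min$.

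First I would invoke \Cref{cor:lse_max_gap} with this vector $v$ and with $m = K$ to obtain $\max_i v_i \le \mathrm{lse}_\eta(v) \le \max_i v_i + \tfrac{\log K}{\eta}$. Rewriting both sides in terms of $z$ via the identities above yields
\[
-\min_i z_i \;\le\; -\mathrm{smin}_\eta(z) \;\le\; -\min_i z_i + \frac{\log K}{\eta}.
\]
Multiplying through by $-1$ (which reverses the inequalities) and rearranging gives $\mathrm{smin}_\eta(z) \le \min_i z_i \le \mathrm{smin}_\eta(z) + \tfrac{\log K}{\eta}$, which is exactly \eqref{eq:softmin_gap}.

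I do not anticipate any real technical obstacle: the soft-min functional is the reflection of log-sum-exp under $z \mapsto -z$, so the claim is a one-step corollary of \Cref{cor:lse_max_gap} and does not require re-deriving concavity, KKT conditions, or entropy bounds. A self-contained alternative would bound $\sum_i e^{-\eta z_i}$ directly between $e^{-\eta \min_i z_i}$ and $K\, e^{-\eta \min_i z_i}$ and apply $-\tfrac{1}{\eta}\log(\cdot)$ (noting that this transform is monotonically decreasing); this produces the two inequalities in two lines. I would keep the reduction to \Cref{cor:lse_max_gap} as the main argument for stylistic consistency with the entropic-GDRO viewpoint developed in \Cref{sec:theory_lse} and to emphasize that \Cref{lem:softmin_rollout}--\Cref{cor:softmin_gap} is the rollout-side analogue of \Cref{lem:lse_variational}--\Cref{cor:lse_max_gap} on the prompt side.
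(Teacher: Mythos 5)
Your proposal is correct and matches the paper's proof exactly: the paper's argument is precisely ``apply \Cref{cor:lse_max_gap} to $v=-z$,'' which is the reduction you carry out in detail. No further comment is needed.
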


\begin{proof}
Apply \Cref{cor:lse_max_gap} to $v=-z$.
\end{proof}

\begin{remark}[Shadow price and the ``staircase'' compute pattern]
\label{rem:shadow_price}
\Cref{lem:softmin_rollout} makes explicit that $\mu$ acts as a \emph{shadow price} on rollouts:
increasing $\mu$ shifts probability mass in \eqref{eq:rollout_softmin_dist} toward smaller rollout counts $n$.
In our implementation, $\mu$ is updated by (projected) dual ascent on the budget violation, so the system dynamically finds a price at which the expected
rollout budget is approximately satisfied.
This primal--dual viewpoint also clarifies the connection to the square-root law.
If we set $V_b(n;\theta)=v_b(\theta)/n$ and temporarily relax $n$ to be continuous, then the unregularized best response to a fixed price $\mu$ solves
\(\min_{n>0}\, v_b(\theta)/n + \mu n\), whose minimizer is $n_b(\mu)=\sqrt{v_b(\theta)/\mu}$.
Choosing $\mu$ to satisfy the mean-budget constraint recovers the closed-form allocation in \Cref{thm:sqrt_allocation}.
With a finite discrete arm set $\mathcal{N}$, the minimizing arm is the nearest available rollout count to $\sqrt{v_b(\theta)/\mu}$, so as $\mu$
changes the optimizer can switch abruptly between arms.
This explains the ``staircase'' patterns observed in the rollout-allocation figures.
See \Cref{fig:rollout_mechanism,fig:rollout_snapshots} for these staircase-like transitions between discrete rollout arms as the learned shadow price $\mu$ changes.
\end{remark}

\subsubsection{No-regret primal--dual analysis for Rollout-GDRO}
\label{sec:theory_rollout_noregret}

The previous subsection gave a (regularized) Lagrangian view of the rollout controller.
Here we show that the same structure admits a clean \emph{no-regret} interpretation, mirroring the Prompt-GDRO analysis in \Cref{sec:theory_prompt_gdro}.

Throughout this subsection, we idealize the rollout controller as operating on a \emph{fixed} rollout-cost landscape,
i.e., we condition on a fixed policy parameter $\theta$ and fixed group fractions $\bar q\in\Delta_B$.
This subsection analyzes Rollout-GDRO as an independent online budget-allocation game for fixed $(\theta,\bar q)$.
(We comment on the coupled, multi-time-scale setting in \Cref{rem:coupling_prompt_rollout}.)

\paragraph{A truncated Lagrangian game.}
Let $\mathcal{N}=\{n_{\min},\dots,n_{\max}\}$ be the discrete set of rollout arms and denote $K\triangleq|\mathcal{N}|$.
For each bin $b$, the rollout controller chooses a distribution $p_b\in\Delta_{\mathcal{N}}$ over arms; write $p=(p_1,\dots,p_B)$.
Let $V_b(n;\theta)\ge 0$ denote the (idealized) variance proxy for choosing $n$ rollouts in bin $b$ at policy $\theta$ (e.g., $V_b(n;\theta)=v_b(\theta)/n$ as in \Cref{eq:var_opt_problem}).
Let $a(n)\triangleq n$ denote the compute cost of arm $n$.
We consider the convex--concave ``truncated'' Lagrangian
\begin{equation}
\label{eq:rollout_truncated_game}
L_{\mathrm{roll}}(p,\mu;\theta)
\;\triangleq\;
\sum_{b=1}^B \bar q_b\,\mathbb{E}_{n\sim p_b}\!\left[V_b(n;\theta)\right]
\;+\;
\mu\left(\sum_{b=1}^B \bar q_b\,\mathbb{E}_{n\sim p_b}\!\left[a(n)\right]-\bar n\right),
\qquad
\mu\in[0,\mu_{\max}],
\end{equation}
where $\mu$ is the shadow price and $\mu_{\max}$ is a chosen truncation level (projection radius).
When $\mu_{\max}$ is large, maximizing over $\mu$ approximately enforces the budget constraint.

\paragraph{Primal--dual updates.}
A natural no-regret dynamics for \eqref{eq:rollout_truncated_game} is:
\begin{align}
p_{t+1,b}
&=
\argmin_{p_b\in\Delta_{\mathcal{N}}}
\left\{
\left\langle p_b,\, \ell_{t,b}\right\rangle
+\frac{1}{\eta_p}\mathrm{KL}(p_b\,\|\,p_{t,b})
\right\},
\qquad
\ell_{t,b}(n)\triangleq \bar q_b\big(V_b(n;\theta)+\mu_t a(n)\big),
\label{eq:rollout_primal_update}
\\
\mu_{t+1}
&=
\Pi_{[0,\mu_{\max}]}\!\left(\mu_t+\eta_\mu\,g_t\right),
\qquad
g_t\triangleq \sum_{b=1}^B \bar q_b\,\mathbb{E}_{n\sim p_{t,b}}[a(n)]-\bar n,
\label{eq:rollout_dual_update}
\end{align}
where $\Pi_{[0,\mu_{\max}]}$ is Euclidean projection and $\mathrm{KL}(\cdot\|\cdot)$ is the KL divergence.
The update \eqref{eq:rollout_primal_update} is entropic mirror descent (a.k.a.\ exponentiated gradient) on the simplex, applied independently in each bin.
\Cref{lem:softmin_rollout} shows that for fixed $\mu$, the minimizer has a Gibbs / soft-min form; the dynamics \eqref{eq:rollout_primal_update} tracks this solution online as $\mu_t$ evolves.

\begin{theorem}[No-regret guarantee for Rollout-GDRO's primal--dual controller]
\label{thm:rollout_noregret}
Assume that for all $b$ and $n\in\mathcal{N}$ we have $0\le V_b(n;\theta)\le V_{\max}$ and $0\le a(n)\le a_{\max}$.
Let $K=|\mathcal{N}|$ and initialize $p_{1,b}$ to the uniform distribution on $\mathcal{N}$ and $\mu_1=0$.
Run the updates \eqref{eq:rollout_primal_update}--\eqref{eq:rollout_dual_update} for $T$ steps with step sizes $\eta_p,\eta_\mu>0$.

Define the averaged iterates $\bar p_b\triangleq \frac{1}{T}\sum_{t=1}^T p_{t,b}$ and $\bar\mu\triangleq \frac{1}{T}\sum_{t=1}^T \mu_t$.
Then the (truncated) saddle-point gap satisfies
\begin{equation}
\label{eq:rollout_saddle_gap}
\max_{\mu\in[0,\mu_{\max}]}\;L_{\mathrm{roll}}(\bar p,\mu;\theta)
\;-\;
\min_{p\in(\Delta_{\mathcal{N}})^B}\;L_{\mathrm{roll}}(p,\bar\mu;\theta)
\;\le\;
\frac{B\log K}{\eta_p T}
+\frac{\eta_p}{8}\,(V_{\max}+\mu_{\max}a_{\max})^2
+\frac{\mu_{\max}^2}{2\eta_\mu T}
+\frac{\eta_\mu}{2}\,a_{\max}^2.
\end{equation}
In particular, the explicit step sizes
\begin{equation}
\label{eq:rollout_stepsizes_explicit}
\eta_p \;\triangleq\; \frac{\sqrt{8B\log K}}{(V_{\max}+\mu_{\max}a_{\max})\sqrt{T}},
\qquad
\eta_\mu \;\triangleq\; \frac{\mu_{\max}}{a_{\max}\sqrt{T}},
\end{equation}
yield the concrete bound
\begin{equation}
\label{eq:rollout_gap_explicit}
\mathrm{Gap}_T
\;\le\;
(V_{\max}+\mu_{\max}a_{\max})\,\sqrt{\frac{B\log K}{2T}}
\;+
\frac{\mu_{\max}a_{\max}}{\sqrt{T}}.
\end{equation}

Moreover, let $p^\star$ be an optimal solution of the \emph{budgeted variance} problem
\begin{equation}
\label{eq:rollout_constrained_problem}
\min_{p\in(\Delta_{\mathcal{N}})^B}\;\sum_{b=1}^B \bar q_b\,\mathbb{E}_{n\sim p_b}\!\left[V_b(n;\theta)\right]
\qquad\text{s.t.}\qquad
\sum_{b=1}^B \bar q_b\,\mathbb{E}_{n\sim p_b}\!\left[a(n)\right]\le \bar n.
\end{equation}
Then the averaged rollout policy $\bar p$ is nearly optimal and nearly feasible:
\begin{align}
\sum_{b=1}^B \bar q_b\,\mathbb{E}_{n\sim \bar p_b}\!\left[V_b(n;\theta)\right]
-
\sum_{b=1}^B \bar q_b\,\mathbb{E}_{n\sim p_b^\star}\!\left[V_b(n;\theta)\right]
&\le
\mathrm{Gap}_T,
\label{eq:rollout_obj_gap}
\\
\Bigg[\sum_{b=1}^B \bar q_b\,\mathbb{E}_{n\sim \bar p_b}\!\left[a(n)\right]-\bar n\Bigg]_+
&\le
\frac{\sum_{b=1}^B \bar q_b\,\mathbb{E}_{n\sim p_b^\star}\!\left[V_b(n;\theta)\right]+\mathrm{Gap}_T}{\mu_{\max}}
\;\le\;\frac{V_{\max}+\mathrm{Gap}_T}{\mu_{\max}},
\label{eq:rollout_budget_gap}
\end{align}
where $\mathrm{Gap}_T$ denotes the right-hand side of \eqref{eq:rollout_saddle_gap}.
\end{theorem}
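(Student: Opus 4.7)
The plan is to run a standard primal--dual no-regret analysis on the convex--concave game defined by $L_{\mathrm{roll}}(p,\mu;\theta)$ in \eqref{eq:rollout_truncated_game} (linear in $p$ for fixed $\mu$, affine in $\mu$ for fixed $p$), bound the two players' regrets separately, combine them via Jensen to get the saddle-point gap \eqref{eq:rollout_saddle_gap}, and then convert that gap into the constrained-problem guarantees \eqref{eq:rollout_obj_gap}--\eqref{eq:rollout_budget_gap}. The explicit step sizes \eqref{eq:rollout_stepsizes_explicit} are a routine calibration obtained by balancing the two regret terms for each player.

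For the primal side, I would observe that within each bin $b$ the update \eqref{eq:rollout_primal_update} is exactly exponentiated gradient (entropic mirror descent) on $\Delta_{\mathcal{N}}$ against the loss vector $\ell_{t,b}(n)=\bar q_b\bigl(V_b(n;\theta)+\mu_t a(n)\bigr)$. Since $\|\ell_{t,b}\|_\infty\le V_{\max}+\mu_{\max}a_{\max}$ by the stated boundedness (using $\bar q_b\le 1$), the textbook EG regret bound (e.g., \citealp{cesa2006prediction,hazan2016introduction}), relying on Hoeffding's lemma for the $1/8$ constant, yields the per-bin bound $\frac{\log K}{\eta_p}+\frac{\eta_p T}{8}(V_{\max}+\mu_{\max}a_{\max})^2$; summing over $B$ independent bins produces the first two terms of \eqref{eq:rollout_saddle_gap}. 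For the dual side, \eqref{eq:rollout_dual_update} is projected gradient ascent on $[0,\mu_{\max}]$ with $|g_t|\le a_{\max}$, and the standard one-dimensional OGD regret bound delivers $\frac{\mu_{\max}^2}{2\eta_\mu}+\frac{\eta_\mu T}{2}a_{\max}^2$, matching the last two terms. By linearity of $L_{\mathrm{roll}}$ in each argument, Jensen's inequality applied to the time averages $\bar p$ and $\bar\mu$ converts the sum of average regrets into the saddle-gap bound \eqref{eq:rollout_saddle_gap}; plugging in \eqref{eq:rollout_stepsizes_explicit} then yields \eqref{eq:rollout_gap_explicit}.

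Finally, I would translate the saddle-gap into \eqref{eq:rollout_obj_gap} and \eqref{eq:rollout_budget_gap} by choosing specific dual comparators. Taking $\mu=0$ in the $\max$ gives $L_{\mathrm{roll}}(\bar p,0;\theta)=\sum_b\bar q_b\mathbb{E}_{\bar p_b}[V_b]$, and using feasibility of $p^\star$ in $L_{\mathrm{roll}}(p^\star,\bar\mu;\theta)\le \sum_b\bar q_b\mathbb{E}_{p_b^\star}[V_b]$ yields the objective bound \eqref{eq:rollout_obj_gap}. Taking $\mu=\mu_{\max}$ extracts $\mu_{\max}\bigl[\sum_b\bar q_b\mathbb{E}_{\bar p_b}[a]-\bar n\bigr]_+$ on the left of the saddle gap, while the right side is upper bounded by $\sum_b\bar q_b\mathbb{E}_{p_b^\star}[V_b]+\mathrm{Gap}_T\le V_{\max}+\mathrm{Gap}_T$ (again by feasibility of $p^\star$ and nonnegativity of $\bar\mu$), which rearranges to \eqref{eq:rollout_budget_gap}. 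The main subtlety I expect is precisely this feasibility step: because the dual is truncated at $\mu_{\max}$, the saddle gap only directly controls constraint violations scaled by $\mu_{\max}$, forcing the $1/\mu_{\max}$ factor and requiring $\mu_{\max}$ to be chosen large enough that $(V_{\max}+\mathrm{Gap}_T)/\mu_{\max}$ is a useful feasibility tolerance---this tradeoff between optimality (which degrades in $\mu_{\max}$ through $\mathrm{Gap}_T$) and feasibility (which improves in $\mu_{\max}$) is the only nontrivial design choice hidden in the theorem statement.
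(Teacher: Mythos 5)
Your overall strategy is exactly the paper's: per-bin entropic mirror descent regret for the primal player, one-dimensional projected OGD regret for the truncated dual player, Jensen on the time averages to obtain the saddle gap, and then feasibility of $p^\star$ together with the identity $\max_{\mu\in[0,\mu_{\max}]}L_{\mathrm{roll}}(\bar p,\mu;\theta)=\sum_b\bar q_b\,\mathbb{E}_{\bar p_b}[V_b]+\mu_{\max}[\sum_b\bar q_b\,\mathbb{E}_{\bar p_b}[a]-\bar n]_+$ to extract \eqref{eq:rollout_obj_gap}--\eqref{eq:rollout_budget_gap}. Your closing discussion of the $\mu_{\max}$ optimality/feasibility tradeoff is also the right reading of the truncation.

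There is, however, one quantitative slip in your primal regret step that prevents your argument from delivering the stated bound \eqref{eq:rollout_saddle_gap}. You bound $\|\ell_{t,b}\|_\infty\le V_{\max}+\mu_{\max}a_{\max}$ by discarding the factor $\bar q_b$ (via $\bar q_b\le 1$) \emph{before} summing over bins, so each bin contributes $\tfrac{\eta_p T}{8}(V_{\max}+\mu_{\max}a_{\max})^2$ and the sum over $B$ bins yields $\tfrac{B\eta_p}{8}(V_{\max}+\mu_{\max}a_{\max})^2$ after averaging --- an extra factor of $B$ relative to the second term of \eqref{eq:rollout_saddle_gap}. The paper instead keeps the per-bin loss range as $\bar q_b(V_{\max}+\mu_{\max}a_{\max})$, so the Hoeffding term carries $\bar q_b^2$, and only then sums over $b$ using $\sum_b\bar q_b^2\le(\sum_b\bar q_b)^2=1$; this is what removes the factor of $B$ from the $\eta_p$ term (while the $\log K$ term legitimately picks up a factor of $B$ from the $B$ independent simplices). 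With your looser aggregation the step-size calibration \eqref{eq:rollout_stepsizes_explicit} would also change and the final rate would degrade by $\sqrt{B}$. The fix is a one-line modification, but as written your proof establishes a strictly weaker inequality than the theorem claims.
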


\begin{proof}
We apply a standard two-player no-regret-to-equilibrium argument to the convex--concave game \eqref{eq:rollout_truncated_game}; see, e.g., \citealp{cesa2006prediction,hazan2016introduction,bubeck2015convex} for the standard regret bounds and the conversion to saddle-point guarantees. 

\paragraph{Step 1: primal regret bound (entropic mirror descent).}
Fix a bin $b$.
The primal update \eqref{eq:rollout_primal_update} is entropic mirror descent on $\Delta_{\mathcal{N}}$ with loss vector $\ell_{t,b}\in\mathbb{R}^{K}$.
Because $0\le V_b(n;\theta)\le V_{\max}$, $0\le a(n)\le a_{\max}$, and $0\le \mu_t\le \mu_{\max}$, each coordinate satisfies
\(0\le \ell_{t,b}(n)\le \bar q_b\,(V_{\max}+\mu_{\max}a_{\max})\).
The standard entropic-mirror-descent regret bound (via the log-sum-exp potential and Hoeffding's lemma) gives, for any fixed comparator $p_b\in\Delta_{\mathcal{N}}$,
\begin{equation}
\label{eq:rollout_primal_regret_single}
\sum_{t=1}^T \langle p_{t,b},\ell_{t,b}\rangle
-
\sum_{t=1}^T \langle p_b,\ell_{t,b}\rangle
\le
\frac{\log K}{\eta_p}
+\frac{\eta_p}{8}\,T\,\bar q_b^2\,(V_{\max}+\mu_{\max}a_{\max})^2.
\end{equation}
Summing \eqref{eq:rollout_primal_regret_single} over $b=1,\dots,B$ yields
\begin{align}
\label{eq:rollout_primal_regret_total}
\sum_{t=1}^T L_{\mathrm{roll}}(p_t,\mu_t;\theta)
-
\sum_{t=1}^T L_{\mathrm{roll}}(p,\mu_t;\theta)
&\le
\frac{B\log K}{\eta_p}
+\frac{\eta_p}{8}\,T\,(V_{\max}+\mu_{\max}a_{\max})^2\sum_{b=1}^B \bar q_b^2 \\
&\le
\frac{B\log K}{\eta_p}
+\frac{\eta_p}{8}\,T\,(V_{\max}+\mu_{\max}a_{\max})^2,
\end{align}
the last inequality holds due to the algebraic fact that $a^2+\cdots+b^2\le (a+\cdots+b)^2 \le 1$ for any probability measure $(a,\cdots,b)\in\Delta$,
where $p=(p_1,\dots,p_B)$ and $p_t=(p_{t,1},\dots,p_{t,B})$.

\paragraph{Step 2: dual regret bound (projected gradient ascent).}
The dual player maximizes $L_{\mathrm{roll}}(p_t,\mu;\theta)$ over $\mu\in[0,\mu_{\max}]$.
Since $L_{\mathrm{roll}}(p_t,\mu;\theta)$ is linear in $\mu$, the dual gradient is exactly $g_t$ from \eqref{eq:rollout_dual_update}.
Moreover, because $0\le a(n)\le a_{\max}$ and $\bar q\in\Delta_B$, we have $|g_t|\le a_{\max}$.
The standard regret bound for projected online gradient ascent on an interval gives, for any fixed $\mu\in[0,\mu_{\max}]$,
\begin{equation}
\label{eq:rollout_dual_regret}
\sum_{t=1}^T L_{\mathrm{roll}}(p_t,\mu;\theta)
-
\sum_{t=1}^T L_{\mathrm{roll}}(p_t,\mu_t;\theta)
\le
\frac{\mu_{\max}^2}{2\eta_\mu}
+\frac{\eta_\mu}{2}\,T\,a_{\max}^2.
\end{equation}

\paragraph{Step 3: combine regrets and average.}
Adding \eqref{eq:rollout_primal_regret_total} and \eqref{eq:rollout_dual_regret} and dividing by $T$ yields, for all $p$ and $\mu$,
\begin{equation}
\label{eq:rollout_gap_time_average}
\frac{1}{T}\sum_{t=1}^T L_{\mathrm{roll}}(p_t,\mu;\theta)
-
\frac{1}{T}\sum_{t=1}^T L_{\mathrm{roll}}(p,\mu_t;\theta)
\le
\mathrm{Gap}_T.
\end{equation}

Because $L_{\mathrm{roll}}(\cdot,\mu;\theta)$ is convex in $p$ and $L_{\mathrm{roll}}(p,\cdot;\theta)$ is linear in $\mu$,
Jensen's inequality gives
\[
L_{\mathrm{roll}}(\bar p,\mu;\theta)
\le
\frac{1}{T}\sum_{t=1}^T L_{\mathrm{roll}}(p_t,\mu;\theta),
\qquad
\frac{1}{T}\sum_{t=1}^T L_{\mathrm{roll}}(p,\mu_t;\theta)
=
L_{\mathrm{roll}}(p,\bar\mu;\theta).
\]
Substitute into \eqref{eq:rollout_gap_time_average} to obtain \eqref{eq:rollout_saddle_gap}.

\paragraph{Step 4: deduce objective and budget bounds.}
Let $p^\star$ be feasible for \eqref{eq:rollout_constrained_problem}.
For any $\mu\in[0,\mu_{\max}]$, feasibility implies $L_{\mathrm{roll}}(p^\star,\mu;\theta)\le L_{\mathrm{roll}}(p^\star,0;\theta)$.
Thus
\[
\min_{p} L_{\mathrm{roll}}(p,\bar\mu;\theta)
\le
L_{\mathrm{roll}}(p^\star,\bar\mu;\theta)
\le
L_{\mathrm{roll}}(p^\star,0;\theta).
\]
Combining with \eqref{eq:rollout_saddle_gap} yields
\[
\max_{\mu\in[0,\mu_{\max}]} L_{\mathrm{roll}}(\bar p,\mu;\theta)
\le
L_{\mathrm{roll}}(p^\star,0;\theta)+\mathrm{Gap}_T.
\]
Finally, observe that
\[
\max_{\mu\in[0,\mu_{\max}]}\;L_{\mathrm{roll}}(\bar p,\mu;\theta)
=
\sum_{b=1}^B \bar q_b\,\mathbb{E}_{n\sim \bar p_b}[V_b(n;\theta)]
\;+\;
\mu_{\max}\Bigg[\sum_{b=1}^B \bar q_b\,\mathbb{E}_{n\sim \bar p_b}[a(n)]-\bar n\Bigg]_+,
\]
which immediately implies \eqref{eq:rollout_obj_gap} and \eqref{eq:rollout_budget_gap}.
\end{proof}

\begin{remark}[Bandit feedback and EXP3-style updates]
\label{rem:rollout_bandit}
Our implementation uses EXP3P-style updates because Rollout-GDRO only observes the variance proxy corresponding to the chosen arm.
The full-information analysis above can be extended to the bandit setting by replacing $\ell_{t,b}$ with an unbiased importance-weighted estimator and
adding the usual $\sqrt{K}$ factor in the regret term.
We omit the (standard) bandit algebra here, since the qualitative conclusion remains the same: the rollout controller is a no-regret player in a budgeted Lagrangian game.
\end{remark}

\begin{remark}[Decoupled analysis and possible coupling in the full system]
\label{rem:coupling_prompt_rollout}
See the Limitations and Future Work section for further discussion of this coupled setting and open problems it raises.
In this paper we analyze and evaluate \emph{Prompt-GDRO} and \emph{Rollout-GDRO} as \emph{separate} controllers.
Accordingly, in the Rollout-GDRO analysis we treat the group fractions $\bar q\in\Delta_B$ as \emph{exogenous} (a property of the current
training data pipeline and grouping rule), and we study how the rollout allocator responds to group-dependent variance proxies under the
mean-budget constraint $\sum_{b=1}^B \bar q_b\,n_b=\bar n$.
If both controllers are enabled simultaneously, then $\bar q$ becomes endogenous to the Prompt-GDRO sampler and the training loop induces a
coupled multi-time-scale game between the prompt sampler, the rollout allocator, and the learner.
Characterizing stability and convergence of this coupled regime is left to future work.
\end{remark}
\begin{remark}[Practical proxies for $v_b(\theta)$ in GRPO and connection to our implementation]
\label{rem:vb_proxy}
The quantity $v_b(\theta)$ in Lemma~\ref{lem:batch_var} is a group-dependent second-moment / variance proxy for a \emph{single-prompt}
stochastic gradient contribution.
It enters the idealized variance-aware relaxation \eqref{eq:var_opt_problem} only through the characteristic $1/n_b$ scaling
obtained when using $n_b$ rollouts (Lemma~\ref{lem:rollout_avg}).
In our Rollout-GDRO implementation (Section~\ref{sec:analysis}), each bin $b$ selects a discrete rollout arm
$n\in\{n_{\min},\dots,n_{\max}\}$ using a GDRO-EXP3P bandit update driven by the augmented arm loss
$\mathcal{L}_b(n)=\widehat{L}_b(\theta;n)+\mu\,(n-\bar n)$ (Eq.~\eqref{eq:rollout_arm_loss}), while the dual variable $\mu$ is updated by
dual ascent to enforce compute neutrality under the mean-budget constraint (Eq.~\eqref{eq:budget_constraint}).
Operationally, $v_b(\theta)$ can be estimated from the same rollouts used to compute $\widehat{L}_b(\theta;n)$.
For a prompt $x$ with $n$ rollouts $\{y_j\}_{j=1}^n$, let $g_j(x;\theta)$ denote the per-rollout GRPO policy-gradient contribution.
A natural within-prompt proxy is the sample variance (biased vs.\ unbiased only changes constants)
\[
\widehat{\mathrm{Var}}(g\mid x)
\;\triangleq\;
\frac{1}{n-1}\sum_{j=1}^n \big\|g_j(x;\theta)-\bar g(x;\theta)\big\|_2^2,
\qquad
\bar g(x;\theta)=\frac{1}{n}\sum_{j=1}^n g_j(x;\theta),
\]
which can be aggregated over prompts in bin $b$ to form a bin-level proxy $\hat v_b(\theta)$.
In practice we use a monotone scalar surrogate of this signal (e.g., the empirical variance of reward or advantage across rollouts within each bin)
to guide arm selection, consistent with the variance-reduction viewpoint developed in \Cref{sec:theory_variance_proxy}--\Cref{sec:theory_primal_dual}.
\end{remark}

\end{document}